%% file: main.tex
\documentclass{article}

\PassOptionsToPackage{table}{xcolor}

\usepackage{microtype}
\usepackage{graphicx}
\usepackage{subcaption}
\usepackage{booktabs} 

\usepackage{hyperref}
\usepackage{fontawesome5}
\usepackage{twemojis}

\usepackage{rotating}
\usepackage{multirow}
\usepackage{booktabs}

\usepackage[table]{xcolor}
\definecolor{lightgraycol}{gray}{0.92}

\usepackage[accepted]{icml2026}

\usepackage{amsmath}
\usepackage{amssymb}
\usepackage{mathtools}
\usepackage{amsthm}
\usepackage{dsfont}
\usepackage{enumitem}
\usepackage{bm}

\usepackage{pifont}
\newcommand{\cmark}{\ding{51}}
\newcommand{\xmark}{\ding{55}}

\usepackage{algorithmicx}
\usepackage{algpseudocode}

\usepackage[capitalize,noabbrev]{cleveref}

\theoremstyle{plain}
\newtheorem{theorem}{Theorem}[section]
\newtheorem{proposition}[theorem]{Proposition}

\theoremstyle{definition}
\newtheorem{definition}[theorem]{Definition}

\theoremstyle{remark}

\usepackage{cancel}

\definecolor{brown1}{HTML}{934B43} 
\definecolor{red1}{HTML}{D76364}   
\definecolor{red_dot}{HTML}{C03830}  
\definecolor{winered}{RGB}{114,47,55}

\definecolor{yellow1}{HTML}{F1D77E} 
\definecolor{green1}{HTML}{156434}  
\definecolor{green_dot}{HTML}{5AAA46}  

\definecolor{purple1}{HTML}{682487} 
\definecolor{blue1}{HTML}{3E4F94}   
\definecolor{blue2}{HTML}{9DC3E7}   

\definecolor{purple_arrow}{HTML}{9B5C97}   
\definecolor{blue_arrow}{HTML}{317EC2}   

\usepackage[normalem]{ulem}

\usepackage[textsize=tiny]{todonotes}

\icmltitlerunning{Conflict-Aware Additive Guidance for Flow Models under Compositional Rewards}

\begin{document}

\twocolumn[
\icmltitle{Conflict-Aware Additive Guidance for Flow Models\texorpdfstring{\\}{ }under Compositional Rewards}





\begin{icmlauthorlist}
\icmlauthor{Xuehui Yu}{nus_ssi}
\icmlauthor{Fucheng Cai}{hit}
\icmlauthor{Meiyi Wang}{nus}
\icmlauthor{Xiaopeng Fan}{hit}
\icmlauthor{Harold Soh}{nus_ssi,nus}
\end{icmlauthorlist}

\icmlaffiliation{nus_ssi}{Smart Systems Institute, National University of Singapore, Singapore}
\icmlaffiliation{hit}{Faculty of Computing, Harbin Institute of Technology, Harbin, China}
\icmlaffiliation{nus}{School of Computing, National University of Singapore, Singapore}

\icmlcorrespondingauthor{Xuehui Yu}{yuxuehui@nus.edu.sg}

\icmlkeywords{inference-time alignment, flow matching, generative model, controlled generation}

\vskip 0.3in
]



\printAffiliationsAndNotice{ }  


\begin{abstract}
Inference-time guided sampling steers state-of-the-art diffusion and flow models without fine-tuning by interpreting the generation process as a controllable trajectory. This provides a simple and flexible way to inject external constraints (e.g., cost functions or pre-trained verifiers) for controlled generation. However, existing methods often fail when composing multiple constraints simultaneously, which leads to deviations from the true data manifold. 
In this work, we identify root causes of this off-manifold drift and find that the approximation error scales severely with gradient misalignment. Building on these findings, we propose Conflict-Aware Additive Guidance ($g^\text{car}$), a lightweight and learnable method, which actively rectifies off-manifold drift by dynamically detecting and resolving gradient conflicts.
We validate $g^\text{car}$ across diverse domains, ranging from synthetic datasets and image editing to generative decision-making for planning and control. Our results demonstrate that $g^\text{car}$ effectively rectifies off-manifold drift, surpassing baselines in generation fidelity while using light compute. Code is available at \url{github.com/yuxuehui/CAR-guidance}.
\end{abstract}

\input{01_intro}

\input{02_preliminaries}

\input{06_related_works}

\input{04_method_approximation_error}

\input{04_method_car_guidance}

\input{05_experiment}

\input{07_conclusion}

\clearpage
\section*{Acknowledgment}
This research is supported by the RIE2025 Industry Alignment Fund -- Industry Collaboration Projects (IAF-ICP) (Grant No.\ I2501E0041), administered by A*STAR, as well as supported by Schaeffler (Singapore) PTE.\ LTD.\ and NTU Singapore through Schaeffler-NTU Corporate Lab: Intelligent Mechatronics Hub.

\section*{Impact Statement}
This work improves the reliability of inference-time guidance for generative models under multiple, potentially conflicting objectives. 
By identifying gradient misalignment as a key source of off-manifold drift and proposing a lightweight correction mechanism, our method enables more faithful and stable generation without retraining large pretrained models. 
These advances support safer and more robust deployment of generative models in applications such as planning, control, and interactive content generation, where adherence to heterogeneous constraints is critical.

\bibliography{reference}
\bibliographystyle{icml2026}

\newpage
\appendix
\onecolumn
\input{10_appendix_path_measure}
\clearpage
\input{10_appendix_fitted_value}
\input{10_appendix_exp}

\end{document}

%% file: 01_intro.tex
\section{Introduction}

\begin{figure}
\centerline{\includegraphics[width=\columnwidth]{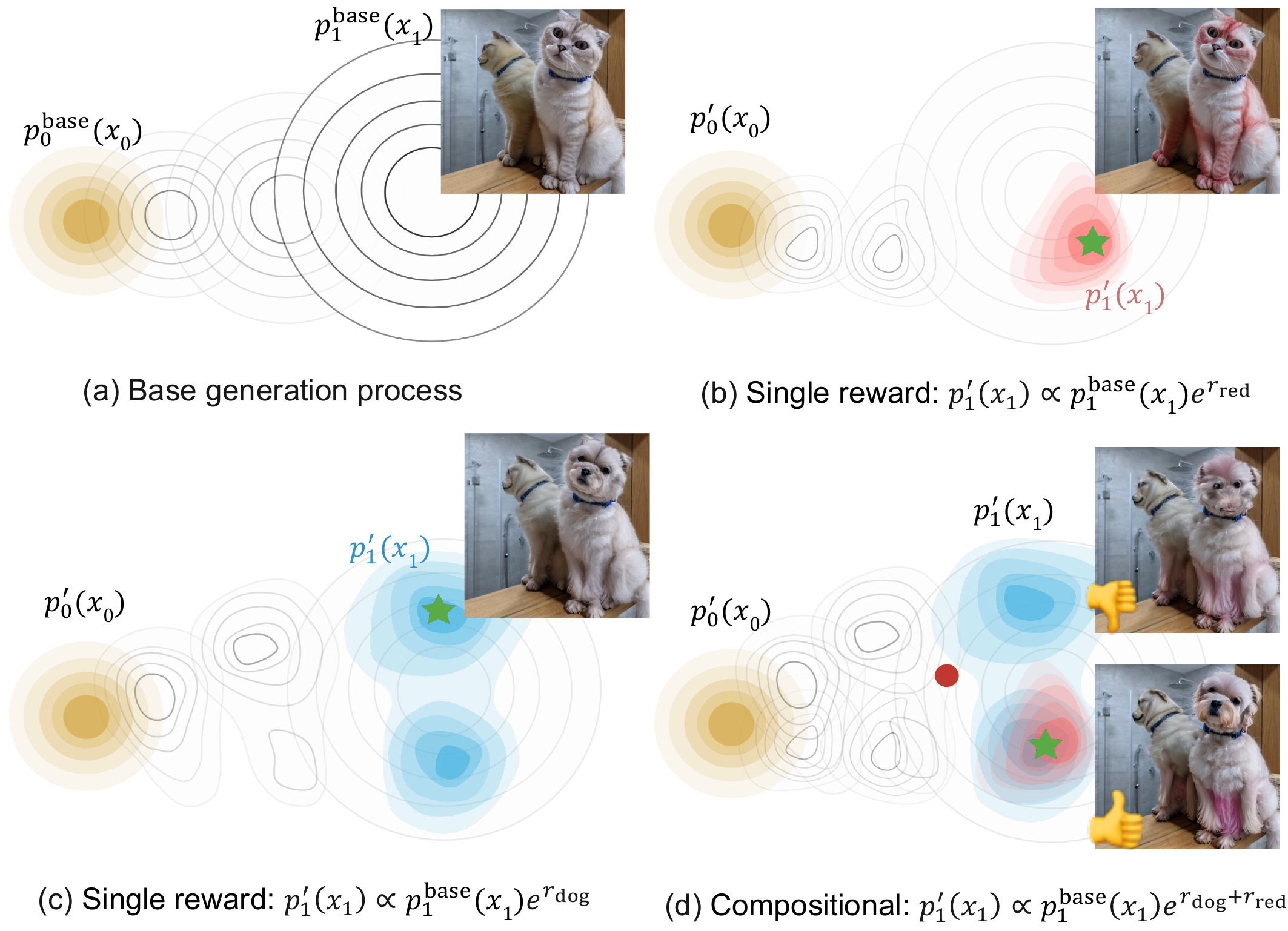}}
\caption{
At inference time, our goal is to sample from the tilted target $p'_1(x_1) \propto p_1^\text{base}(x_1) e^{r(x_1)}$. (b-c) A single reward reweights the distribution towards specific attributes (``red'' or ``dog''). (d) Under compositional rewards (``red'' and ``dog''), the ideal samples lie at the intersection of high-reward regions (\textcolor{green_dot}{$\star$}, \twemoji{thumbs up}); however, existing methods often suffer from off-manifold drift (i.e., the distorted image, \textcolor{red_dot}{$\bullet$}, \twemoji{thumbs down}).}
\label{fig:intor_1}
\end{figure}

\begin{figure*}[t]
\centerline{\includegraphics[width=\textwidth]{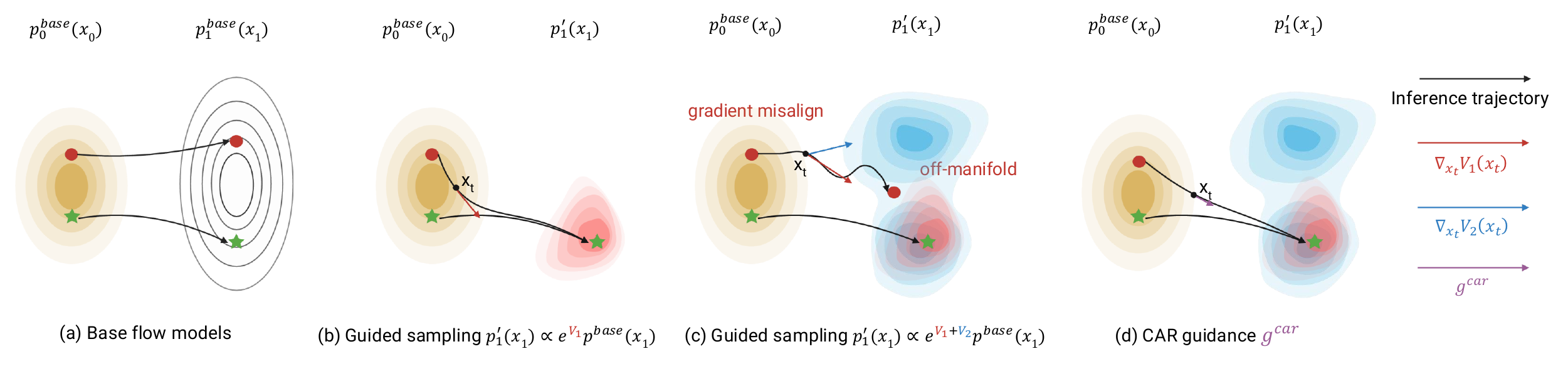}}
\caption{
    (a) The base sampling trajectories.
    (b) Guided sampling adds guidance to inference trajectories and with a fixed source $p^{\text{base}}(x_0)$; this forces the trajectories to curve significantly to satisfy the constraint, resulting in unnecessarily long and high-curvature paths.
    (c) Gradient misalignment (between \textcolor{red_dot}{$\boldsymbol{\to}$} and \textcolor{blue_arrow}{$\boldsymbol{\to}$}) aggravates local curvature, yielding incorrect and unstable guidance that pushes the trajectory off-manifold (\textcolor{red_dot}{$\bullet$}) into low-density regions (i.e., the ``vanishing energy'' traps visualised in Figure \ref{fig:exp_sys_result}).
    (d) Under gradient conflict, $g^{\text{car}}$ rectifies the misaligned gradient via learned guidance (\textcolor{purple_arrow}{$\boldsymbol{\to}$}) that points directly to the target $x_1$, recovering straight and on-manifold trajectories.
}
\label{fig:intor_2}
\end{figure*}

Continuous-time flow models, such as Rectified Flow \citep{liu2023flow}, Flow Matching \citep{lipman2023flow, tong2023improving}, and Stochastic Interpolants \citep{albergo2023stochastic}, 
have emerged as a simple yet highly effective generative modeling paradigm. Through data-driven training at scale, these models acquire a robust generative prior capable of generalizing across a wide spectrum of applications. 

In this work, we study the problem of steering powerful generative priors to satisfy multiple, potentially competing objectives simultaneously, a setting commonly referred to as the \textbf{compositional reward problem}~\citep{du2024compositional}. 
This challenge is central to the real-world deployment of large-scale generative flow models, where inference-time samples are required to satisfy a diverse and often heterogeneous set of runtime constraints. For example, in generative decision-making, generated trajectories must respect heterogeneous requirements, including safety constraints~\citep{eiras2021twostage}, trajectory smoothness~\citep{urain2023se3}, and dynamic consistency with learned world models~\citep{du2025dynaguide}. 
Similarly, in text-guided image manipulation, one often seeks to exploit large vision--language models such as CLIP~\citep{radford2021clip} to modify images according to natural language prompts~\citep{yu2023freedom}. 

Inference-time alignment methods~\citep{lipman2023flow} address these challenges by encoding task requirements as reward functions and steering the sampling process directly at inference time, without retraining or fine-tuning the underlying generative prior. 
Among these, inference-time guided sampling (or guidance) offers a lightweight and flexible interface for adapting pretrained generative models to complex and heterogeneous constraints by directly injecting reward signals during generation, which has been successfully applied to enforce a wide range of runtime constraints~\citep{lipman2023flow, pokle2024training}. In particular, their compute-efficient approximate forms even enable control over
objectives unseen during training through heuristic guidance
terms~\citep{feng2025guidance}.

However, these approximate guidance methods often push samples into low-density regions where the pretrained vector field is poorly calibrated, causing systematic deviations from the true data distribution. 
This failure mode, commonly referred to as \textbf{off-manifold drift}, becomes particularly severe in compositional reward settings (Figure~\ref{fig:intor_1}), where competing objectives induce conflicting gradients that collectively drive samples away from the data manifold. 
While exact guidance methods~\citep{feng2025guidance} can mitigate this issue and achieve high-fidelity alignment, they are typically computationally expensive and lack the flexibility required to efficiently adapt across diverse reward functions.

In this paper, we develop a theoretical analysis that establishes an upper bound on the approximation error of guided sampling, decomposing it into three terms: \emph{coupling shift}, \emph{gradient misalignment}, and \emph{localized approximation} error.
Our analysis reveals that, in compositional reward settings, 
\textbf{approximation error grows sharply with gradient misalignment
$\bm{(1 - \cos\phi)}$ and the number of
reward functions $\bm{G}$}, where $\phi$ denotes the average angular
divergence between guidance channels (Figure~\ref{fig:intor_2}(c); see Section~\ref{sec:systematic_error}).

Motivated by this insight, we propose \textbf{C}onflict-\textbf{A}wa\textbf{R}e Additive Guidance (\textbf{CAR guidance}, denoted $g^{\text{car}}$), a lightweight guided sampling strategy that actively mitigates off-manifold drift by detecting and correcting gradient misalignment. 
CAR guidance introduces a conflict-aware gating mechanism that selectively activates learnable corrections in regions of significant gradient conflict.

We empirically evaluate $g^{\text{car}}$ across a broad range of domains, including 2D synthetic benchmarks, pixel-space image editing, and generative decision-making tasks spanning state-based planning and 3D point-cloud manipulation.
Across all settings, $g^{\text{car}}$ consistently outperforms state-of-the-art baselines in challenging compositional reward settings, achieving stronger alignment with on-the-fly constraints while preserving high sample fidelity.
$g^{\text{car}}$ improves identity preservation by $25.4\%$ in image editing, and increases planning success rates by $38.75\%$.
On robot manipulation tasks, where naïve inference-time guidance is particularly prone to drifting off-manifold and producing out-of-distribution trajectories, $g^{\text{car}}$ cuts violation rates by $78\%$ and lifts success from $9\%$ to $61\%$, enabling reactive obstacle avoidance.
Code, data, and pretrained checkpoints are available at \url{github.com/yuxuehui/CAR-guidance}.

\paragraph{Conflict of Interest Disclosure.}
The authors declare no financial conflicts of interest related to this work.

%% file: 02_preliminaries.tex
\section{Preliminaries}

\subsection{Flow matching and conditional flow matching}

Flow Matching (FM) \citep{lipman2023flow} is a simulation-free method for training continuous normalizing flows, which learns a time-dependent vector field $v_t(x_t, t)$ to transport a source distribution $p_0$ to a target distribution $p_1$. 
A vector field $v_t$ is said to generate a probability density path $p_t$ if its flow $\phi_t:[0,1]:\mathbb{R}^d\to\mathbb{R}^d$ satisfies the continuity equation $\partial_t p_t + \nabla \cdot (p_t v_t) = 0$ with boundary conditions matching the source and target distributions at $t=0$ and $t=1$, respectively.
Given a target probability density path $p_t$ and a corresponding target vector field $v_t$, which generates $p_t$, the FM objective is:
\begin{equation}
    \mathbb{E}_{t \sim \mathcal{U}(0,1),\, x_t \sim p_t}
    \big[ \| v_\theta(x_t, t) - v_t(x_t, t) \|^2 \big],
    \label{eq:fm_loss}
\end{equation}
where $\theta$ denotes learnable parameters of the vector field $v_\theta$.

To make training tractable, Conditional Flow Matching (CFM) \citep{lipman2023flow} introduces a latent variable $z$ distributed according to a coupling measure $\pi(z)$, and defines conditional probability paths $p_t(x_t | z)$ together with corresponding conditional vector fields $u_t(x_t, t | z)$. The CFM objective is given by:
\begin{align}
    & \mathbb{E}_{\substack{t \sim \mathcal{U}(0,1) \\ z \sim \pi(z)}}
    \mathbb{E}_{x_t \sim p_t(\cdot | z)}
    \big[ \| v_\theta(x_t, t) - v_t(x_t, t | z) \|^2 \big].
    \label{eq:cfm_loss}
\end{align}
For arbitrary choices of the latent variable $z$ and coupling measure $\pi(z)$, minimizing the conditional flow matching loss in \eqref{eq:cfm_loss} is equivalent to minimizing the marginal flow matching objective in \eqref{eq:fm_loss}, and yields the optimal marginal vector field $v_t(x_t, t) = \mathbb{E}_{z \sim \pi(z | x_t)}[v_t(x_t, t | z)]$ \citep{tong2024simulation}. In the general formulation, the latent variable is defined as the coupling pair $z = (x_0, x_1)$, and the coupling measure $\pi(z) = \pi(x_0, x_1)$ represents the joint coupling between the source and target distributions.

\subsection{Guided sampling} \label{subsec:guided_sampling}

At inference time, our goal is to alter the base vector field $v_t(x_t, t)$, which generates the base target distribution $p_1(x_1)$, into a new vector field $v_t'(x_t, t)$ that generates samples from a reweighted target distribution $p_1'(x_1) \propto p_1(x_1) e^{r(x_1)}$. Here, $r(x_1) = \sum_{j=1}^G r_j(x_1)$ is a composition of measurable reward functions $r_j: \mathbb{R}^d \to \mathbb{R}$ to be maximized.

Inference-time guided sampling steers the generation process by injecting an additive term $g_t(x_t, t)$ into the base vector field \citep{pokle2024training,feng2025guidance}:
\begin{equation}
    v_t'(x_t, t) = v_t(x_t, t) + g_t(x_t, t),
    \label{eq:modified_velocity}
\end{equation}
while still initializing trajectories from the fixed source distribution $p_0$, as illustrated in Figure~\ref{fig:intor_2} (b,c).
Recall that the modified vector field $v'_t(x_t, t)$ preserves the pretrained prior; that is, the conditional vector field $v'_t(x_t, t | z) = v_t(x_t, t | z)$ and the conditional probability path $p'_t(x_t | z) = p_t(x_t | z)$ remain invariant.
Consequently, guided sampling is formally equivalent to reweighting the coupling measure $\pi(z)$ over the latent coupling paths. The modified marginal probability path and vector field are given by:
\begin{align}
     p_t'(x_t) & = \int p_t(x_t | z)\,\pi'(z)\,dz, \label{eq:reweighted_prob}\\
    v_t'(x_t, t) & = \int v_t(x_t, t | z)\,\pi'(z | x_t)\,dz, \label{eq:reweighted_velocity} 
\end{align}
with $\pi'(z) = \frac{1}{\mathcal{Z}} \mathcal{P}(z) \pi(z)e^{r(x_1)}$.
Here, $\mathcal{P}(z) \triangleq \frac{\pi'(x_0 \mid x_1)}{\pi(x_0 \mid x_1)}$ is the coupling ratio, which captures the shift in the coupling measure: from a coupling perspective, constraining the target to the reweighted distribution $p_1'(x_1)$ inherently changes the optimal coupling between source and target. And the coupling shift term $\mathcal{P}(z)$ ensures that the source $p_0(x_0)$ remains unchanged under the guidance operation. $\mathcal{Z} = \iint \pi(x_0, x_1) e^{r(x_1)} \, d x_0 \, d x_1$ is the normalizing constant, and $r(x_1)$ is the reward function evaluated at the terminal state of the path $z$. 

By Equations~\eqref{eq:modified_velocity} and \eqref{eq:reweighted_velocity}, \citet{feng2025guidance} gave a closed-form guidance term:
\begin{align}
     g_t(x_t, t) = \int \left( \frac{\mathcal{P}(z) e^{r(x_1)}}{\mathcal{Z}_t(x_t)} - 1 \right) \, v_t(x_t, t | z)\,\pi(z | x_t)\,dz.
    \label{eq:exact_guidance}
\end{align}
where $\mathcal{Z}_t(x_t) = \int \mathcal{P}(z) e^{r(x_1)} \pi(z \mid x_t) dz$ is the normalizer.

Existing guided sampling methods \citep{pokle2024training,yu2023freedom,Patel_2025_ICCV} typically assume $\mathcal{P}(z) \approx 1$ and approximate the reward function via a first-order Taylor expansion $\hat{x}_1(x_t) \triangleq \mathbb{E}_{z \sim \pi(z \mid x_t)}[x_1]$. 
Then, the approximate guidance term:
\begin{align}
    g^{\text{approx}}_t(x_t, t) \approx \text{Cov}_{\pi(z \mid x_t)} \big( v_t(\cdot \mid z), x_1 \big) \nabla_{\hat{x}_1} r(\hat{x}_1).
    \label{eq:guidance_approx}
\end{align}
In practice, this covariance matrix is often further simplified to a hyperparameter or a time-dependent scalar value.

%% file: 06_related_works.tex
\section{Related works}

Inference-time guided sampling methods aim to estimate the additive guidance $g_t(x_t, t)$, which can be broadly categorized into approximate guidance and exact guidance.

\textbf{Approximate Guidance.} Lots of works use gradient $\nabla_{x_t} r(\hat{x}_1)$ as defined in Equation~\eqref{eq:guidance_approx} as guidance, and implicitly assume the coupling ratio $\mathcal{P}(z) \approx 1$. 
For example, in diffusion models, it is widely adopted by DPS \citep{chung2023diffusion} and LGD \citep{song2023loss}. In flow models, FlowDPS \citep{pokle2024training} applies this approximation to the OT-ODE (guiding trajectories via $\nabla_{x_t} \log p(y|\hat{x}_1)$), and FlowChef \citep{Patel_2025_ICCV} derives similar guidance under assumptions of a locally linear vector field and a constant Jacobian. Both approaches converge to the form $\nabla_{x_t} r(\hat{x}_1)$, which we refer to as $g^\text{cov-G}$ following the notation of \citet{feng2025guidance}. 
These approximate methods are simple, compute-light, and flexible. Particularly for rectified flows \citep{liu2023flow}, straight paths enable efficient terminal prediction via a single Euler step.
However, these methods often result in off-manifold drift.

\textbf{Exact Guidance.}
Exact guidance methods explicitly learn the ground-truth guidance, falling into two sub-categories.
\emph{Training-based} methods regress to the ground-truth guidance (Equation~\eqref{eq:exact_guidance}). For example, Guidance Matching (GM)~\citep{feng2025guidance} directly learns the guidance by regressing to $v_t(x_t|x_1) = (x_1 - x_t)/(1-t)$, where $x_1$ is sampled from the unnormalized target distribution $p'_1(x_1)$. 
Although GM achieves high fidelity, it is often impractical: ground-truth samples $x_1$ are inaccessible in tasks such as image editing, and training a guidance network per reward function is computationally expensive and introduces additional confounding errors from a learned network.
\emph{Sample-based, training-free} methods instead estimate $g_t(x_t)$ via SDE or ODE sampling~\citep{feng2025guidance,holderrieth2026glass}. The recent GLASS-FKS~\citep{holderrieth2026glass}, which steers GLASS flows via Feynman-Kac sampling, improves sampling efficiency over prior particle methods, but still still inherits the high variance
and heavy per-sample compute cost characteristic of this family.

Positioned between these two categories, we aim to improve compute-light approximate guidance by adding a fraction of extra compute to correct the approximation error (see Appendix~\ref{app:extended_related_work} for further discussion).

%% file: 04_method_approximation_error.tex
\section{Approximation errors of guided sampling} \label{sec:systematic_error}
In this section, we analyze approximation errors within the framework of measure transport.

Guided sampling targets a tilted terminal distribution $p^\star_1(x_1) \propto p_1(x_1)e^{r(x_1)}$. Consistent with this objective, the guided marginal density at any intermediate time $t$ can be expressed as $p^\star_t(x_t) \propto p_t(x_t)e^{V(x_t, t)}$, where the value function $V(x_t, t)$ is defined as the log-expected exponentiated future reward:
\begin{align} \label{eq:dual_value_function}
 V(x_t, t)
 & \triangleq \log \mathbb{E}_{x_1\sim p(\cdot\mid x_t)}\!\left[e^{r(x_1)}\right] \\
 & = \log \mathbb{E}_{z \sim \pi(\cdot \mid x_t)} \left[ e^{R(z)} \right],
\end{align}
where the second equality holds due to the deterministic mapping from latent space to data space, i.e., $x_1 = \Psi_1(z)$ (where $\Psi_t$ is the flow map at time $t$), inherent in flow models. 
This enables evaluating the reward directly on the latent variable $z$ via $R(z) \triangleq r(\Psi_1(z))$. 

The optimal coupling measure is defined as:
\begin{equation}
\pi^\star(z) =
\frac{1}{\mathcal{Z}}
\textcolor{blue1}{\mathcal{P}(z)}
\textcolor{purple1}{e^{R(z)}}
\pi(z),
\label{eq:triad_decomposition}
\end{equation}
where $\pi(z)$ is the base prior, \textcolor{purple1}{$e^{R(z)}$ is the trajectory reward}, and \textcolor{blue1}{$\mathcal{P}(z)$ is the coupling shift ratio}. From exact to the approximate guidance, we follow the approximation process
\begin{equation}
\pi^\star(z)
\xrightarrow[\mathcal{P}(z) \approx 1]{\text{Coupling-Invariant}}
\pi^{\mathrm{CI}}(z)
\xrightarrow[\hat{V}(x_t)]{\text{Localize}}
\pi^{\text{approx}}(z),
\end{equation}
with a fixed source $p_0$. We can then attribute the approximation error to two approximation steps:
First, by defining the shift ratio as $\mathcal{P}(z) \triangleq \frac{d\pi^\star}{d\pi^{\mathrm{CI}}}(z)$ and assuming $\mathcal{P}(z) \equiv 1$, we have $\pi^{\mathrm{CI}}(z) \propto e^{R(z)}\pi(z)$ which ignores the coupling shift in the optimal transport, i.e., optimal coupling is invariant.
Second, the intractable value gradient $\nabla_{x_t} V(x_t)$ is locally approximated via a two-step proxy:
(i) In diffusion models, a common simplification is to move the expectation inside the log-exponential via Jensen's inequality, yielding $\nabla \log \mathbb{E}_{z|x_t} [e^{R(z)}] \approx \nabla \mathbb{E}_{z|x_t} [R(z)]$. (ii) We approximate the expected reward using a first-order Taylor expansion around the mean, $\mathbb{E}[R(z)] \approx r(\hat{x}_1)$, with $\hat{x}_1 = \mathbb{E}[x_1|x_t]$.
Then, we have a surrogate $\hat{V}(x_t) \triangleq \sum_{j=1}^G r_j(\hat{x}_1)$, which drives the guidance through the aggregated gradients $\sum_{j=1}^G \nabla r_j(\hat{x}_1)$.

To set the stage for our error analysis, we first introduce the gradient misalignment between rewards.

\begin{definition}[Gradient Misalignment]
\label{def:gradient_misalignment}
Let $g_k(x_t) \triangleq \nabla_{x_t} r_k(\hat{x}_1)$ denote the guidance contributed by the $k$-th reward, evaluated at the predicted terminal state $\hat{x}_1 = \mathbb{E}[x_1|x_t]$.
Let $\phi_{jk}$ be the angle between $g_j$ and $g_k$.
The gradients are defined as \emph{misaligned} when they are not perfectly collinear, i.e., $1 - \cos \phi_{jk} > 0$.
\end{definition}

For simplicity, we define the average pairwise cosine similarity as $\cos \phi \;:=\; \frac{2}{G(G-1)} \sum_{j<k} \cos \phi_{jk}$. Broadly, gradient misalignment is quantified by $1 - \cos \phi > 0$.

\begin{theorem}[Upper Bound of Approximation Error]
\label{thm:systematic_error_main}
The approximation error $\mathcal{E} \triangleq W_2^2(p_1^\star, \hat{p}_1)$ between the exact and realized target distributions admits the three-term decomposition:
\begin{align}
\mathcal{E}
\;\lesssim\;
&\underbrace{\textcolor{blue1}{C_{\mathrm{CI}}\!\int_0^1\!\mathbb{E}\!\left[\mathbb{E}_{\pi^{\mathrm{CI}}}\!\big[(\mathcal{P}(z)-1)^2\big]\right]dt}}_{\text{(A) coupling shift error}}
\nonumber\\[2mm]
&\;\;+\;\underbrace{\textcolor{purple1}{G(G-1)\,\mu^2\!\int_0^1\!\mathbb{E}\!\left[1-\cos\phi_t(x_t)\right]dt}}_{\text{(B) gradient misalignment error}}
\nonumber\\[2mm]
&\;\;+\;\underbrace{G\!\int_0^1\!\mathbb{E}\!\left[\left(\frac{\lambda_h\,\sigma_1\,d}{e^{r(\hat{x}_1)}}\right)^{\!2}\!(C_1+C_2)\right]dt}_{\text{(C) localized approximation error \citep{feng2025guidance}}}
\label{eq:unified_systematic_error_main}
\end{align}
where $\mu := \|g_j^{\mathrm{CI}}\|$ is the per-reward gradient norm, $C_{\mathrm{CI}}$ only depends on base velocity field, $\lambda_h$ is the spectral norm of the Hessian of $e^r$, $G$ is the number of rewards, $(1-\cos\phi_t)$ is the gradient misalignment, $\sigma_1(t)$ measures posterior uncertainty, and $C_1, C_2$ are constants depending on the base velocity field. We provide an intuitive interpretation in Appendix~\ref{sec:geometric} and a more detailed proof in Appendix~\ref{app:systematic_error}.
\end{theorem}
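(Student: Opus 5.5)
The plan is to bound the terminal Wasserstein error by the integrated squared discrepancy between the exact guided velocity and the realized one, and then split that discrepancy along the approximation chain $\pi^\star \to \pi^{\mathrm{CI}} \to \pi^{\text{approx}}$. Concretely, both $p_1^\star$ and $\hat p_1$ are pushforwards of the same fixed source $p_0$ under the ODEs driven by $v_t+g^\star_t$ and $v_t+\hat g_t$. A synchronous coupling (same $x_0$) and a Grönwall argument with the Lipschitz constant $L$ of the velocity field give
\begin{equation*}
W_2^2(p_1^\star,\hat p_1)\;\lesssim\; e^{2L}\int_0^1 \mathbb{E}_{x_t}\big\|g^\star_t(x_t)-\hat g_t(x_t)\big\|^2\,dt .
\end{equation*}
The constant $e^{2L}$ is absorbed into $\lesssim$. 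I then insert the intermediate guidance $g^{\mathrm{CI}}_t$ induced by $\pi^{\mathrm{CI}}$. I also insert the ``ideal aligned'' guidance $\bar g_t$, defined as the guidance obtained if all per-reward channels were collinear. Applying $\|a+b+c\|^2\le 3(\|a\|^2+\|b\|^2+\|c\|^2)$ produces three integrands, which become terms (A), (B), (C).

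\emph{Term (A).} By \eqref{eq:exact_guidance}, $g^\star_t$ and $g^{\mathrm{CI}}_t$ differ only through the weight $\mathcal{P}(z)$ inside a posterior expectation of $v_t(\cdot\mid z)$, with respective normalizers. I write
\begin{equation*}
g^\star_t-g^{\mathrm{CI}}_t=\mathbb{E}_{\pi^{\mathrm{CI}}(z\mid x_t)}\!\left[\left(\tfrac{\mathcal{P}(z)}{\mathbb{E}_{\pi^{\mathrm{CI}}}\mathcal{P}}-1\right)v_t(x_t,t\mid z)\right].
\end{equation*}
Cauchy--Schwarz then bounds this by $\mathbb{E}_{\pi^{\mathrm{CI}}}[(\mathcal{P}-1)^2]\cdot\mathbb{E}\|v_t(\cdot\mid z)\|^2$, up to the normalizer correction, which is itself controlled by the same variance. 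The second moment of the conditional velocity depends only on the base field, so it defines $C_{\mathrm{CI}}$.

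\emph{Term (C).} This is the localized (Jensen plus first-order Taylor) error for each reward separately. I would invoke the bound of \citet{feng2025guidance} per channel. That bound controls $\|\nabla\log\mathbb{E}[e^{r_j}]-\nabla r_j(\hat x_1)\|$ via the Hessian norm $\lambda_h$ of $e^{r}$, the posterior spread $\sigma_1$, dimension $d$, and the factor $1/e^{r(\hat x_1)}$ from differentiating the log. Summing $G$ channels with Cauchy--Schwarz gives the factor $G$.

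\emph{Term (B): the main obstacle.} I need to show that the composition step, which replaces $\nabla\log\mathbb{E}[e^{\sum_j R_j}]$ by $\sum_j g_j$, incurs an error governed by misalignment. My plan is to compare $\|\sum_j g_j\|^2$ with the aligned reference $\|\sum_j g_j\|^2_{\text{aligned}}=G^2\mu^2$. Since
\begin{equation*}
\Big\|\sum_j g_j\Big\|^2=G\mu^2+2\mu^2\sum_{j<k}\cos\phi_{jk}=G\mu^2+G(G-1)\mu^2\cos\phi,
\end{equation*}
the deficit is exactly $G(G-1)\mu^2(1-\cos\phi)$. Interpreting the aligned composite as the one consistent with a single tilted potential, where the log-sum-exp of a sum factorizes along a common direction, the residual cross terms equal this deficit. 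This step needs the equal-norm assumption $\|g^{\mathrm{CI}}_j\|=\mu$ and a justification that the cross-covariance terms of the joint tilt are bounded by the pairwise angular gap. I expect making that identification rigorous, rather than heuristic, to be the hardest part. Failing a clean argument, I would state it as a bound on the cross terms $\sum_{j\ne k}\langle g_j,g_k\rangle$ relative to collinear channels. Integrating the three pieces over $t$ yields \eqref{eq:unified_systematic_error_main}.
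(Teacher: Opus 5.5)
Your route is the paper's route: a stability bound turning $W_2^2$ into $\int_0^1\mathbb{E}\|g^\star_t-\hat g_t\|^2dt$, then a three-term Young split along $g^\star_t\to g^{\mathrm{CI}}_t\to\sum_j g_j^{\mathrm{CI}}\to g^{\mathrm{approx}}_t$. From there it uses Cauchy--Schwarz for (A), Feng et al.'s per-channel bound for (C), and the energy deficit $\Delta E$ for (B).

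Two of your steps are more careful than the paper's:
\begin{itemize}
\item The paper states the stability inequality with constant $1$, citing Villani. Your synchronous coupling plus Gr\"onwall makes the hidden $e^{2L}$ explicit, at the price of assuming a Lipschitz realized field.
\item The paper writes $\mathcal{P}(z)-1$ without renormalizing $\pi^\star(\cdot\mid x_t)$. Your $\mathcal{P}/\mathbb{E}_{\pi^{\mathrm{CI}}}\mathcal{P}$ handles that renormalization.
\end{itemize}
One bookkeeping point: for (C) to be the per-channel localized error, the third anchor of the chain must be $\sum_j g_j^{\mathrm{CI}}$ itself. Your aligned $\bar g_t$ should not be a separate point in the chain; it is a model for $g_t^{\mathrm{CI}}$.

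The step you flag in (B) is not derived in the paper either. It is imposed, in a footnote of the appendix proof, via two structural assumptions:
\begin{itemize}
\item[(i)] $\|g_t^{\mathrm{CI}}\|^2=(\sum_j\|g_j^{\mathrm{CI}}\|)^2$;
\item[(ii)] $\langle g_t^{\mathrm{CI}},\sum_j g_j^{\mathrm{CI}}\rangle=\|\sum_j g_j^{\mathrm{CI}}\|^2$.
\end{itemize}
Under these, $\|g_t^{\mathrm{CI}}-\sum_j g_j^{\mathrm{CI}}\|^2=\Delta E$ exactly.

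Your ``single tilted potential'' heuristic supplies (i), but it does not close the step. What you computed is a difference of squared norms, $G^2\mu^2-\|\sum_j g_j\|^2$. The Young term, however, is the squared norm of a difference. The two agree only when the residual $g_t^{\mathrm{CI}}-\sum_j g_j^{\mathrm{CI}}$ is orthogonal to $\sum_j g_j^{\mathrm{CI}}$ (Pythagoras), and that is precisely (ii). Without it, $\|a-b\|^2$ can lie anywhere in $[(\|a\|-\|b\|)^2,(\|a\|+\|b\|)^2]$ depending on the angle between $a$ and $b$. Fixing the norms alone therefore does not control term (B), and your claim that ``the residual cross terms equal this deficit'' is unsupported as written. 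If you state (ii) explicitly alongside the equal-norm assumption, your existing computation gives $G(G-1)\mu^2(1-\cos\phi_t)$ verbatim.

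Finally, a caveat in (C) that you share with the paper. Cauchy--Schwarz over $G$ channels gives $\|\sum_j\delta g_j\|^2\le G\sum_j\|\delta g_j\|^2$. With a common per-channel bound, that yields a factor $G^2$, not $G$. The paper writes $G$ as well, so linear scaling needs an extra assumption. Examples would be mutually orthogonal per-channel errors, or reading the bracket as a bound on $\sum_j\|\delta g_j\|^2$ rather than on each term.
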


Theorem \ref{thm:systematic_error_main} provides an error bound and offers a theoretical roadmap, identifying the key design space for optimizing guided sampling:
\begin{enumerate}[noitemsep]
    \item the error scales with the number of reward functions $G$ and gradient misalignment $(1-\cos\phi)$; 
    \item the error is small when the coupling shift is negligible ($\mathcal{P}(z) \approx 1$), which is reasonable in many practical flow matching methods with dependent couplings, such as mini-batch OT-FM \citep{tong2023improving}, but can be problematic in OT-FM \citep{onken2021ot};
    \item the error is small when the reward landscape is smooth, i.e., small $\lambda_h = \|\nabla^2 e^r\|_2$;
    \item the error is small when the predicted endpoint $\hat{x}_1 = \mathbb{E}[x_1|x_t]$ lies in a high-reward region (large $r(\hat{x}_1)$);
    \item the error decreases as $\sigma_1$ shrinks (e.g., $t\!\to\!1$).
\end{enumerate}

%% file: 04_method_car_guidance.tex
\section{Conflict-aware additive guidance}
\label{sec:conflict_aware_guidance}

\begin{figure}[t]
    \centering
    \includegraphics[width=0.9\linewidth]{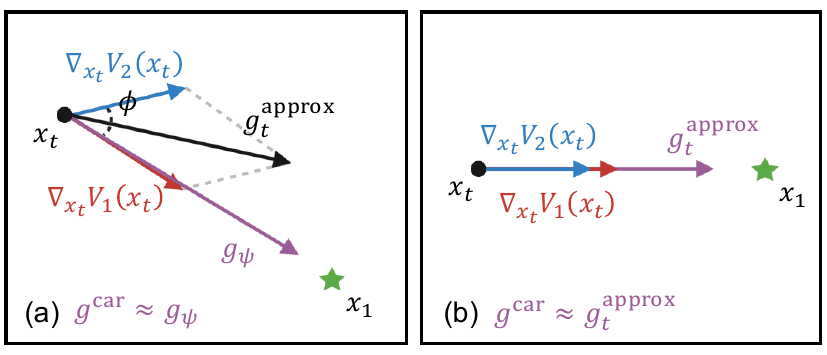}
    \caption{
    (a) When gradients are misaligned (large $\phi$), the approximate guidance $g^{\text{approx}}$ ($\boldsymbol{\to}$), the vector sum of \textcolor{red_dot}{$\boldsymbol{\to}$} and \textcolor{blue_arrow}{$\boldsymbol{\to}$}, points off-manifold; the conflict-aware weight $w_t \approx 1$, so $g^{\text{car}} \approx g_\psi(x_t,t)$ (\textcolor{purple_arrow}{$\boldsymbol{\to}$}) corrects the trajectory toward the true target $x_1$ (\textcolor{green_dot}{$\star$}).
    (b) When gradients align ($\phi \approx 0$), $g^{\text{approx}}$ is already accurate; $w_t \approx 0$, so $g^{\text{car}} \approx g^{\text{approx}}$ (\textcolor{purple_arrow}{$\boldsymbol{\to}$}).}
    \label{fig:conflict_gradient}
\end{figure}

We improve the compute-light approximate guidance by adding extra compute to correct the approximation error:

\begin{definition}[Conflict-Aware Additive Guidance]
\label{def:unified_guidance}
Let $\mathcal{M} = (v_t, p_0, p_1)$ be a base flow model. Given a target distribution $p'_1(x_1) \propto p_1(x_1) e^{r(x_1)}$ with rewards $r(x_1) = \sum_{j=1}^G r_j(x_1)$, the conflict-aware additive guidance ($g^{\text{car}}$) transforms the base velocity $v_t$ into a guided velocity $v'_t$ via:
\begin{align}
    v'_t(x_t,t) &\;=\; v^\text{base}_t(x_t,t) \;+\; g^{\text{car}}(x_t, t), \label{eq:additive_dynamics} \\
    g^{\text{car}}(x_t, t) &\;=\; (1 - w_t) \, g^{\text{approx}} \;+\; w_t \, g_\psi(x_t,t), \label{eq:care_decomposition}
\end{align}
where the guidance is a composition of a fast approximation $g^{\text{approx}}$ and a learnable guidance $g_\psi$, which is controlled by a conflict-aware weight
\begin{equation}
    w_\text{raw}(x_t)  = 1 - \frac{2}{G(G-1)} \sum_{j < k} \frac{\langle g_j, g_k \rangle}{\|g_j\|\|g_k\| + \varepsilon},
    \label{eq:conflict_metrics}
\end{equation}
The raw conflict score $w_{\mathrm{raw}}(x_t) \in [0, 2]$ is then mapped to 
$w(x_t) \in (0, 1)$. The $\varepsilon$ ensures numerical stability. Finally, $g_\psi$ is trained to approximate the exact guidance by data (see Section \ref{sec:value_gradient} for details).
\end{definition}

As illustrated in Figure \ref{fig:conflict_gradient}, in regions of gradient misalignment, our $g^\text{car}$ leverages a learned lightweight guidance $g_\psi$ to correct the systematic errors inherent in approximate methods, thereby effectively rectifying off-manifold drift. This design positions $g^\text{car}$ effectively between approximate and exact approaches, delivering near-exact performance with significantly lighter compute compared to exact guidance.

\subsection{Value gradient} \label{sec:value_gradient}

To bypass the intractability of the closed-form target in Eq.~\eqref{eq:exact_guidance}, we seek to learn the exact guidance $g_\psi$ directly from data.
By viewing the flow generation process as deterministic ODE dynamics, we evaluate $V(x_t, t)$ as the cumulative return of a trajectory induced by the current policy $v'_t$ defined in Eq.~\eqref{eq:additive_dynamics}. Mathematically, $V(x_t, t)$ serves as the fixed point of the Bellman backup operator $\mathcal{T}^{v'}$. Details can be found in Appendix \ref{appx:guidance_fitted_value_evaluation}.

\begin{proposition}[Fitted Value Evaluation]
\label{prop:fitted_value_guidance}
Let $\mathcal{F}$ denote the function class (e.g., neural networks) used to approximate the value function. We collect a dataset of transitions $\mathcal{D} = \{(x, t, r, x', t')\}$ generated under the current guided dynamics, where $t' = t + \Delta t$ and $r$ is the reward. The value function can be estimated empirically by minimizing a least-squares Bellman residual:
\begin{equation}
\hat{V}_{k+1}
=
\arg\min_{V \in \mathcal{F}}
\;
\mathbb{E}_{\mathcal{D}}
\left[
\big(
r + \gamma \hat{V}_k(x', t')
-
V(x, t)
\big)^2
\right].
\label{eq:appx_least_square}
\end{equation}
subject to the boundary condition $\hat{V}_k(x, 1) \equiv r(x)$ for terminal states. Here, $\gamma \in (0, 1]$ is the discount factor, and $\hat{V}_k$ is the target from the previous iteration \footnote{For pure terminal optimization, $r=0$ and $\gamma=1$.}.
Equation~\eqref{eq:appx_least_square} empirically approximates the Bellman backup operator $\mathcal{T}^{v'}$ using finite data and a function class $\mathcal{F}$. Upon convergence, the learnable guidance is the gradient of the estimated value:
\begin{equation}
g(x_t, t) \;\triangleq\; \nabla_x \hat{V}(x_t, t),
\label{eq:guidance_value_function}
\end{equation}
which is a Markovian surrogate for the exact guidance.
\end{proposition}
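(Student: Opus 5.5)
The plan is to read Proposition~\ref{prop:fitted_value_guidance} as three claims and prove them in order. First, the value function $V(x_t,t)$ of Section~\ref{sec:systematic_error} is the fixed point of a Bellman operator $\mathcal{T}^{v'}$ attached to the deterministic guided ODE. Second, the least-squares regression in Eq.~\eqref{eq:appx_least_square} is an empirical projection of that operator. Third, the gradient of the resulting fixed point recovers the exact guidance.

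\textbf{Step 1: the Bellman equation.} Discretize $[0,1]$ with step $\Delta t$ and define
\[
(\mathcal{T}^{v'}V)(x,t) = r(x,t)\,\Delta t + \gamma\, V\bigl(x + v'_t(x,t)\Delta t,\; t+\Delta t\bigr),
\qquad V(x,1)=r(x).
\]
In the pure terminal setting ($r=0$, $\gamma=1$), backward induction from $t=1$ shows that the unique fixed point is $V(x_t,t)=r(\Psi_{1}(x_t))$, i.e.\ the terminal reward transported along the flow map. Because the dynamics are deterministic, the log-expectation in Eq.~\eqref{eq:dual_value_function} collapses to the reward of the single trajectory through $x_t$. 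So the fixed point coincides with $V$ along the realized trajectories, which gives the first claim.

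\textbf{Step 2: contraction and fitted evaluation.} For $\gamma<1$, $\mathcal{T}^{v'}$ is a $\gamma$-contraction in the sup norm. For $\gamma=1$, the finite horizon $1/\Delta t$ makes backward induction terminate, so the iteration converges in at most $1/\Delta t$ steps. Conditional on $(x,t)$ the target $r+\gamma\hat V_k(x',t')$ is deterministic, so the population minimizer of Eq.~\eqref{eq:appx_least_square} over all measurable $V$ is exactly $\mathcal{T}^{v'}\hat V_k$ on the support of $\mathcal{D}$. This is the standard fact that the $L^2$ regression minimizer is the conditional mean. With a restricted class $\mathcal{F}$, the update is the $L^2(\mathcal{D})$ projection $\Pi_{\mathcal{F}}\mathcal{T}^{v'}\hat V_k$. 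I would then invoke the usual fitted value iteration bound: the error is controlled by $\sup_k\|\Pi_{\mathcal{F}}\mathcal{T}\hat V_k-\mathcal{T}\hat V_k\|$ divided by $(1-\gamma)$, or multiplied by the horizon when $\gamma=1$. This justifies the word ``approximates'' in the statement.

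\textbf{Step 3: the gradient as guidance, and the main obstacle.} Using Step 1 and $p^\star_t\propto p_t e^{V}$, I would argue that $\nabla_x V$ is the guidance direction: the exact drift correction is a preconditioned value gradient, with preconditioning $\mathrm{Cov}$ as in Eq.~\eqref{eq:guidance_approx}, absorbed into a scalar as is done in practice. The result is Markovian because $\hat V$ depends only on $(x_t,t)$. The hard part is that $\hat V$ is evaluated under the current policy $v'$, which itself depends on $\nabla\hat V$, so the argument is self-referential. I would treat it as policy evaluation inside a policy-iteration loop and argue only consistency: if $\hat V$ is a fixed point under its own induced dynamics, its gradient is a stationary guidance. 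I would state this as a surrogate rather than an exact equality, consistent with the proposition's wording. A further limitation is that value accuracy in $L^2$ does not control gradient accuracy without smoothness of $\mathcal{F}$, which I would assume, for example via a Lipschitz-gradient network class.
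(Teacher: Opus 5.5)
Your skeleton matches the paper's proof: Bellman consistency under the guided field $v'$, the least-squares problem as an empirical projection of $\mathcal{T}^{v'}$, and $\nabla_x\hat V$ as a Markovian surrogate. Your conditional-mean characterisation of the population minimiser and the finite-horizon argument for $\gamma=1$ are sound additions the paper does not spell out.

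The genuine gap is in Step~1. There you claim that, because the sampler is deterministic, the log-expectation in Eq.~\eqref{eq:dual_value_function} collapses to the reward of the single trajectory through $x_t$, so the backward-induction fixed point coincides with $V$. This does not hold.

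\begin{itemize}
\item The expectation in Eq.~\eqref{eq:dual_value_function} is over the posterior $\pi(z\mid x_t)$ of the conditional probability path. That is what makes $p^\star_t\propto p_t e^{V}$ true.
\item This posterior is not a point mass. $x_1=\Psi_1(z)$ is deterministic in $z$, not in $x_t$.
\item Counterexample: for independent-coupling CFM, conditioning on $x_t$ at $t=0$ pins $z_0=x_0$ and leaves $z_1\sim p_1$. So $V(x_0,0)=\log\mathcal{Z}$ is constant and has zero gradient.
\item Your fixed point is $r(\Psi^{v'}_{0\to1}(x_0))$, where $\Psi^{v'}_{t\to1}$ is the endpoint map of the guided ODE. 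It varies with $x_0$ for any nonconstant $r$.
\item There is a second mismatch. Your fixed point is the value under the \emph{guided} field, so it changes whenever $g$ does. $V$ is defined under the base posterior and does not depend on $g$.
\end{itemize}

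Step~3 combines Step~1 with $p^\star_t\propto p_t e^{V}$ to identify $\nabla\hat V$ with the (preconditioned) exact guidance, so this link breaks. Your instinct is right for the soft value itself. For independent-coupling affine Gaussian paths one can check $g^\star_t=c_t\nabla_{x_t}V$ with a scalar $c_t$, e.g.\ $c_t=(1-t)/t$ for the linear path. But the soft value is not what FVE or terminal regression on guided rollouts converges to.

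The paper's proof does not rely on this identity. In Appendix~\ref{appx:guidance_fitted_value_evaluation} it proceeds as follows.
\begin{itemize}
\item It first replaces the soft value by an expected return, under an explicitly stated Jensen / small-posterior-variance approximation.
\item It then \emph{defines} the value as the fixed point of $\mathcal{T}^{v'}$.
\item It justifies $g=\nabla_x\hat V$ only as the local steepest-ascent direction of expected terminal reward under the guided dynamics, and explicitly declines to use $g^\star$ to certify it.
\end{itemize}
Drop the collapse claim and argue the surrogate property that way. The rest of your proposal then becomes a more careful version of the paper's argument, including the self-referential policy-evaluation caveat and the need for smoothness of $\mathcal{F}$ to pass from value accuracy to gradient accuracy.

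One caution on Step~2. The fitted-iteration bound is an a posteriori sup-norm bound, while the regression only controls error in $L^2(\mathcal{D})$. Nothing keeps $\sup_k\|\Pi_{\mathcal{F}}\mathcal{T}^{v'}\hat V_k-\mathcal{T}^{v'}\hat V_k\|_\infty$ finite. That is precisely the divergence of Proposition~\ref{prop:fqe_divergence}, so the bound justifies ``approximates'' only conditionally.
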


Fitted Value Evaluation can diverge due to the deadly triad, the instability arising from the interplay of function approximation, off-policy data, and bootstrapping (Proposition \ref{prop:fqe_divergence}, Appendix \ref{appx:guidance_fitted_value_evaluation}). Leveraging the deterministic dynamics and terminal-only rewards of flow matching, we propose Terminal Value Regression (TVR). By regressing directly against the terminal reward $r(x_1)$, TVR eliminates the need for bootstrapping, effectively breaking the deadly triad and ensuring stable convergence.

\begin{figure*}
    \centering
    \includegraphics[width=\linewidth]{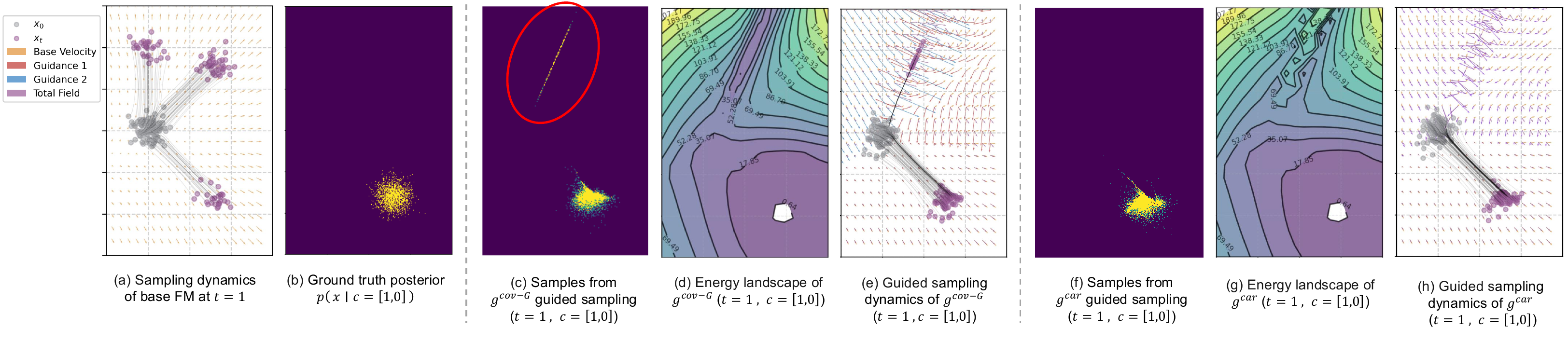}
    \caption{
    Visualization results on the synthetic dataset under $[1,0]$ constraints. (c--e) $g^{\text{cov-G}}$ shows significant off-manifold drift due to  an ``energy trap'' (highlighted by the red circle), where conflicting  gradients lead to erratic sampling trajectories.  (f--h) Our $g^{\text{car}}$ restores the accurate reward landscape,  thereby rectifying the off-manifold drift. An intuitive interpretation of the ``energy trap'' caused by gradient misalignment can be found in Appendix~\ref{sec:geometric}.
    }
    \label{fig:exp_sys_result}
\end{figure*}

\begin{proposition}[Terminal Value Regression]
\label{prop:terminal_reward_guidance}
Let $\mathcal{F}$ denote a function class used to approximate the value function. 
We collect a dataset of terminal rollouts $\mathcal{D} = \{(x_t, t, x_1)\}$, where $x_1$ is the terminal state reached from $x_t$ by integrating the current guided dynamics. The value function is estimated by minimizing the following regression objective:
\begin{equation}
\hat{V}
=
\arg\min_{V \in \mathcal{F}}
\;
\mathbb{E}_{(x_t, t, x_1) \sim \mathcal{D}}
\Big[
\big(
r(x_1) - V(x_t, t)
\big)^2
\Big].
\label{eq:terminal_value_regression}
\end{equation}
Unlike the bootstrapped target in Eq.~\eqref{eq:appx_least_square}, the terminal reward $r(x_1)$ serves as a stable, unbiased regression target, which is enabled by the deterministic nature of the flow.
\end{proposition}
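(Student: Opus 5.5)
Because the flow ODE is deterministic, the value function factors through the flow map, so TVR should reduce to ordinary least-squares regression whose population minimizer is exactly $V$; there is no Bellman fixed point to chase. The plan has three steps.

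\textbf{Step 1 (pointwise identification).} Since $x_1 = \Psi_1(z)$ and the terminal state reached from $x_t$ under the current guided dynamics is a deterministic function of $x_t$ (and of the latent $z$ it is paired with), we have
\begin{equation}
\mathbb{E}[r(x_1)\mid x_t, t] = \mathbb{E}_{z\sim\pi(\cdot\mid x_t)}[R(z)].
\end{equation}
This conditional mean is the fixed point of $\mathcal{T}^{v'}$ with $r=0$ and $\gamma=1$, as in Proposition~\ref{prop:fitted_value_guidance}. In the purely deterministic rollout it is simply $r(x_1(x_t))$.

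\textbf{Step 2 (population minimizer).} For any $V$, apply the orthogonality decomposition
\begin{equation}
\mathbb{E}\big[(r(x_1)-V)^2\big] = \mathbb{E}\big[(r(x_1)-m)^2\big] + \mathbb{E}\big[(m-V)^2\big],
\end{equation}
where $m(x_t,t) = \mathbb{E}[r(x_1)\mid x_t,t]$. The cross term vanishes by the tower property. Hence, whenever $m \in \mathcal{F}$, the minimizer of \eqref{eq:terminal_value_regression} is $m$ almost everywhere on the support of $\mathcal{D}$.

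\textbf{Step 3 (unbiasedness and no bootstrapping).} The regression target $r(x_1)$ depends only on the environment and the rollout, not on any previous iterate $\hat V_k$. So $\mathbb{E}[r(x_1)\mid x_t] = m$, which says the target is unbiased. The objective is a single convex problem in $V$ rather than an iteration of a composed operator. In contrast, the fitted-value target $\hat V_k(x',t')$ in \eqref{eq:appx_least_square} carries the error of $\hat V_k$ forward. I would make this precise by showing that, with $\hat V_k = V + \delta_k$, the bootstrapped target has bias $\delta_k(x',t')$. TVR has no such term, so the iteration-amplification mechanism behind Proposition~\ref{prop:fqe_divergence} is absent.

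\textbf{Main obstacle: ``integrating the current guided dynamics.''} The data are generated under $v'$, and the guidance $g^{\text{car}}$ itself depends on $\hat V$. Unbiasedness therefore holds for the policy-evaluation value of the current $v'$, not automatically for the tilted-target value $V$. I would state the result as evaluation of the current policy, as the Bellman phrasing in Proposition~\ref{prop:fitted_value_guidance} already does. Stability of the full outer loop is a separate question, and I would not claim to prove it here.
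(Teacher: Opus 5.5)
Your route is the paper's own route. The paper's only justification is the remark after Proposition~\ref{prop:terminal_reward_guidance_appx}: with deterministic dynamics, terminal-only reward and $\gamma=1$, the Monte Carlo return $r(x_1)$ already equals the Bellman fixed point of $\mathcal{T}^{v'}$, so no bootstrapping is needed and the deadly triad is broken. Your Steps 2--3 make that remark precise through the orthogonality decomposition and the explicit bias $\delta_k(x',t')$ carried by the bootstrapped target. Your closing caveat is also correct: the target is unbiased for the value of the current guided dynamics, not for the tilted-target value $V$. That is the only sense in which the paper's ``unbiased Monte Carlo target'' holds.

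One step is wrong as written: the display in Step 1.
\begin{itemize}
\item \emph{Why it fails.} By the data-collection protocol, $x_1$ is obtained by integrating $v'$ from $x_t$. Assuming unique ODE solutions, $x_1=\Phi^{v'}_{t\to 1}(x_t)$ is a function of $(x_t,t)$ alone, so there is no additional latent $z$ ``paired'' with it, and $m(x_t,t)=r(\Phi^{v'}_{t\to 1}(x_t))$. This is not $\mathbb{E}_{z\sim\pi(\cdot\mid x_t)}[R(z)]$. That quantity averages over the base-coupling posterior of conditional paths through $x_t$, which is generally non-degenerate, and it refers to the base rather than the guided model.
\item \emph{Counterexample.} Take $t=0$ with the independent coupling $\pi=p_0\otimes p_1$. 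The posterior mean is the constant $\mathbb{E}_{p_1}[r]$, whereas $r(\Phi^{v}_{0\to 1}(x_0))$ varies with $x_0$ even for the unguided flow. Determinism of $z\mapsto\Psi_1(z)$ does not make $\pi(\cdot\mid x_t)$ a point mass.
\item \emph{Internal inconsistency.} Your display would make $r(x_1)$ unbiased for the paper's expected-return surrogate of $V$, which is exactly what your final paragraph correctly says does not follow.
\item \emph{Fix.} Replace Step 1 by $m=r\circ\Phi^{v'}_{t\to 1}$. This is directly the unique solution of the Bellman equation with $r=0$, $\gamma=1$ and boundary $V(\cdot,1)=r$. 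The noise term in your Step 2 decomposition then vanishes identically, and Steps 2--3 go through unchanged.
\end{itemize}
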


We parameterize the scalar value function $V(x_t, t)$, then define the guidance as $g_\psi(x_t, t) \triangleq \nabla_{x_t} V_\psi(x_t, t)$. Building on Proposition~\ref{prop:terminal_reward_guidance}, we optimize $\psi$ by minimizing a masked regression loss against the terminal reward:
\begin{equation}
\mathcal{L}(\psi) = \mathbb{E}_{(x_t, t, x_1) \sim \mathcal{D}}
\left[ \mathbb{I}_t \cdot
\big(
r(x_1)
-
V_\psi(x_t, t)
\big)^2
\right],
\label{eq:weighted_guidance_matching_loss} 
\end{equation}
where $x_1$ is the terminal state obtained from online rollouts. We allocate compute budget efficiently by using $\mathbb{I}_t \triangleq \mathds{1}(w(x_t) > \tau)$, which ensures the guidance $g_\psi$ is updated solely in regions showing high gradient conflict.
While directly parameterizing the vector field $\nabla V(x_t,t)$ is common in diffusion models~\citep{song2021train}, unconstrained neural vector fields are not guaranteed to be conservative (i.e., curl-free)~\citep{balcerak2025energy}. 
See Appendix \ref{appx:curl_free} for further discussion.

%% file: 05_experiment.tex
\section{Experiments} \label{sec:exp}

Our experiments are designed to answer two core questions: (1) Can $g^{\text{car}}$ effectively rectify off-manifold drift under compositional reward settings? (2) Is the $g^{\text{car}}$ compute light?

\input{tables/exp_syn}
\input{tables/exp_maze2d}

\subsection{Experimental setup} \label{sec:exp_setup}

\textbf{Baselines.} 
Our baselines span the full spectrum of inference-time guidance methods, letting us probe whether each exhibits off-manifold drift:
(1) approximate guidance ($\bm{g^{\textbf{cov-G}}}$), which is compute-light but susceptible to off-manifold drift; and
(2) exact guidance, including the sample-based \textbf{GLASS-FKS}~\citep{holderrieth2026glass} and the training-based \textbf{GM}~\citep{feng2025guidance}.
Since GM requires ground-truth samples $x_1$ that satisfy all constraints, its use is limited to synthetic datasets where such samples are accessible by construction.
Task-specific SOTA baselines are also included, detailed in the following sections.
Our $g^\text{car}$ corrects off-manifold drift by adding only a fraction of extra compute.
We include \textbf{PCGrad}~\citep{yu2020pcgrad}, a gradient conflict resolution method from multi-objective optimisation, to show that our conflict-aware mechanism rectifies off-manifold drift more effectively than projection-based deconfliction.
See details and additional results in Appendix~\ref{app:exp}.

\subsection{Synthetic dataset} \label{sec:exp_synthetic_dataset}

\textbf{Tasks.}
We begin with a 2-dimensional Mixture of Gaussians (see Appendix~\ref{app:exp_syn} for details), where the ground-truth density $p_t$ is known, allowing for precise quantitative evaluation.
We train a flow matching model \citep{lipman2024flowmatchingguidecode} to transport a standard Gaussian source to a Mixture of Gaussians target. We steer the generation using two pre-trained classifiers that impose differing label constraints on the target samples. These constraints are specifically configured to induce severe gradient conflicts, thereby creating a ``stress test'' to benchmark the robustness of different guidance methods against off-manifold drift.

\textbf{Evaluation metrics.}
We present intuitive visualizations, and also designed four quantitative metrics:
(i) posterior coverage (higher is better on data manifold); (ii) inference time (lower is better speed); (iii) data usage (lower indicates better training data efficiency); (iv) constraint satisfaction (higher indicates better recovery of ground-truth posterior). 

\textbf{Results.}
We first visualize the failure modes of the standard approximation guidance $g^\text{cov-G}$ in Figure~\ref{fig:exp_sys_result}.
Notably, $g^\text{cov-G}$ generates off-manifold samples (highlighted by the red circle in Figure~\ref{fig:exp_sys_result}c).
By analyzing the underlying energy landscape (Figure~\ref{fig:exp_sys_result}d), we observe an ``energy trap'', a spurious artifact arising from gradient conflict.
This trap captures the sampling process (Figure~\ref{fig:exp_sys_result}e), forcing trajectories to ultimately diverge from the data manifold.
Our $g^{\text{car}}$ rectifies the vanishing energy by learning a residual guidance within these conflict regions, thereby eliminating off-manifold drift.

Quantitative results in Table~\ref{tab:guidance_composed_only} show,
under the conflict constraint $[1,0]$, $g^\text{cov-G}$ drifts off-manifold for nearly 30\% of samples (only 71.70\% PC), and PCGrad provides only marginal correction (75.20\% PC, $\uparrow$3.5), confirming that generic projection-based deconfliction is insufficient at inference time.
Our $g^\text{car}$ reduces off-manifold drift to $6.2\%$ (93.80\% PC), incurring only a small compute overhead over $g^\text{cov-G}$ (1.65 vs.\ 0.37 ms/sample).
The exact baselines come with their own trade-offs: GM requires roughly $20\times$ more training data than $g^\text{car}$ and shows higher variance, while GLASS-FKS avoids off-manifold drift but suffers from high computational cost and is sensitive to the particle count.

\subsection{Generative decision-making as planners} \label{sec:exp_planner}

\textbf{Tasks.}
We conduct experiments on generative decision making tasks where generative models have been used as planners. We focus on the Maze2D~\citep{luo2024potential}, involving a point-robot with state $s \in \mathbb{R}^4$ (position and velocity) and action $a \in \mathbb{R}^2$ (force).
We aim to steer a base planner pre-trained on expert demonstrations, where the maze layout, start and goal are randomly generated. Following \citet{luo2024potential}, we collect a diverse set of collision-free demonstrations via $\text{BIT}^*$ to train the base CFM model~\citep{tong2023improving}. The model serves as a learned prior, generating trajectories $x = (s_{0:H-1}, a_{0:H-1}) \in \mathbb{R}^{6H}$ conditioned on the maze layout, start, and goal. 
The $r(x_1)$ includes:
(i) static obstacle avoidance for unseen environments; 
(ii) goal reaching (e.g., object grasping); 
(iii) dynamic collision avoidance against agents with random linear trajectories~\citep{romer2025diffusion,bouvier2025ddat}.

\input{tables/exp_maniskill}

\begin{figure}[t]
    \centering
    \includegraphics[width=\linewidth]{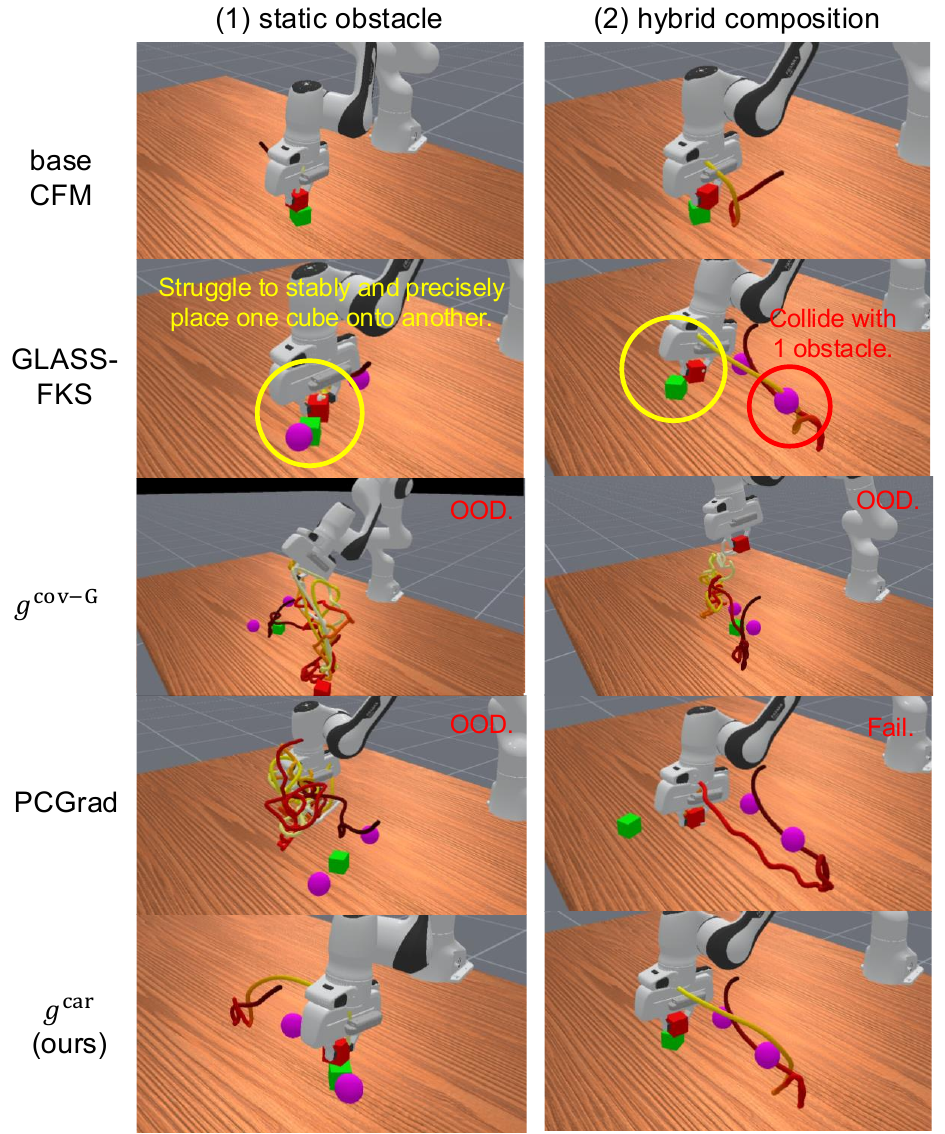}
    \caption{Visualization on ManiSkill2 StackCube with $\tau=0.20$. OOD: the trajectory leaves the data manifold, producing physically incoherent motions (e.g., erratic spinning or tangled paths); Fail:the trajectory stays on the manifold but fails the task (e.g., does not reach the goal).}
    \label{fig:exp_maniskill}
\end{figure}

\textbf{Baselines and metrics.}
Besides the baselines introduced in Section~\ref{sec:exp_setup}, we include MPPI~\citep{williams2017mppi}, a classical optimization-based planner effective for constrained trajectory problems, and its $g^\text{car}$-augmented variant MPPI+$g^\text{car}$.
We report five metrics: 
(i) safety rate: the percentage of trajectories that are collision-free with respect to the static maze layout, which reflects the prior preservation.
(ii) violations: average number of inference-time constraint violations (e.g., collisions with new obstacles or missed goals) per trajectory.
(iii) success rate: the percentage of trajectories that successfully 
reach the task goals.
(iv) steps: the number of iterations required to train the learnable guidance (for $g^\text{car}$) or optimize the action sequence via importance sampling (for MPPI).
(v) time: wall-clock training and inference time per sample.

\textbf{Results.}
Under the four compositional reward settings in Table~\ref{tab:maze2d_results}, $g^\text{cov-G}$ generates hallucinated paths, trajectories that jump across obstacles or have sharp kinks (see Figure~\ref{fig:exp_app_maze2d} for visualisations). Our $g^\text{car}$ rectifies these failures, with average gains of $19.0\%$ safety, $38.75\%$ success, and $0.68$ fewer violations per trajectory, at only $4$ training iterations; PCGrad provides only marginal correction.
GLASS-FKS, while strong overall, reaches $100\%$ success at the cost of non-zero violations and $\sim 3.2\times$ slower inference ($533$ vs.\ $168$ ms).
MPPI alone is a strong planning method, collision-free on static obstacles ($100$ safety, $0.0$ violations), but its success drops to $63$--$69\%$ in dynamic and hybrid settings.
MPPI+$g^\text{car}$ outperforms vanilla MPPI and achieves the best performance, lifting average safety to $97.75\%$ and success to $95\%$, including the previously hard static-goal setting ($100$ safety, $96$ success).

We further evaluate robustness by increasing the number of clustered static obstacles from $2$ to $6$ (Figure~\ref{fig:exp_ablation_planner}, Appendix~\ref{app:exp_planner}): $g^\text{cov-G}$'s success rate collapses to $12\%$, while our method maintains $34\%$ in the most complex setting.

\subsection{Generative Decision-Making as Policies}
\label{subsec:maniskill_experiments}

\textbf{Tasks.}
We evaluate our method on high-dimensional manipulation tasks (PickCube and StackCube) in ManiSkill2~\citep{gu2023maniskill2}, where the policy predicts action chunks of horizon $T$ from 3D point cloud observations. We aim to test the capability to reactively steer a general-purpose base policy to satisfy on-the-fly requirements.
The action is defined as $\mathbf{a}_t = \big[\Delta \mathbf{p}, \Delta \mathbf{r}, g \big] \in \mathbb{R}^7$, representing delta translation, rotation, and gripper state. We train a base CFM model solely on unconstrained demonstrations.
Reward functions $r(x)$ include:
(i) static obstacles;
(ii) trajectory smoothness costs.

\textbf{Evaluation metrics.}
We evaluate violations, success rate, training steps and time, following the definitions in Section~\ref{sec:exp_planner}.

\textbf{Results.}
The base CFM model achieves a $100\%$ success rate on both training and test sets (Appendix~\ref{app:exp_policy}).
Naïve inference-time guidance tends to drift off-manifold and produce out-of-distribution behavior: $g^{\text{cov-G}}$ generates hallucinated paths and fails to complete the tasks, and PCGrad even drives success to $0\%$ on both StackCube settings; only $g^{\text{car}}$ rectifies these failures (Figures~\ref{fig:exp_maniskill} and \ref{fig:exp_maniskill_all}).
Quantitatively (Table~\ref{tab:Maniskill_stackcube_metrics}), under hybrid constraints, $g^\text{car}$ reduces the baseline's violation rate from $1.8$ to $0.4$ and boosts the success rate from $9\%$ to $61\%$, with only ${\sim}10\%$ inference overhead over $g^{\text{cov-G}}$ ($203$ vs.\ $185$ ms).
GLASS-FKS struggles on StackCube ($24$--$32\%$), which requires stably and precisely placing one cube onto another, due to its high sampling variance.

\subsection{Text-guided Image Manipulation}

\begin{table}[t]
\centering
\tiny
\caption{Comparison with baselines on CelebA-HQ.
We evaluate composed prompts including \emph{sad + angry}, \emph{sad + happy}, and \emph{sad + curly hair}. We report results with $\tau=0.20$. Full results are in Table \ref{tab:composed_prompt_metrics}.}
\label{tab:text_manip_metrics_composed}
\setlength{\tabcolsep}{3pt}
\renewcommand{\arraystretch}{1.1}
\definecolor{lightgray}{gray}{0.9}
\definecolor{teal}{rgb}{0.0, 0.5, 0.5}
\begin{tabular}{lccccc}
\toprule
Method
& LPIPS $\downarrow$
& ID $\uparrow$
& CLIP $\uparrow$
& BLIP-ITM $\uparrow$
& VQAScore $\uparrow$ \\
\midrule
FlowGrad           & \textbf{0.203} & 0.677 & 0.274 & 0.326 & 0.596 \\
GLASS-FKS          & 0.313 & 0.329 & 0.253 & 0.082 & 0.311 \\
$g^{\text{cov-G}}$ & 0.217 & 0.543 & 0.290 & 0.525 & 0.751 \\
PCGrad
& 0.219\textcolor{teal}{$^{\downarrow 0.002}$}
& 0.602\textcolor{teal}{$^{\uparrow 0.059}$}
& 0.276\textcolor{teal}{$^{\downarrow 0.014}$}
& 0.444\textcolor{teal}{$^{\downarrow 0.081}$}
& 0.578\textcolor{teal}{$^{\downarrow 0.173}$} \\
\addlinespace
\rowcolor{lightgray} $g^{\text{car}}$ (ours)
& 0.226\textcolor{purple_arrow}{$^{\downarrow 0.009}$}
& \textbf{0.681}\textcolor{purple_arrow}{$^{\uparrow 0.138}$}
& \textbf{0.291}\textcolor{purple_arrow}{$^{\uparrow 0.001}$}
& \textbf{0.597}\textcolor{purple_arrow}{$^{\uparrow 0.072}$}
& \textbf{0.755}\textcolor{purple_arrow}{$^{\uparrow 0.004}$} \\
\bottomrule
\end{tabular}

\vspace{1.5mm}
{\scriptsize
\raggedright
\textit{Note:} \textbf{Bold text} indicates the best performance. Rows with \colorbox{lightgray}{gray backgrounds} indicate methods that use our $g^{\text{car}}$ for conflict correction. \textcolor{purple_arrow}{Purple superscripts} show the performance change of $g^{\text{car}}$ over $g^\text{cov-G}$, and \textcolor{teal}{teal superscripts} show the change of PCGrad over $g^\text{cov-G}$, where $\uparrow$ denotes improvement and $\downarrow$ denotes degradation.
}
\end{table}

\begin{figure}[t]
    \centering
    \includegraphics[width=\columnwidth, keepaspectratio]{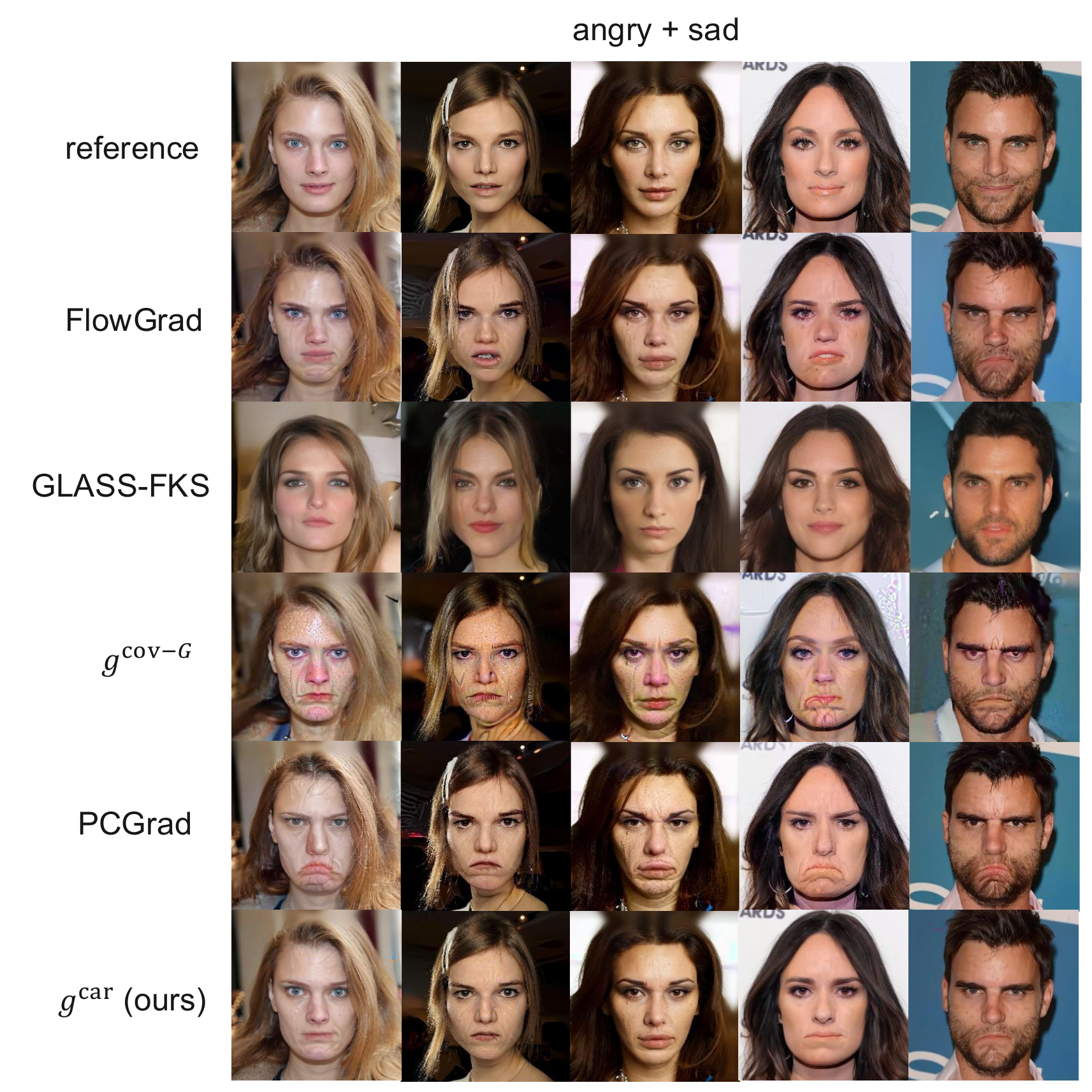}
    \caption{
    Visualization of text-guided generated faces.
    }
    \label{fig:exp_image_vis}
\end{figure}

\textbf{Tasks.}
To evaluate the scalability of $g^\text{car}$ in high-dimensional pixel spaces, we conduct text-guided image manipulation on CelebA-HQ.
Following \citet{liu2023flowgrad}, we use a pre-trained Rectified Flow model as the generative prior and steer it toward compositional text guidance \{\emph{sad + angry}, \emph{sad + happy}, \emph{sad + curly hair}\}, which target different facial expressions or traits.
Following the same setup as~\citet{liu2023flowgrad}, given an image $x_1$, the reward for alignment with the text prompt is evaluated by the CLIP model~\citep{radford2021clip}, which is used to score the similarity between arbitrary image-text pairs. 
Experiment details are in Appendix~\ref{app:exp_image}.

\textbf{Baselines and metrics.}
Besides the baselines from Section~\ref{sec:exp_setup}, we additionally compare against the SOTA image editing method FlowGrad~\citep{liu2023flowgrad}.
We report metrics in three groups:
(i) \emph{Text-image alignment} (higher is better): CLIP~\citep{radford2021clip}, plus two stronger perceptual measures, BLIP-ITM~\citep{li2022blip} (a strict binary image-text matcher) and VQAScore~\citep{vqascore} (compositional reasoning via LLaVA-1.5);
(ii) \emph{Image quality}: LPIPS (lower is better preservation of the reference), ID similarity (higher is better identity preservation), and CLIP-IQA~\citep{wang2022exploring} (higher is fewer visual artifacts);
(iii) \emph{Efficiency}: training time and per-sample inference time.

\textbf{Results.}
Visual comparisons (Figure~\ref{fig:exp_image_vis}) and quantitative metrics (Table~\ref{tab:text_manip_metrics_composed}) reveal that $g^{\text{cov-G}}$ is prone to off-manifold drift and produces hallucinated generations, with CLIP-IQA score only $0.535$.
PCGrad cannot recover from off-manifold drift and struggles to balance multiple constraints, leaving some targets unfulfilled (e.g., failing to generate an ``angry'' expression, with BLIP-ITM $P_1{=}0.650$ vs.\ $P_2{=}0.337$; see Table~\ref{tab:composed_prompt_metrics}).
FlowGrad similarly lacks the ability to balance multiple constraints. 
GLASS-FKS fails to preserve the reference image due to its high sampling variance, with the worst LPIPS ($0.313$) and ID ($0.329$).
Our $g^{\text{car}}$ finds a sweet spot between text-image alignment and image quality, achieving the highest ID ($0.681$), CLIP-IQA ($0.543$), and alignment scores (CLIP $0.291$, BLIP-ITM $0.597$, VQAScore $0.755$); it effectively rectifies the off-manifold drift observed in $g^{\text{cov-G}}$ and generates faces with minimal distortion.

%% file: tables/exp_syn.tex
\begin{table}[t]
\centering
\scriptsize
\caption{
Comparison with baselines on the Synthetic dataset. Each number is evaluated over $10\text{k}$ generated samples.
Here $[\,\cdot\,,\,\cdot\,]$ denotes the target labels of two classifiers, where $[1,0]$ represents the gradient conflict scenario. For GLASS-FKS, we report $K{=}16$; For $g^{\text{car}}$, $\tau=0.50$. Full results are in Appendix~\ref{app:syn_exp_results}.
\textbf{Bold text} is the best performance.}
\label{tab:guidance_composed_only}
\setlength{\tabcolsep}{3pt}
\renewcommand{\arraystretch}{1.05}
\begin{tabular}{lcccccccc}
\toprule
& \multicolumn{4}{c}{Posterior Coverage (PC) (\%) $\uparrow$} & \multicolumn{4}{c}{Constraint Satisfaction (CS) (\%) $\uparrow$} \\
\cmidrule(lr){2-5}\cmidrule(lr){6-9}
Method
& $[0,0]$ & $[1,0]$ & $[1,1]$ & Avg.
& $[0,0]$ & $[1,0]$ & $[1,1]$ & Avg. \\
\midrule
$g^{\text{cov-G}}$
& 94.00 & 71.70 & 89.00 & 84.90
& \textbf{100.00} & 89.56 & 99.87 & 96.48 \\
PCGrad
& 94.05 & 75.20 & \textbf{89.15} & 86.13
& \textbf{100.00} & 92.45 & 99.88 & 97.44 \\
GM
& 82.30 & 84.50 & 86.20 & 84.33
& 99.93 & 99.99 & 99.92 & 99.95 \\

GLASS-FKS
& 88.80 & 90.80 & 88.60 & 89.40
& \textbf{100.00} & 99.71 & \textbf{100.00} & 99.90 \\
$g^{\text{car}}$ (ours)
& \textbf{94.60} & \textbf{93.80} & 88.70 & \textbf{92.37}
& \textbf{100.00} & \textbf{100.00} & 99.87 & \textbf{99.96} \\
\midrule
& \multicolumn{4}{c}{Inference Time (ms / sample) $\downarrow$} & \multicolumn{4}{c}{Data Usage ($\times 10^3$) $\downarrow$} \\
\cmidrule(lr){2-5}\cmidrule(lr){6-9}
Method
& $[0,0]$ & $[1,0]$ & $[1,1]$ & Avg.
& $[0,0]$ & $[1,0]$ & $[1,1]$ & Avg. \\
\midrule
$g^{\text{cov-G}}$
& \textbf{0.36} & \textbf{0.38} & 0.36 & \textbf{0.37}
& -- & -- & -- & -- \\
PCGrad
& 0.37 & 0.42 & 0.38 & 0.39
& -- & -- & -- & -- \\
GM
& 2.81 & 2.81 & 3.11 & 2.91
& 10240 & 10240 & 10240 & 10240 \\

GLASS-FKS
& 298 & 296 & 302 & 299
& -- & -- & -- & -- \\
$g^{\text{car}}$ (ours)
& 0.46 & 4.20 & \textbf{0.27} & 1.65
& \textbf{74.75} & \textbf{1573.89} & \textbf{0} & \textbf{549.55} \\
\bottomrule
\end{tabular}
\end{table}

%% file: tables/exp_maze2d.tex
\begin{table*}[t]
\centering
\scriptsize
\caption{Comparison with baselines on Maze2D.
Compositional reward settings:
(1) static obstacle: two random static obstacles;
(2) static goal: two random goals;
(3) dynamic obstacle: two random dynamic agents;
(4) hybrid composition: one static and one dynamic obstacles.
Metrics include Inference Time (ms/sample), Safe (collision-free rate \%), Violation (mean constraint violations \#), Success (success rate \%), and Steps (\#). Results are averaged over 100 samples with conflict threshold $\tau=0.20$. Note that we do not use inpainting, which allows us to better observe the capability of inference-time alignment methods in preserving the base model prior. Furthermore, comparing the success rates reveals that directly applying GLASS-FKS, MPPI, $g^\text{cov-G}$, and their PCGrad or $g^\text{car}$ corrections as inference-time techniques to the base flow to satisfy constraints compromises prior preservation. The $g^{\text{car}}$ method requires an online training period of $10.2 \pm 0.1$ min (mean $\pm$ std across 5 random seeds) for 4 training steps prior to being used for inference.
}
\label{tab:maze2d_results}
\setlength{\tabcolsep}{1.5pt} %
\renewcommand{\arraystretch}{1.1}
\definecolor{lightgray}{gray}{0.9}
\definecolor{teal}{rgb}{0.0, 0.5, 0.5}
\begin{tabular}{@{}l c cccc cccc cccc cccc@{}}
\toprule
&
& \multicolumn{4}{c}{(1) static obstacles}
& \multicolumn{4}{c}{(2) static goal}
& \multicolumn{4}{c}{(3) dynamic obstacles}
& \multicolumn{4}{c}{(4) hybrid composition} \\
\cmidrule(lr){3-6}\cmidrule(lr){7-10}\cmidrule(lr){11-14}\cmidrule(lr){15-18}
Method & Time~$\downarrow$
& Safety~$\uparrow$ & Viol.~$\downarrow$ & Succ.~$\uparrow$ & Steps~$\downarrow$
& Safety~$\uparrow$ & Viol.~$\downarrow$ & Succ.~$\uparrow$ & Steps~$\downarrow$
& Safety~$\uparrow$ & Viol.~$\downarrow$ & Succ.~$\uparrow$ & Steps~$\downarrow$
& Safety~$\uparrow$ & Viol.~$\downarrow$ & Succ.~$\uparrow$ & Steps~$\downarrow$ \\
\midrule
GLASS-FKS
& 532.7
& 78 & 0.3 & \textbf{100} & -
& 56 & 0.6 & \textbf{100} & -
& 71 & 0.4 & \textbf{100} & -
& 68 & 0.5 & \textbf{100} & - \\
& {\scriptsize $\pm$53.2}
& {\scriptsize $\pm$3.2} & {\scriptsize $\pm$0.1} & {\scriptsize $\pm$0.0} & 
& {\scriptsize $\pm$4.5} & {\scriptsize $\pm$0.2} & {\scriptsize $\pm$0.0} & 
& {\scriptsize $\pm$3.8} & {\scriptsize $\pm$0.1} & {\scriptsize $\pm$0.0} & 
& {\scriptsize $\pm$4.1} & {\scriptsize $\pm$0.2} & {\scriptsize $\pm$0.0} & \\
\midrule
MPPI
& 242.4
& \textbf{100} & \textbf{0.0} & 41 & 30
& 62 & 1.5 & 69 & 30
& 58 & 0.5 & 63 & 30
& 47 & 0.4 & 69 & 30 \\
& {\scriptsize $\pm$12.5}
& {\scriptsize $\pm$0.0} & {\scriptsize $\pm$0.0} & {\scriptsize $\pm$2.4} & 
& {\scriptsize $\pm$3.5} & {\scriptsize $\pm$0.3} & {\scriptsize $\pm$3.1} & 
& {\scriptsize $\pm$2.8} & {\scriptsize $\pm$0.2} & {\scriptsize $\pm$2.9} & 
& {\scriptsize $\pm$3.1} & {\scriptsize $\pm$0.2} & {\scriptsize $\pm$3.4} & \\
\addlinespace
\rowcolor{lightgray} MPPI + $g^{\text{car}}$ (ours)
& 265.8
& \textbf{100} & \textbf{0.0} & 98\textcolor{purple_arrow}{$^{\uparrow 57}$} & 30
& \textbf{100}\textcolor{purple_arrow}{$^{\uparrow 38}$} & \textbf{0.0}\textcolor{purple_arrow}{$^{\downarrow 1.5}$} & 96\textcolor{purple_arrow}{$^{\uparrow 27}$} & 30
& \textbf{96}\textcolor{purple_arrow}{$^{\uparrow 38}$} & \textbf{0.1}\textcolor{purple_arrow}{$^{\downarrow 0.4}$} & 94\textcolor{purple_arrow}{$^{\uparrow 31}$} & 30
& \textbf{95}\textcolor{purple_arrow}{$^{\uparrow 48}$} & \textbf{0.2}\textcolor{purple_arrow}{$^{\downarrow 0.2}$} & 92\textcolor{purple_arrow}{$^{\uparrow 23}$} & 30 \\
\rowcolor{lightgray} & {\scriptsize $\pm$14.2}
& {\scriptsize $\pm$0.0} & {\scriptsize $\pm$0.0} & {\scriptsize $\pm$1.2} & 
& {\scriptsize $\pm$0.0} & {\scriptsize $\pm$0.0} & {\scriptsize $\pm$1.5} & 
& {\scriptsize $\pm$1.4} & {\scriptsize $\pm$0.0} & {\scriptsize $\pm$1.8} & 
& {\scriptsize $\pm$1.6} & {\scriptsize $\pm$0.1} & {\scriptsize $\pm$2.1} & \\
\midrule

$g^\text{cov-G}$
& 150.0
& 39 & 0.3 & 16 & -
& 41 & 1.7 & 23 & -
& 39 & 0.9 & 42 & -
& 34 & 1.1 & 12 & - \\
& {\scriptsize $\pm$5.4}
& {\scriptsize $\pm$2.1} & {\scriptsize $\pm$0.1} & {\scriptsize $\pm$1.4} & 
& {\scriptsize $\pm$2.4} & {\scriptsize $\pm$0.3} & {\scriptsize $\pm$1.8} & 
& {\scriptsize $\pm$2.2} & {\scriptsize $\pm$0.2} & {\scriptsize $\pm$2.5} & 
& {\scriptsize $\pm$1.9} & {\scriptsize $\pm$0.2} & {\scriptsize $\pm$1.1} & \\
\addlinespace

PCGrad
& 175.2
& 44\textcolor{teal}{$^{\uparrow 5}$} & 0.2\textcolor{teal}{$^{\downarrow 0.1}$} & 27\textcolor{teal}{$^{\uparrow 11}$} & -
& 47\textcolor{teal}{$^{\uparrow 6}$} & 1.5\textcolor{teal}{$^{\downarrow 0.2}$} & 32\textcolor{teal}{$^{\uparrow 9}$} & -
& 41\textcolor{teal}{$^{\uparrow 2}$} & 0.7\textcolor{teal}{$^{\downarrow 0.2}$} & 46\textcolor{teal}{$^{\uparrow 4}$} & -
& 35\textcolor{teal}{$^{\uparrow 1}$} & 0.9\textcolor{teal}{$^{\downarrow 0.2}$} & 18\textcolor{teal}{$^{\uparrow 6}$} & - \\
& {\scriptsize $\pm$18.0}
& {\scriptsize $\pm$2.3} & {\scriptsize $\pm$0.1} & {\scriptsize $\pm$1.7} & 
& {\scriptsize $\pm$2.6} & {\scriptsize $\pm$0.3} & {\scriptsize $\pm$2.1} & 
& {\scriptsize $\pm$2.1} & {\scriptsize $\pm$0.1} & {\scriptsize $\pm$2.8} & 
& {\scriptsize $\pm$1.8} & {\scriptsize $\pm$0.2} & {\scriptsize $\pm$1.5} & \\
\addlinespace

\rowcolor{lightgray} $g^{\text{car}}$ (ours)
& 168.4
& 74\textcolor{purple_arrow}{$^{\uparrow 35}$} & \textbf{0.0}\textcolor{purple_arrow}{$^{\downarrow 0.3}$} & 79\textcolor{purple_arrow}{$^{\uparrow 63}$} & \textbf{4}
& 63\textcolor{purple_arrow}{$^{\uparrow 22}$} & 0.8\textcolor{purple_arrow}{$^{\downarrow 0.9}$} & 72\textcolor{purple_arrow}{$^{\uparrow 49}$} & \textbf{4}
& 49\textcolor{purple_arrow}{$^{\uparrow 10}$} & 0.2\textcolor{purple_arrow}{$^{\downarrow 0.7}$} & 61\textcolor{purple_arrow}{$^{\uparrow 19}$} & \textbf{4}
& 43\textcolor{purple_arrow}{$^{\uparrow 9}$} & 0.3\textcolor{purple_arrow}{$^{\downarrow 0.8}$} & 36\textcolor{purple_arrow}{$^{\uparrow 24}$} & \textbf{4} \\
\rowcolor{lightgray} & {\scriptsize $\pm$8.6}
& {\scriptsize $\pm$1.5} & {\scriptsize $\pm$0.0} & {\scriptsize $\pm$1.8} & 
& {\scriptsize $\pm$2.2} & {\scriptsize $\pm$0.1} & {\scriptsize $\pm$1.9} & 
& {\scriptsize $\pm$1.7} & {\scriptsize $\pm$0.1} & {\scriptsize $\pm$2.0} & 
& {\scriptsize $\pm$1.8} & {\scriptsize $\pm$0.1} & {\scriptsize $\pm$1.6} & \\
\bottomrule
\end{tabular}

\vspace{1.5mm}
\raggedright
\textit{Note:} \textbf{Bold text} indicates the best performance. Rows with \colorbox{lightgray}{gray backgrounds} indicate methods that use our $g^{\text{car}}$ for conflict correction. \textcolor{purple_arrow}{Purple superscripts} show the performance change of $g^{\text{car}}$ over $g^\text{cov-G}$, and \textcolor{teal}{teal superscripts} show the change of PCGrad over $g^\text{cov-G}$, where $\uparrow$ denotes improvement and $\downarrow$ denotes degradation. For all metrics, we report the mean (top row) and standard deviation across 5 random seeds (bottom row).

\end{table*}

%% file: tables/exp_maniskill.tex
\begin{table}[t]
\centering
\scriptsize
\caption{
    Comparison on the ManiSkill2 StackCube task.
    (1) static obstacles: two random static obstacles;
    (2) hybrid composition: two random static obstacles and trajectory smoothness.
    Full results, including PickCube, are in Table~\ref{tab:app_maniskill_pick_stack_metrics}.
}
\label{tab:Maniskill_stackcube_metrics}
\setlength{\tabcolsep}{2.8pt}
\renewcommand{\arraystretch}{1.1}
\definecolor{lightgray}{gray}{0.9}
\definecolor{teal}{rgb}{0.0, 0.5, 0.5}
\begin{tabular}{l c cc cc}
\toprule
& & \multicolumn{2}{c}{(1) static obstacles}
& \multicolumn{2}{c}{(2) hybrid composition} \\
\cmidrule(lr){3-4}\cmidrule(lr){5-6}
Method & Time~$\downarrow$
& Violation~$\downarrow$ & Success~$\uparrow$
& Violation~$\downarrow$ & Success~$\uparrow$ \\
\midrule
GLASS-FKS
& 493.7
& 0.6 & 32
& 1.1 & 24 \\
$g^{\text{cov-G}}$
& 185.0
& 1.2 & 12
& 1.8 & 9 \\
PCGrad
& 201.4
& 1.5\textcolor{teal}{$^{\downarrow 0.3}$} & 0\textcolor{teal}{$^{\downarrow 12}$}
& 2.2\textcolor{teal}{$^{\downarrow 0.4}$} & 0\textcolor{teal}{$^{\downarrow 9}$} \\
\addlinespace
\rowcolor{lightgray} $g^{\text{car}}$ (ours)
& 203.2
& \textbf{0.1}\textcolor{purple_arrow}{$^{\uparrow 1.1}$} & \textbf{72}\textcolor{purple_arrow}{$^{\uparrow 60}$}
& \textbf{0.4}\textcolor{purple_arrow}{$^{\uparrow 1.4}$} & \textbf{61}\textcolor{purple_arrow}{$^{\uparrow 52}$} \\
\bottomrule
\end{tabular}

\vspace{1.5mm}
{\scriptsize
\raggedright
\textit{Note:} \textbf{Bold text} indicates the best performance. Rows with \colorbox{lightgray}{gray backgrounds} indicate methods that utilize our $g^{\text{car}}$ for conflict correction. \textcolor{purple_arrow}{Purple superscripts} show the performance change of $g^{\text{car}}$ over $g^\text{cov-G}$, and \textcolor{teal}{teal superscripts} show the change of PCGrad over $g^\text{cov-G}$, where $\uparrow$ denotes improvement and $\downarrow$ denotes degradation.
}
\end{table}

%% file: 07_conclusion.tex
\section{Conclusions, Limitations, and Future Work}

In this paper, we proposed Conflict-Aware Additive Guidance ($g^\text{car}$), a lightweight guided sampling method that incorporates a conflict-aware gating mechanism to actively detect and rectify trajectory deviations.
Experimental results showed that flow models equipped with $g^\text{car}$ achieved state-of-the-art steerability at inference time across diverse domains, including text-guided image editing, robotic planning, and manipulation.

A remaining challenge lies in convergence under complex reward landscapes. In high-dimensional tasks like text-guided image editing, the CLIP reward signal is non-smooth, producing adversarial artifacts that maximize the score without semantic improvement, making $g_\psi$ hard to train stably. 
Lighter alternatives to $g_\psi$, richer reward compositions, and more structured latent representations~\citep{yu2024skillaware, dunion2023cmid} replacing the CLIP reward to yield smoother landscapes are all promising directions for future work.

%% file: 10_appendix_path_measure.tex
\section{Extended related works} \label{app:extended_related_work}

\begin{figure}
    \centering
    \includegraphics[width=\linewidth]{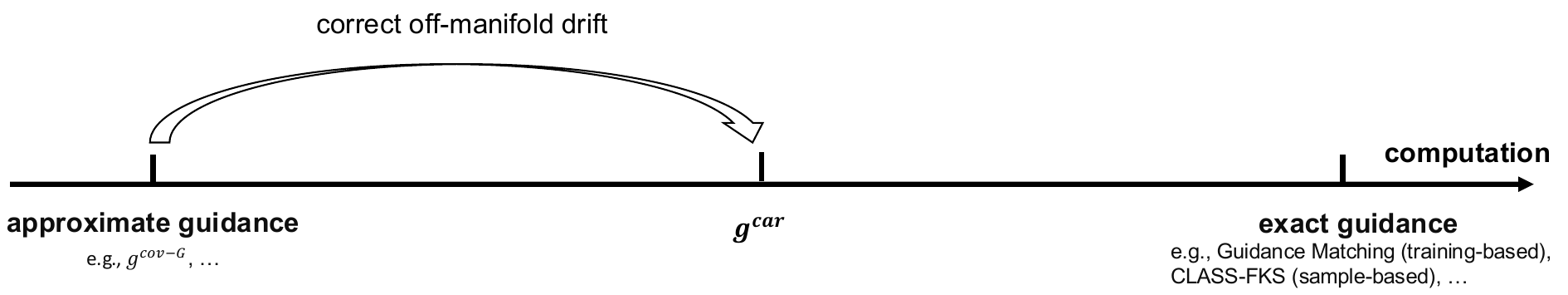}
    \caption{Inference-time guidance methods arranged by computational cost.
    Approximate guidance methods (e.g., $g^{\text{cov-G}}$) are lightweight but
    accumulate local approximation error, leading to off-manifold drift. Exact guidance methods (e.g., Guidance Matching, sample-based guidance) are exact but require substantially more compute. This work ($g^{\text{car}}$), which sits in the middle, aims to improve the compute-light approximate guidance by adding extra compute.}
    \label{fig:guidance_spectrum}
\end{figure}

\subsection{Inference-time alignment}

Inference-time alignment methods for flow matching models refer to steering the generated samples toward some constraints, e.g., sampling from a distribution weighted with some objective function~\citep{lu2023contrastive} or conditioned on class labels~\citep{song2021scorebased}, and all constraints can be framed as reward functions $r : \mathbb{R}^d \to \mathbb{R}$. Approaches to inference-time reward alignment for flow models can be divided into three broad paradigms: 

\paragraph{Inference-time guidance.}
Inference-time guidance addresses the reward-tilted sampling problem by adding a
guidance vector field $g_t(x_t)$ to the pretrained velocity $v_\theta(x_t, t)$
during ODE integration, leaving the pretrained model unchanged.
A unified theoretical framework for this family was established by
\citet{feng2025guidance}, who derive the exact guidance vector field for general
flow matching:
\begin{equation}
g_t(x_t) = \mathbb{E}_{z \sim p(z|x_t)}\!\left[\!\left(
\tfrac{e^{r(x_1)}}{Z_t(x_t)} - 1\right) v_{t|z}(x_t|z)\right], \nonumber
\end{equation}
where $Z_t(x_t) = \mathbb{E}_{z\sim p(z|x_t)}[e^{r(x_1)}]$ is an intractable
normalising constant. Methods in this family differ in how they approximate $g_t$.

\emph{Approximate guidance} replaces the intractable posterior average with a point estimate via Tweedie's formula, adapting well-studied diffusion guidance methods including DPS \citep{chung2023diffusion}, $\Pi$GDM \citep{song2023pseudoinverse}, and LGD \citep{song2023loss} to the flow matching setting; \citet{feng2025guidance} unify these under the flow-matching extension $g^{\text{cov-G}}$. These methods are computationally lightweight but incur an approximation error, as we show in Section~\ref{sec:systematic_error}; see also \citet{feng2025guidance}.

\emph{Exact guidance} methods avoid this bias at the cost of additional computation. On the training-free side, Monte Carlo guidance ($g^{\text{MC}}$, \citealp{feng2025guidance}) estimates $g_t$ by drawing $N$ samples from the prior $p(z)$ and self-normalising; it is asymptotically exact but suffers from high variance, especially in high-dimensional spaces. GLASS-FKS \citep{holderrieth2026glass}, which steers GLASS flows via Feynman-Kac sampling, improves sampling efficiency but still inherits the high variance. On the training-based side, Guidance Matching \citep{feng2025guidance} learns a network $g_\psi$ to directly  approximate $g_t$ via tractable surrogate losses, which however require ground-truth samples sastify all constraints.

\paragraph{Optimization-based controlled generation.}
A second paradigm frames controlled generation as a direct optimization problem:
given a differentiable objective (cost or reward), one searches for an initial noise,
latent trajectory, or auxiliary variable that, after running the generative ODE/SDE,
produces a sample of high reward.
Representative methods \emph{differentiate through} the entire sampling ODE
to back-propagate reward gradients into the input space, including D-Flow
\citep{benhamu2024dflow}, FlowGrad \citep{liu2023flowgrad}, and source-guided
flow matching \citep{wang2025source}. These methods pursue a fundamentally
different objective from the guidance framework we adopt: rather than
sampling from the reward-tilted distribution $p'_1(x_1) \propto p^{\text{base}}_1(x_1)\,e^{r(x_1)}$,
they solve an optimization problem.

\paragraph{Reward fine-tuning.}
A third paradigm modifies the pretrained model weights to maximize the reward, based on GRPO \citep{liu2025flowgrpo}, stochastic optimal control \citep{domingo2024adjoint}, DPO \citep{wallace2024diffusion}, or other reinforcement learning approaches. They differ in how this optimization
problem is solved; for example, VGG-Flow \citep{liu2025value} fine-tunes the velocity field via a reward-importance-weighted flow matching loss, whereas Adjoint Matching \citep{domingo2024adjoint} back-propagates through the entire ODE trajectory using the continuous adjoint equations.
Many fine-tuning methods require DDPM/SDE sampling for exploration during training \citep{liu2025flowgrpo, domingo2024adjoint}, which is significantly less efficient than ODE sampling and couples the method to a specific reward at training time; adapting to a new reward requires retraining from scratch.
We instead focus on exploring how to best leverage the pretrained flow model at inference time, without any fine-tuning.

\subsection{Value gradient guidance}

A related line of work defines the guidance signal as the gradient of a learned value function $g(x_t, t) \triangleq \nabla_{x_t} V(x_t, t)$, with $V(x_t, t) \approx \mathbb{E}[r(x_1) \mid x_t]$. 
VGG-Flow \citep{liu2025value} instantiates this idea by co-training a value-gradient network with the fine-tuned velocity via an HJB consistency loss. As a fine-tuning method, however, VGG-Flow couples the model to a single fixed reward at training time and targets a different problem from the one we tackle. 
In contrast, $g^\text{car}$ focuses on off-manifold drift at inference time, redirecting trajectories back onto the data manifold via value-gradient guidance without modifying pretrained weights, and can be applied on top of any approximate guidance.

\clearpage
\section{Geometric interpretation: ``energy trap'' under gradient misalignment}
\label{sec:geometric}

Before the formal analysis in Appendix~\ref{app:systematic_error}, we provide a geometric interpretation of why gradient misalignment creates an ``energy trap'' in the guidance field.

\begin{figure}[t]
    \centering
    \includegraphics[width=\linewidth]{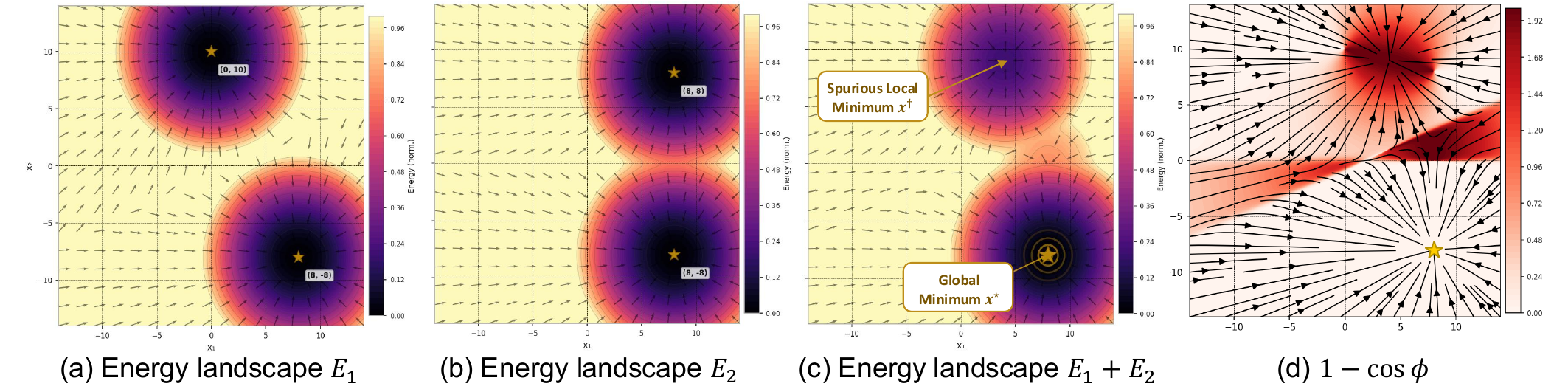} 
    \caption{\textbf{Spurious local minimum from gradient misalignment.}
    \textbf{(a, b)} Individual energy landscapes for two multi-modal reward functions, $E_j = -r_j$, each with two global minima (stars). One mode at $(8, -8)$ is shared, i.e., the $x_1^\star$.
    \textbf{(c)} The compositional energy landscape $E = E_1 + E_2 = -(r_1+r_2)$ has a \emph{spurious local minimum} $x^\dagger$ (top): $x^\dagger \neq x_1^\star$, and $x^\dagger$ maximizes neither any individual reward $r_j$ nor their sum.
    \textbf{(d)} The spurious local minimum coincides with the region of maximum gradient conflict (dark red, where $\nabla E_1 \approx -\nabla E_2$), where energy dissipation traps nearby trajectories rather than steering them to $x_1^\star$.
    }
    \label{fig:spurious_minima}
\end{figure}

\paragraph{Global optimum and spurious local minimum.}
Consider a compositional reward problem with $G$ reward functions $\{r_j\}_{j=1}^G$. The \textbf{global optimum} is expected to maximize all rewards:
\begin{equation}
x_1^\star \;=\; \arg\max_{x_1}\sum_{j=1}^G r_j(x_1).
\label{eq:reward_max_problem}
\end{equation}

From the energy-guided sampling perspective, the compositional energy landscape on the predicted terminal state $\hat{x}_1 = \mathbb{E}[x_1\mid x_t]$ is $E(\hat{x}_1) \triangleq -\sum_{j=1}^G r_j(\hat{x}_1)$, with per-reward guidance $g_j(x_t) \triangleq \nabla_{x_t} r_j(\hat{x}_1)$ and compositional guidance $g_t(x_t) \triangleq \sum_{j=1}^G g_j(x_t)$. The guided trajectory evolves as $\dot x_t = v_t^{\text{base}}(x_t) + g_t(x_t)$, and the sampler's terminal states lie in the set of stable equilibria of $E$,
\begin{equation}
\mathcal{S} \;\triangleq\; \Big\{ x : \nabla E(x) = 0,\;\; \mathrm{Hess}\,E(x) \succ 0 \Big\}.
\label{eq:stable_equilibria}
\end{equation}
By construction, $x_1^\star \in \mathcal{S}$: the global optimum is a stable equilibrium. In general, however, $\{x_1^\star\} \subsetneq \mathcal{S}$, formally:

\begin{definition}[Spurious local minimum]
\label{def:spurious_local_min}
A point $x^\dagger$ is a \emph{spurious local minimum} of the compositional energy $E$ if it is a stable equilibrium that is \emph{not} the global optimum:
\begin{equation}
x^\dagger \;\in\; \mathcal{S}\setminus\{x_1^\star\},
\quad\text{i.e.,}\quad
\nabla E(x^\dagger) = 0,\;\;\mathrm{Hess}\,E(x^\dagger) \succ 0,\;\;\text{and}\;\; x^\dagger \neq x_1^\star.
\label{eq:spurious_definition}
\end{equation}
By construction, the compositional guidance vanishes at $x^\dagger$ ($\sum_{j=1}^G g_j(x^\dagger) = 0$), but $x^\dagger$ maximizes neither any individual reward $r_j$ nor their sum; the vanishing arises through \emph{destructive interference between non-zero reward gradients}~\citep{yu2020pcgrad} rather than through reward maximization.\footnote{From a probabilistic perspective, the log-density landscape $\sum_j r_j$ corresponds to a Product of Experts (PoE) formulation. A well-documented theoretical pathology of PoE and energy-based models is their propensity to generate \emph{spurious modes} --- unintended attractors that emerge between the unaligned peaks of the constituent distributions. In the optimization literature, these are formally referred to as \emph{spurious local minima}.}
\end{definition}

As the trajectory approaches the basin of any $x^\dagger$, it drifts off-manifold.

\paragraph{Energy dissipation under gradient misalignment.}
To characterize the mechanism that drives trajectories off-manifold, we quantify the effective driving force of the compositional guidance via its squared norm. Expanding $\|g_t(x_t)\|^2$ at any state $x_t$:
\begin{equation}
\Big\|\sum_{j=1}^G g_j(x_t)\Big\|^2
\;=\; \underbrace{\sum_j \|g_j(x_t)\|^2}_{\text{self-energy}}
\;+\; \underbrace{2\sum_{j<k} \|g_j(x_t)\|\,\|g_k(x_t)\|\,\cos\phi_{jk}(x_t)}_{\text{cross-energy}},
\label{eq:norm_expansion}
\end{equation}
where $\phi_{jk}(x_t)$ denotes the angle between $g_j(x_t)$ and $g_k(x_t)$. By the triangle inequality, the maximum compositional guidance is $\big(\sum_j \|g_j(x_t)\|\big)^2$, realized if and only if all gradients are perfectly collinear ($\cos\phi_{jk}(x_t) = 1$ for all $j < k$). We define the deficit between this collinear capacity and the realized compositional guidance as the energy dissipation:

\begin{definition}[Energy dissipation under gradient misalignment]
\label{def:energy_dissipation}
For any state $x_t$, the \emph{energy dissipation} of the compositional guidance is
\begin{equation}
\Delta E(x_t)
\;\triangleq\;
\Big(\sum_j \|g_j(x_t)\|\Big)^2
\;-\;
\Big\|\sum_j g_j(x_t)\Big\|^2
\;=\;
2\sum_{j<k} \|g_j(x_t)\|\,\|g_k(x_t)\|\,\big(1 - \cos\phi_{jk}(x_t)\big)
\;\geq\; 0.
\label{eq:energy_loss}
\end{equation}
\end{definition}
$\Delta E(x_t) \geq 0$, with equality if and only if all reward gradients are perfectly aligned at $x_t$ ($\cos\phi_{jk}(x_t) \equiv 1$). As pairwise gradient misalignment grows, the compositional guidance energy structurally dissipates: the trajectory loses its driving force and becomes trapped at a spurious local minimum of the energy landscape (as visualised in Figure \ref{fig:app_gradient_regimes}(d)).

At the terminal time step, the trajectory enters the basin of a spurious local minimum $x^\dagger \in \mathcal{S}\setminus\{x_1^\star\}$, where $\sum_j g_j(x^\dagger) = 0$ and the dissipated energy is $\Delta E(x^\dagger) = 2\sum_{j<k} \|g_j(x^\dagger)\|\,\|g_k(x^\dagger)\|\,\big(1 - \cos\phi_{jk}(x^\dagger)\big)$.

\begin{figure}[t]
    \centering
    \includegraphics[width=\linewidth]{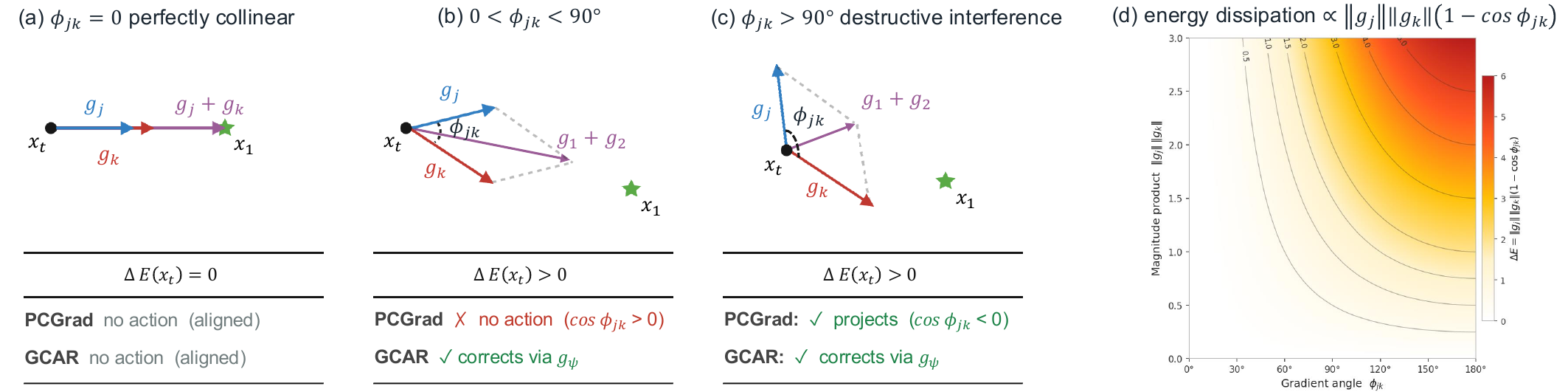}
    \caption{\textbf{Energy dissipation under gradient misalignment.}
    \textbf{(a)} When $\phi_{jk} = 0^\circ$, reward gradients are perfectly collinear, $\Delta E(x_t) = 0$, and no correction is needed.
    \textbf{(b)} When $0^\circ < \phi_{jk} < 90^\circ$, gradients are misaligned but remain in the same half-space. PCGrad detects no conflict ($\cos\phi_{jk} > 0$) and takes no action, yet $\Delta E(x_t) > 0$; our $g^{\text{car}}$ identifies this misalignment and corrects it.
    \textbf{(c)} When $\phi_{jk} > 90^\circ$, gradients undergo destructive interference. PCGrad intervenes via projection, whereas $g^{\text{car}}$ corrects the trajectory via learned residual guidance.
    \textbf{(d)} Energy dissipation $\Delta E \propto \|g_j\|\|g_k\|(1 - \cos\phi_{jk})$ scales with the gradient angle.}
    \label{fig:app_gradient_regimes}
\end{figure}

\paragraph{PCGrad and its structural limitations.}
PCGrad~\citep{yu2020pcgrad} is a widely adopted gradient-conflict resolution method for multi-objective optimisation, but it addresses only destructive interference ($\cos\phi_{jk} < 0$) and resolves it via gradient surgery, projecting the conflicting component of each gradient onto the normal plane of the other. In standard multi-task optimisation, with smooth losses and thousands of accumulated optimiser steps, a transiently weakened gradient update is easily recovered in subsequent iterations, and the composite gradient reliably descends the loss landscape.

In inference-time guided sampling, however, this logic breaks down. The guidance signal $g_t(x_t)$ must steer the trajectory toward $x_1$ at every timestep, and there are no future updates at a fixed state to recover the dissipated guidance energy. Whenever pairwise misalignment $1- \cos\phi_{jk}>0$, trajectory loses driving force and drifts toward a spurious equilibrium $x^\dagger$.

\clearpage
\section{Guided sampling and approximation error}
\label{app:systematic_error}

In this section, we analyze approximation errors utilizing the optimal coupling formulation and provide a detailed proof of the approximation error bound.

\subsection{Optimal coupling and triad decomposition}
Optimal guided sampling modifies the generation process by reweighting the latent coupling $\pi(z)$ to target a tilted distribution $p_1^\star(x_1) \propto p_1(x_1)e^{r(x_1)}$. 
In the context of the transport problem, a change in the target marginal $p_1^\star(x_1)$ implies a modification of the optimal transport plan. 
To quantify this discrepancy, we consider the measures $\pi$ and $\pi^\star$ on the latent space. Assuming absolute continuity of the tilted coupling $\pi^\star(\cdot \mid x_1)$ with respect to the base coupling $\pi(\cdot \mid x_1)$ (i.e., $\pi^\star \ll \pi$), the Radon-Nikodym derivative exists and we term this derivative the coupling shift:
\begin{equation}
\mathcal{P}(z) \triangleq \frac{d\pi^\star(\cdot \mid x_1)}{d\pi(\cdot \mid x_1)}(z).
\end{equation}
Formally, conditioned on a fixed terminal state $x_1$, $\mathcal{P}(z)$ measures the relative density shift between the conditional distribution of the true optimal transport plan $\pi^\star(\cdot \mid x_1)$ and the pre-trained base coupling $\pi(\cdot \mid x_1)$.
If we were to simply perform Bayesian reweighting (as in standard classifier guidance), $\mathcal{P}(z) \equiv 1$. However, enforcing optimality in the transport cost introduces a shift $\mathcal{P}(z) \neq 1$.

Recall the definition of the conditional probability $\pi(z) = \pi(x_0 \mid x_1) p_1(x_1)$. Substituting the target marginal $p_1^\star(x_1) \propto p_1(x_1)e^{r(x_1)}$ and the coupling shift $\mathcal{P}(z) = \frac{\pi^\star(x_0 \mid x_1)}{\pi(x_0 \mid x_1)}$, we derive the triad decomposition (Equation~\eqref{eq:triad_decomposition} in main text) as follows:
\begin{align}
    \pi^\star(z) 
    &= \pi^\star(x_0 \mid x_1) \, p_1^\star(x_1) \nonumber \\
    &= \left[ \mathcal{P}(z) \cdot \pi(x_0 \mid x_1) \right] \cdot \left[ \frac{1}{\mathcal{Z}^\star} p_1(x_1) e^{r(x_1)} \right] \nonumber \\
    &= \frac{1}{\mathcal{Z}^\star} \cdot
    \underbrace{\mathcal{P}(z)}_{\text{\textcolor{blue1}{Coupling}}} \cdot 
    \underbrace{e^{R(z)}}_{\text{\textcolor{purple1}{Reward}}} \cdot 
    \underbrace{\pi(z)}_{\text{\textcolor{green1}{Base Prior}}},
    \label{eq:app_optimal_coupling_derivation}
\end{align}
where $R(z) \triangleq r(\Psi_1(z))$ denotes the trajectory-level reward, and $\pi(z) = \pi(x_0 \mid x_1) p_1(x_1)$. Note that, in a guided transport problem, the source distribution must remain anchored to the pre-defined prior (e.g., standard Gaussian) to ensure tractable inference; Thus, $\mathcal{P}(z)$ acts as a structural correction term: it represents the necessary re-organization of the transport plan, specifically the shift in the conditional $\pi(x_0 \mid x_1)$, necessary to satisfy the new target boundary $p_1^\star$ while simultaneously preserving the fixed source marginal $p_0$.

\paragraph{Two-stage approximation.}
As outlined in the main text, practical guided sampling simplifies Equation~\eqref{eq:app_optimal_coupling_derivation} via two approximations.
\[
\pi^\star
\xrightarrow[\mathcal{P}(z) \approx 1]{\text{Coupling-Invariant}}
\pi^{\mathrm{CI}}
\xrightarrow[\hat{V}(x_t)]{\text{Local Approx.}}
\pi^{\text{approx}}.
\]
First, we have the \textbf{Coupling-Invariant Approximation (CIA)}, which assumes the conditional transport $\pi(x_0|x_1)$ remains unchanged, i.e., $\mathcal{P}(z)\equiv 1$:
\begin{equation}
    \pi^{\mathrm{CI}}(z) = \frac{e^{R(z)}}{\mathcal Z^{\mathrm{CI}}}\,\pi(z),
    \qquad
    \mathcal Z^{\mathrm{CI}}=\mathbb E_{\pi(z)}[e^{R(z)}].
    \label{eq:ci_guidance}
\end{equation}
The coupling shift $\mathcal{P}(z)$ is required to anchor the transport to the fixed prior $p_0$. By assuming $\mathcal{P}(z) \equiv 1$, the Coupling-Invariant Approximation theoretically shifts the optimal source to a reweighted density $p_0^{\text{bias}}(x_0) \propto p_0(x_0)\mathbb{E}[e^{r(x_1)}|x_0]$. Since inference restricts sampling to $p_0$ rather than $p_0^{\text{bias}}$, a boundary mismatch arises. This discrepancy leads to off-manifold drift and error accumulation.

Second, to make the guidance realizable at any time step $t$, we apply a \textbf{Localized Approximation}.
We approximate the trajectory reward $V(z)$ using a first-order Taylor expansion around the expected future state $\hat{x}_1 = \mathbb{E}[x_1|x_t]$: $\hat{V}(z) \approx V(x_t) + \nabla V(x_t)^\top (x_1 - \hat{x}_1)$. Since the constant terms cancel out during normalization, this results in the realizable coupling measure driven by the gradient:
\begin{equation}
    \pi^{\text{approx}}(z) = \frac{e^{\hat{V}(z)}}{\hat{\mathcal Z}}\,\pi(z),
    \qquad
    \hat{\mathcal Z}=\mathbb E_{\pi(z)}[e^{\hat{V}(z)}].
    \label{eq:approximation_guidance}
\end{equation}
The effective guidance is thus determined solely by the gradient direction $\nabla V(x_t)$.\footnote{
    Formally, let $C_t \triangleq V(x_t) - \nabla V(x_t)^\top \hat{x}_1$ denote the terms constant with respect to $z$. The normalization implies:
    $
    \pi^{\text{approx}}(z \mid x_t) 
    = \frac{e^{C_t + \nabla V(x_t)^\top x_1} \pi(z \mid x_t)}{\int e^{C_t + \nabla V(x_t)^\top x_1} \pi(z \mid x_t) \, dz} 
    = \frac{e^{C_t} e^{\nabla V(x_t)^\top x_1} \pi(z \mid x_t)}{e^{C_t} \int e^{\nabla V(x_t)^\top x_1} \pi(z \mid x_t) \, dz} 
    = \frac{\cancel{e^{C_t}}}{\cancel{e^{C_t}}} \frac{e^{\nabla V(x_t)^\top x_1}}{\hat{\mathcal{Z}}} \pi(z \mid x_t).
    $
    This derivation shows that the guidance is driven purely by the gradient component $\nabla V(x_t)^\top x_1$, rendering the absolute magnitude of the value $V(x_t)$ irrelevant.
} 

Equation~\eqref{eq:app_optimal_coupling_derivation} explicitly demonstrates that the optimal coupling is governed by a triad of factors: the structural coupling shift (\textcolor{blue1}{$\mathcal{P}(z)$}), the trajectory reward (\textcolor{purple1}{$e^{R(z)}$}), and the base prior (\textcolor{green1}{$\pi(z)$}).
We formally derive the upper bound of the approximation error shortly. Before doing so, we revisit the two-stage approximation to clarify how it specifically targets these components: the coupling shift ($\mathcal{P}$) is neglected via the Coupling-Invariant assumption, and the trajectory reward ($e^{R(z)}$) is estimated via the Localized Approximation.

\subsection{Approximation error via the Benamou--Brenier theorem}
Flow models construct probabilistic transport plans that move mass from a source measure $p_0$ to a target measure $p_1$, and the squared Wasserstein distance $W_2^2(p_0, p_1)$ quantifies the minimal kinetic energy required for this transport. By the Benamou--Brenier theorem:
\begin{equation}
    W_2^2(p_0, p_1^\star) 
    = \inf_{(p_t,\, v_t):\, p_0 \xrightarrow{v_t} p_1^\star} 
      \int_0^1 \mathbb{E}_{x_t \sim p_t}\!\big[\|v_t(x_t)\|^2\big]\,dt
    = \int_0^1\!\mathbb{E}\!\left[\|v_t^{\mathrm{base}} + g_t^\star\|^2\right]dt,
\end{equation}
where $g_t^\star$ is the optimal guidance field that steers mass toward the reward-tilted target $p_1^\star$. Under the two-stage approximation (CIA + Localized Approximation) for compositional rewards $R = \sum_j r_j$, $g_t^\star$ is replaced by the realized guidance $g_t^{\mathrm{approx}} = \sum_j g_j^{\mathrm{approx}}$, where each $g_j^{\mathrm{approx}} = \nabla_{x_t} r_j(\hat{x}_1)$ with $\hat{x}_1 = \mathbb{E}[x_1 \mid x_t]$. The realized field $\hat{v}_t = v_t^{\mathrm{base}} + g_t^{\mathrm{approx}}$ therefore only transports $p_0$ to $\hat{p}_1 \neq p_1^\star$.

We quantify the resulting approximation error $\mathcal{E} \triangleq W_2^2(\hat{p}_1, p_1^\star)$ using the stability of the continuity equation~\citep{villani2009optimal}, which bounds the terminal distributional discrepancy by the time-integrated squared velocity field difference along the optimal path $p_t^\star$:
\begin{equation}
\label{eq:wasserstein_bound_step1}
    \mathcal{E} \triangleq W_2^2(\hat{p}_1, p_1^\star)
    \;\leq\;
    \int_0^1\!\mathbb{E}_{x_t \sim p_t^\star}\!
    \left[\|v_t^\star(x_t) - \hat{v}_t(x_t)\|^2\right]dt
    = \int_0^1\!\mathbb{E}_{x_t \sim p_t^\star}\!
    \left[\|g_t^\star - g_t^{\mathrm{approx}}\|^2\right]dt.
\end{equation}

Since compositional guided sampling sums per-reward gradients directly, $g_t^{\mathrm{approx}} = \sum_j g_j^{\mathrm{approx}}$, the $\sum_j g_j^{\mathrm{CI}}$ (distinct from $g_t^{\mathrm{CI}}$ for $R$) sits naturally between $g_t^{\mathrm{CI}}$ and $g_t^{\mathrm{approx}}$. Young's inequality $\|a + b\|^2 \leq 2\|a\|^2 + 2\|b\|^2$ along the chain
\[ g_t^\star \;\to\; g_t^{\mathrm{CI}} \;\to\; \textstyle\sum_j g_j^{\mathrm{CI}} \;\to\; g_t^{\mathrm{approx}}, \]
yields three terms: the CIA step (first arrow), the Localized step (third), and an analytical decomposition (middle) specific to the compositional setting:
\begin{align}
\mathcal{E}
&\leq \int_0^1\!\mathbb{E}_{x_t \sim p_t^\star}\!
\left[\|g_t^\star - g_t^{\mathrm{approx}}\|^2\right]dt
\nonumber\\[4pt]
&\leq \int_0^1\!\mathbb{E}_{x_t \sim p_t^\star}\!\Bigg[2\|g_t^\star - g_t^{\mathrm{CI}}\|^2
    +
2\left\|g_t^{\mathrm{CI}} - \textstyle\sum_j g_j^{\mathrm{CI}}\right\|^2
    +
2\left\|\textstyle\sum_j g_j^{\mathrm{CI}} 
    - g_t^{\mathrm{approx}}\right\|^2
\Bigg]dt
\nonumber\\[4pt]
&\lesssim
\underbrace{C_{\mathrm{CI}}\!\int_0^1\!\mathbb{E}\!\left[
    \mathbb{E}_{\pi^{\mathrm{CI}}}\!\big[(\mathcal{P}(z)-1)^2\big]
\right]dt}_{\text{(A) Coupling shift error}}
+
\underbrace{\mathcal{K}_{\mathrm{deficit}}}_{\text{(B) gradient misalignment error}}
+
\underbrace{G\!\int_0^1\!\mathbb{E}\!\left[
    \left(\frac{\lambda_h\sigma_1 d}{e^{r(\hat{x}_1)}}\right)^{\!2}
    (C_1+C_2)
\right]dt}_{\text{(C) Localized approximation error~\citep{feng2025guidance},\ scaled by }G}.
\label{eq:full_gap}
\end{align}
Each term corresponds to one step in the approximation chain. 
We detail each component below.

\paragraph{(A) Coupling shift error.}
Term~(A) arises from replacing $g_t^\star$ with $g_t^{\mathrm{CI}}$ under the Coupling-Invariant Approximation, $\mathcal{P}(z) := \frac{d\pi^\star(\cdot|x_1)}{d\pi(\cdot|x_1)}(z) \approx 1$, i.e., assuming the conditional transport plan requires no reorganisation when the target shifts from $p_1$ to $p_1^\star$. By Cauchy--Schwarz:
\begin{align}
\|g_t^\star(x_t) - g_t^{\mathrm{CI}}(x_t)\|_2^2 
&= \left\|\mathbb{E}_{z \sim \pi^{\mathrm{CI}}(\cdot|x_t)}\big[(\mathcal{P}(z)-1)\,v_{t|z}(x_t|z)\big]\right\|_2^2 \nonumber\\
&\leq \mathbb{E}_{z \sim \pi^{\mathrm{CI}}(\cdot|x_t)}\big[(\mathcal{P}(z)-1)^2\big]
   \cdot \mathbb{E}_{z \sim \pi^{\mathrm{CI}}(\cdot|x_t)}\big[\|v_{t|z}(x_t|z)\|_2^2\big].
\end{align}
Term~(A) is small when the coupling shift is negligible ($\mathcal{P}(z) \approx 1$), which holds for flow matching methods with dependent couplings such as mini-batch OT-FM~\citep{tong2023improving}, but not for vanilla OT-FM~\citep{onken2021ot}.

\paragraph{(B) Gradient misalignment error.}
We analyze Term~(B) in two cases:
(B1) When $G = 1$ or $\cos\phi_{jk} = 1$ for all pairs, Term~(B) $= 0$. (B2) When $G > 1$ and $\cos\phi_{jk} < 1$ for some pair, Term~(B) $> 0$.

In case (B2), we further quantify Term~(B) by showing how the pointwise energy dissipation accumulates along the sampling trajectory, ultimately trapping it at a spurious local minimum at the terminal time step (Definition~\ref{def:spurious_local_min}).

\begin{proposition}[Trajectory-level energy dissipation]
\label{prop:energy_deficit}
For $G \ge 2$, let $\cos\phi_t(x_t) := \frac{2}{G(G-1)}\sum_{j<k}\cos\phi_{jk}(x_t)$
denote the average pairwise cosine similarity at state $x_t$. Under the
assumption $\|g_j^{\mathrm{CI}}\| \approx \mu$ for all $j$, the
trajectory-level energy deficit is
\begin{align}
\label{eq:energy_deficit}
    \mathcal{K}_{\mathrm{deficit}}
    &\;\triangleq\; \int_0^1\!\mathbb{E}_{x_t \sim p_t^\star}\!\left[\Delta E(x_t)\right]dt
    \;=\; G(G-1)\,\mu^2 \int_0^1\!\mathbb{E}_{x_t \sim p_t^\star}\!\left[1-\cos\phi_t(x_t)\right]dt
    \;\geq\; 0.
\end{align}
$\mathcal{K}_{\mathrm{deficit}} = 0$ iff $\cos\phi_t \equiv 1$ almost everywhere, recovering case (B1). As conflict grows ($\cos\phi_t \to -1$), $\mathcal{K}_{\mathrm{deficit}}$ increases linearly in $(1-\cos\phi_t)$ and quadratically in $G$ through $G(G-1)\mu^2$. This dissipated energy is structurally unavoidable under additive guidance and must be explicitly supplied by a corrective field $g_\psi$ to escape the basins of spurious local minima (Definition~\ref{def:spurious_local_min}).
\end{proposition}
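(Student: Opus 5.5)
The argument is essentially algebraic: start from the pointwise identity in Definition~\ref{def:energy_dissipation}, substitute the equal-norm assumption, rewrite the pairwise sum through the average cosine, and then integrate along the optimal path $p_t^\star$. Throughout, $\Delta E(x_t)$ is evaluated on the coupling-invariant per-reward fields $g_j^{\mathrm{CI}}$, since the assumption $\|g_j^{\mathrm{CI}}\|\approx\mu$ is made on these.

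\textbf{Step 1 (pointwise identity).} From \eqref{eq:norm_expansion} and the expansion $\big(\sum_j\|g_j\|\big)^2=\sum_j\|g_j\|^2+2\sum_{j<k}\|g_j\|\|g_k\|$, the energy dissipation is
\[
\Delta E(x_t)=2\sum_{j<k}\|g_j(x_t)\|\,\|g_k(x_t)\|\,\big(1-\cos\phi_{jk}(x_t)\big),
\]
exactly as stated in \eqref{eq:energy_loss}.

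\textbf{Step 2 (equal norms and averaging).} Setting $\|g_j\|=\mu$ gives
\[
\Delta E(x_t)=2\mu^2\sum_{j<k}\big(1-\cos\phi_{jk}\big)=2\mu^2\Big(\tbinom{G}{2}-\sum_{j<k}\cos\phi_{jk}\Big).
\]
By the definition of $\cos\phi_t$, we have $\sum_{j<k}\cos\phi_{jk}=\tbinom{G}{2}\cos\phi_t$. Since $2\binom{G}{2}=G(G-1)$, this yields
\[
\Delta E(x_t)=G(G-1)\,\mu^2\,\big(1-\cos\phi_t(x_t)\big).
\]

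\textbf{Step 3 (integration).} Taking $\mathbb{E}_{x_t\sim p_t^\star}$ and integrating over $t\in[0,1]$ gives \eqref{eq:energy_deficit}. Because $\mu$ is a constant, it factors out of both the expectation and the integral. Measurability and integrability are inherited from the earlier setting, since $\Delta E$ is bounded by $4\binom{G}{2}\mu^2$.

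\textbf{Step 4 (nonnegativity and the equality case).} Nonnegativity follows from $\cos\phi_{jk}\le 1$. For the equality case, suppose $\mathcal{K}_{\mathrm{deficit}}=0$. The integrand $1-\cos\phi_t$ is nonnegative, so it must vanish for $p_t^\star$-a.e.\ $x_t$ and a.e.\ $t$. An average of numbers that are each at most $1$ equals $1$ only if every term equals $1$, so $\cos\phi_{jk}=1$ for all pairs, which recovers case (B1). The converse is immediate.

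\textbf{Step 5 (scaling claims).} The integrand is affine in $(1-\cos\phi_t)$ with coefficient $G(G-1)\mu^2$. This gives linear growth in the misalignment and quadratic growth in $G$, the latter because $G(G-1)\sim G^2$.

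The main obstacle is the approximate equal-norm assumption. If the norms $\|g_j\|$ differ, the clean factorization fails, and one would either define $\mu^2$ as a weighted average of $\|g_j\|\|g_k\|$ over pairs or state the result as an inequality with $\mu=\min_j\|g_j\|$ or $\max_j\|g_j\|$. I would present the identity under the stated assumption and remark that \enquote{$=$} becomes \enquote{$\approx$} otherwise.

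Linking $\mathcal{K}_{\mathrm{deficit}}$ to term (B) of Theorem~\ref{thm:systematic_error_main}, that is, bounding $\|g_t^{\mathrm{CI}}-\sum_j g_j^{\mathrm{CI}}\|^2$ by $\Delta E$, is not part of this proposition. I would not attempt it here.
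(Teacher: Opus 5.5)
Your proposal is correct and takes the same route as the paper. The paper gives no separate proof and treats the result as a direct consequence of the pairwise identity in Definition~\ref{def:energy_dissipation}: substitute $\|g_j^{\mathrm{CI}}\|=\mu$, write $\sum_{j<k}\cos\phi_{jk}=\binom{G}{2}\cos\phi_t$ with $2\binom{G}{2}=G(G-1)$, and integrate along $p_t^\star$. One point worth stating explicitly is that the ``only if'' direction of the equality case needs $\mu>0$, which is implicit anyway since the angles $\phi_{jk}$ are only defined for nonzero gradients.
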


Note that the squared discrepancy $\|g_t^{\mathrm{CI}} - \sum_j g_j^{\mathrm{CI}}\|^2$ arises solely from gradient misalignment\footnote{Expanding directly:
$\|g_t^{\mathrm{CI}} - \sum_j g_j^{\mathrm{CI}}\|^2 = \|g_t^{\mathrm{CI}}\|^2 - 2\langle g_t^{\mathrm{CI}}, \sum_j g_j^{\mathrm{CI}}\rangle + \|\sum_j g_j^{\mathrm{CI}}\|^2$.
Under the structural assumptions
(i) $\|g_t^{\mathrm{CI}}\|^2 = (\sum_j \|g_j^{\mathrm{CI}}\|)^2$ (aligned-ideal magnitude) and
(ii) $\langle g_t^{\mathrm{CI}}, \sum_j g_j^{\mathrm{CI}}\rangle = \|\sum_j g_j^{\mathrm{CI}}\|^2$ (full projection onto the sum direction),
the self-terms $\sum_j \|g_j^{\mathrm{CI}}\|^2$ cancel, leaving
$\|g_t^{\mathrm{CI}} - \sum_j g_j^{\mathrm{CI}}\|^2 = 2\sum_{j<k}\|g_j^{\mathrm{CI}}\|\|g_k^{\mathrm{CI}}\|(1-\cos\phi_{jk}) = \Delta E(x_t)$.}. Hence, Term~(B) in Eq.~\eqref{eq:full_gap} is bounded by the energy deficit:
\begin{equation}
2\int_0^1\!\mathbb{E}\!\left[\left\|g_t^{\mathrm{CI}} - \textstyle\sum_j g_j^{\mathrm{CI}}\right\|^2\right]dt
\;\lesssim\; \mathcal{K}_{\mathrm{deficit}},
\end{equation}
where absolute constants are absorbed into the $\lesssim$ symbol. In case (B1), $\mathcal{K}_{\mathrm{deficit}} = 0$.

\paragraph{(C) Localized approximation error.}
The third term arises from replacing each $g_j^{\mathrm{CI}}$ with its first-order Taylor estimate $g_j^{\mathrm{approx}}$ around $\hat{x}_1$. This linearization error has been analysed in detail by \citet{feng2025guidance} for the single-reward setting. For the compositional setting, Cauchy--Schwarz across the $G$ rewards yields
\begin{equation}
    \left\|\sum_j g_j^{\mathrm{CI}} - g_t^{\mathrm{approx}}\right\|^2
    \;\leq\; G\sum_j\|\delta g_j\|^2
    \;\lesssim\; G\left(\frac{\lambda_h\,\sigma_1\,d}{e^{r(\hat{x}_1)}}\right)^{\!2}(C_1+C_2),
\end{equation}
i.e., \citet{feng2025guidance}'s per-reward bound scaled linearly by $G$. Term~(C) decreases when the reward is smooth (small $\lambda_h$), near $t \to 1$ (small $\sigma_1$), or $\hat{x}_1$ lies in a high-reward region (large $e^{r(\hat{x}_1)}$).

\paragraph{Put it together.}
\begin{theorem}[Upper bound of Approximation Error in Compositional Reward Setting]
\label{thm:unified_bound}
Let $v_t^\star$ be the exact guided velocity field and $\hat{v}_t$ be 
the realized field under the Coupling-Invariant Approximation and 
Localized Approximation. The total approximation error 
$\mathcal{E} \triangleq W_2^2(\hat{p}_1, p_1^\star)$ satisfies:
\begin{align}
\mathcal{E}
\;\lesssim\;
&\underbrace{C_{\mathrm{CI}}\!\int_0^1\!\mathbb{E}\!\left[
    \mathbb{E}_{\pi^{\mathrm{CI}}}\!\big[(\mathcal{P}(z)-1)^2\big]
\right]dt}_{\text{(A) coupling shift error}}
\nonumber\\[2mm]
+\;
&\underbrace{G(G-1)\mu^2
\int_0^1\!\mathbb{E}\!\left[1-\cos\phi_t(x_t)\right]dt
}_{\text{(B) gradient misalignment error (Proposition~\ref{prop:energy_deficit})}}
\nonumber\\[2mm]
+\;
&\underbrace{G\!\int_0^1\!\mathbb{E}\!\left[
    \left(\frac{\lambda_h\,\sigma_1\,d}{e^{r(\hat{x}_1)}}\right)^{\!2}
    (C_1+C_2)
\right]dt}_{\text{(C) localized approximation error \citep{feng2025guidance}}},
\label{eq:unified_bound}
\end{align}
where $C_{\mathrm{CI}} := \sup_{t \in [0,1]} \mathbb{E}_{\pi^{\mathrm{CI}}}[\|v_{t\mid z}\|_2^2]$ only depends on the base velocity field, $\cos\phi_t(x_t) := \frac{2}{G(G-1)}\sum_{j<k}\cos\phi_{jk}$ is the average pairwise cosine similarity at $(x_t, t)$, $\mu := \|g_j^{\mathrm{CI}}\|$ is the per-reward CI guidance magnitude. Following \citet{feng2025guidance}, $\lambda_h$ is the spectral norm of the Hessian of $e^{r}$, $\sigma_1$ is the spectral norm of the conditional covariance $\Sigma_{1|t}$, and $C_1, C_2$ are constants depend on base flow.
The error bound provides four key insights into the realized guidance $\hat g_t$:
\begin{enumerate}
    \item The error is small when the \textbf{reward landscape is smooth}, i.e., small $\lambda_h = \|\nabla^2 e^{r}\|_2$. A flat reward landscape without sharp peaks or valleys implies less aggressive curvature, thereby minimizing the linearization error in Term (C).
    
    \item The error is small when \textbf{$\sigma_1$ is small}, i.e., the conditional covariance $\Sigma_{1|t}$ has small spectral norm, meaning that at the current state $x_t$ the uncertainty about the terminal point $x_1$ is low. This is the case when the flow time $t \to 1$ (and $\sigma_t \to 0$), where $x_t$ reliably predicts $x_1$.
    
    \item The \textbf{magnitude of $e^{r(\hat{x}_1)}$} reflects how well the predicted endpoint $\hat{x}_1 = \mathbb{E}[x_1|x_t]$ matches the reward 
    objective. If $\hat{x}_1$ lies inside the region where $r$ is large, the approximate guidance is more accurate, as the optimization is conducted locally and the gradient reflects the landscape well. If $e^{r(\hat{x}_1)}$ is small, the gradient explores the sample space almost randomly, producing larger approximation error.
    
    \item The error scales with \textbf{the number of reward functions $G$} and \textbf{gradient misalignment $(1 - \cos\phi)$}.
\end{enumerate}
\end{theorem}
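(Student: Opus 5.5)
The bound follows from a stability estimate for the continuity equation, followed by a three-link telescoping of the guidance error. Both the exact field $v_t^\star = v_t^{\mathrm{base}} + g_t^\star$ and the realized field $\hat v_t = v_t^{\mathrm{base}} + g_t^{\mathrm{approx}}$ transport the same source $p_0$. Couple the two flows synchronously and apply Grönwall, with the Lipschitz constant of the base field absorbed into $\lesssim$. This gives
\[
\mathcal{E}=W_2^2(\hat p_1,p_1^\star)\lesssim \int_0^1 \mathbb{E}_{p_t^\star}\big[\|g_t^\star-g_t^{\mathrm{approx}}\|^2\big]\,dt .
\]
The base velocity cancels exactly, so only the guidance discrepancy remains. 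Next, introduce the intermediate fields $g_t^{\mathrm{CI}}$ and $\sum_j g_j^{\mathrm{CI}}$ along the chain $g_t^\star\to g_t^{\mathrm{CI}}\to\sum_j g_j^{\mathrm{CI}}\to g_t^{\mathrm{approx}}$. Applying $\|a+b+c\|^2\le 3(\|a\|^2+\|b\|^2+\|c\|^2)$ splits the integrand into three pieces, each treated separately.

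For piece (A), write both $g_t^\star$ and $g_t^{\mathrm{CI}}$ as posterior averages of $v_{t|z}$ using \eqref{eq:exact_guidance}. Their difference is $\mathbb{E}_{\pi^{\mathrm{CI}}(\cdot|x_t)}[(\mathcal{P}(z)-1)v_{t|z}]$, up to the normalizer ratio, which I treat as $1+O(\mathcal{P}-1)$ and absorb into the constant. Cauchy–Schwarz then bounds this by $\mathbb{E}_{\pi^{\mathrm{CI}}}[(\mathcal{P}-1)^2]\cdot\mathbb{E}_{\pi^{\mathrm{CI}}}\|v_{t|z}\|^2$. Taking the supremum over $t$ of the second factor defines $C_{\mathrm{CI}}$.

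For piece (C), write $\delta g_j = g_j^{\mathrm{CI}}-\nabla r_j(\hat x_1)$ and apply Cauchy–Schwarz over the $G$ summands: $\|\sum_j\delta g_j\|^2\le G\sum_j\|\delta g_j\|^2$. Then invoke the single-reward linearization bound of \citet{feng2025guidance} for each $j$. That bound involves $\lambda_h$, $\sigma_1$, $d$ and $e^{r(\hat x_1)}$, and controls the Taylor remainder of $e^{r}$ around $\hat x_1$ under the posterior covariance $\Sigma_{1|t}$. This yields the $G(\lambda_h\sigma_1 d/e^{r(\hat x_1)})^2(C_1+C_2)$ term. The factor $G$ comes from $G$ rewards times the $G$ from Cauchy–Schwarz, normalized per reward. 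I would state the per-reward constants uniformly, which the bound of \citet{feng2025guidance} already provides.

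The main obstacle is piece (B), $\|g_t^{\mathrm{CI}}-\sum_j g_j^{\mathrm{CI}}\|^2$. In general $g^{\mathrm{CI}}$ for $R=\sum_j r_j$ is the gradient of $\log\mathbb{E}[\prod_j e^{R_j}]$, which is not the sum of the individual log-expectations. I would not attempt an exact identity. Instead I would model the exact compositional field as the ``aligned-ideal'' vector: its magnitude is $\sum_j\|g_j^{\mathrm{CI}}\|$ and it points along $\sum_j g_j^{\mathrm{CI}}$. The intended interpretation is that the exact tilt does not lose driving force through cancellation. Under this structural assumption, expanding the square cancels the self-energies and leaves exactly the energy dissipation $\Delta E(x_t)$ of Definition~\ref{def:energy_dissipation}. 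Setting $\|g_j^{\mathrm{CI}}\|\approx\mu$ gives $\Delta E = 2\mu^2\sum_{j<k}(1-\cos\phi_{jk}) = G(G-1)\mu^2(1-\cos\phi_t)$. Integrating against $p_t^\star$ recovers Proposition~\ref{prop:energy_deficit}, i.e.\ term (B). I expect this modelling step to carry most of the rigor burden, so I would flag it explicitly as an assumption underlying the $\lesssim$. Summing (A), (B) and (C) and absorbing the factor $3$ gives \eqref{eq:unified_bound}.
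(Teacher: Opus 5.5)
Your overall route is the paper's. You reduce $\mathcal{E}$ to $\int_0^1\mathbb{E}_{p_t^\star}\|g_t^\star-g_t^{\mathrm{approx}}\|^2\,dt$ by a continuity-equation stability estimate. You then use the same chain $g_t^\star\to g_t^{\mathrm{CI}}\to\sum_j g_j^{\mathrm{CI}}\to g_t^{\mathrm{approx}}$, Cauchy--Schwarz for (A), and the per-reward bound of \citet{feng2025guidance} for (C). Your constant $3$ and your handling of the normalizer ratio in (A) are, if anything, more careful than the paper's.

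The step that fails as written is (B). Put $S:=\sum_j g_j^{\mathrm{CI}}$ and $A:=\sum_j\|g_j^{\mathrm{CI}}\|$. If, as you propose, $g_t^{\mathrm{CI}}=A\,S/\|S\|$, then $\langle g_t^{\mathrm{CI}},S\rangle=A\|S\|$ and
\[
\|g_t^{\mathrm{CI}}-S\|^2=A^2-2A\|S\|+\|S\|^2=(A-\|S\|)^2 .
\]
This is not $\Delta E=A^2-\|S\|^2$, because the self-energies do not cancel. For two orthogonal unit gradients you get $(2-\sqrt{2})^2\approx 0.34$ rather than $\Delta E=2$. Your model also leaves $g_t^{\mathrm{CI}}$ undefined where $S=0$, i.e.\ at the spurious equilibria this term is meant to capture.

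The paper's footnote shares your magnitude assumption $\|g_t^{\mathrm{CI}}\|=A$, but pairs it with $\langle g_t^{\mathrm{CI}},S\rangle=\|S\|^2$. That means $g_t^{\mathrm{CI}}=S+w$ with $w\perp S$ and $\|w\|^2=A^2-\|S\|^2$. This orthogonal surplus is exactly what makes the expansion collapse to $\Delta E$, and it is incompatible with collinearity unless $\Delta E=0$.

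Your conclusion can still be rescued in one line: since $\|S\|\le A$, you have $(A-\|S\|)^2\le(A-\|S\|)(A+\|S\|)=\Delta E$. But then term (B) is only a loose bound for your piece, which is second order in $1-\cos\phi_t$. So either adopt the paper's inner-product assumption or replace ``exactly'' by this inequality.

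Two smaller points. In (C), Cauchy--Schwarz gives $G\sum_j\|\delta g_j\|^2$, and a uniform per-reward bound then yields $G^2$, not $G$; ``$G$ rewards times the $G$ from Cauchy--Schwarz'' is $G^2$. The stated single factor of $G$ requires reading $(\lambda_h\sigma_1 d/e^{r(\hat x_1)})^2(C_1+C_2)$ as a bound on the aggregate $\sum_j\|\delta g_j\|^2$. The paper tacitly treats it that way as well, so say so rather than appeal to per-reward normalization.

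In your synchronous-coupling Gr\"onwall step, the velocity discrepancy is evaluated along $p_t^\star$. The Lipschitz constant that enters is therefore that of $\hat v_t=v_t^{\mathrm{base}}+g_t^{\mathrm{approx}}$, not of the base field alone. The constant hidden in $\lesssim$ thus depends on the guidance too.
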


%% file: 10_appendix_fitted_value.tex
\section{Guided sampling through the lens of fitted value evaluation} \label{appx:guidance_fitted_value_evaluation}

Unlike diffusion models, Flow models are governed by deterministic ODE processes. By leveraging this deterministic coupling and applying Jensen's inequality (or assuming the reward variance over the posterior is small), we approximate the soft value function with the expected return: $V(x_t, t) \approx \mathbb{E}_{z \sim \pi(z \mid x_t)} [r(x_1)]$. This simplification avoids the computational instability of the log-sum-exp operation while preserving the guidance direction.

However, a central challenge remains: the optimal guidance depends on the future endpoint $x_1 \sim p(x_1 \mid x_t)$, making analytical evaluation computationally prohibitive. We address this by introducing a value (reward-to-go) function $V(x,t)$ to summarize the expected terminal reward, modeled via the Bellman backup operator induced by the guided velocity field.

\begin{proposition}[Fitted Value Evaluation]
\label{prop:fitted_value_guidance_appx}
Let $\mathcal{F}$ denote the function class (e.g., neural networks) used to approximate the value function. We collect a dataset of transitions $\mathcal{D} = \{(x, t, r, x', t')\}$ generated under the current guided dynamics, where $t' = t + \Delta t$ and $r$ is the reward. The value function can be estimated empirically by minimizing a least-squares Bellman residual:
\begin{equation}
\hat{V}_{k+1}
=
\arg\min_{V \in \mathcal{F}}
\;
\mathbb{E}_{\mathcal{D}}
\left[
\big(
r + \gamma \hat{V}_k(x', t')
-
V(x, t)
\big)^2
\right].
\label{eq:appx_least_square_appx}
\end{equation}
subject to the boundary condition $\hat{V}_k(x, 1) \equiv r(x)$ for terminal states. Here, $\gamma \in (0, 1]$ is the discount factor, and $\hat{V}_k$ is the target from the previous iteration \footnote{For pure terminal optimization, $r=0$ and $\gamma=1$.}.
Equation~\eqref{eq:appx_least_square_appx} empirically approximates the Bellman backup operator $\mathcal{T}^{v'}$ using finite data and a function class $\mathcal{F}$. Upon convergence, the learnable guidance is derived as the gradient of the estimated value:
\begin{equation}
g(x_t, t) \;\triangleq\; \nabla_x \hat{V}(x_t, t),
\label{eq:appx_guidance_value_function}
\end{equation}
which provides a Markovian surrogate for the computationally expensive exact guidance (Equation~\eqref{eq:exact_guidance}).
\end{proposition}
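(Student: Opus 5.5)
The plan is to treat the guided flow as a finite-horizon, deterministic Markov decision process on the time grid $t_0=0<t_1<\dots<t_N=1$, with $t_{n+1}=t_n+\Delta t$. The transition is the Euler (or exact ODE) step of the guided velocity $v'_t$ from Eq.~\eqref{eq:additive_dynamics}, and the reward is purely terminal. The argument has three steps: (i) identify the value function $V(x,t)$ of Section~\ref{sec:systematic_error} (in its expected-return form $V(x_t,t)\approx\mathbb{E}_{z\sim\pi(z\mid x_t)}[r(x_1)]$ adopted at the start of this appendix) as the fixed point of a Bellman operator $\mathcal{T}^{v'}$; (ii) show that the least-squares problem in Eq.~\eqref{eq:appx_least_square_appx} is the empirical, $\mathcal{F}$-projected version of one application of $\mathcal{T}^{v'}$; (iii) argue that the iteration converges, and that the gradient of the limit is the Markovian guidance in Eq.~\eqref{eq:appx_guidance_value_function}.

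\textbf{Step (i): Bellman fixed point.} I would define
\[
(\mathcal{T}^{v'}U)(x,t) \triangleq \mathbb{E}\big[r + \gamma\, U(x',t') \,\big|\, x,t\big],
\]
where $x'$ is the state reached from $x$ after one step of $v'_t$. The boundary condition is $U(\cdot,1)=r$. Because the flow map $\Psi$ is deterministic and, in the terminal-only case, $r=0$ and $\gamma=1$, unrolling the recursion backward from $t=1$ gives $U(x,t_n)=r(\Psi_{1\leftarrow t_n}(x))$. Averaging over the posterior of the latent $z$ given $x_t$ recovers $V(x_t,t)=\mathbb{E}_{z\mid x_t}[R(z)]$. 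Existence and uniqueness of this fixed point follow from backward induction over the finite horizon, so no contraction argument is needed here.

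\textbf{Step (ii): regression implements the backup.} For a fixed target $\hat V_k$, the population minimizer over all measurable functions of $\mathbb{E}\big[(r+\gamma\hat V_k(x',t')-V(x,t))^2\big]$ is the conditional expectation $\mathbb{E}[r+\gamma\hat V_k(x',t')\mid x,t]=(\mathcal{T}^{v'}\hat V_k)(x,t)$. This is the standard fact that regression onto the mean minimizes squared loss. Restricting to $\mathcal{F}$ and replacing the population expectation by the empirical measure over $\mathcal{D}$ therefore yields
\[
\hat V_{k+1}=\Pi_{\mathcal{F},\mathcal{D}}\,\mathcal{T}^{v'}\hat V_k,
\]
where $\Pi_{\mathcal{F},\mathcal{D}}$ is the $L^2(\mathcal{D})$ projection onto $\mathcal{F}$. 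This is exactly the sense in which Eq.~\eqref{eq:appx_least_square_appx} \emph{empirically approximates} $\mathcal{T}^{v'}$.

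\textbf{Step (iii): convergence and the guidance.} Under realizability, meaning $\mathcal{T}^{v'}\mathcal{F}\subseteq\mathcal{F}$, and with $\mathcal{D}$ on-policy, I would run backward induction over the $N$ time slices. Because targets at $t_{n+1}$ are already exact, each iterate agrees with $V$ on one more slice, so after at most $N$ iterations $\hat V_k=V$ on the support of $\mathcal{D}$. Taking $\nabla_x$ of the limit, and assuming enough regularity of $\mathcal{F}$ to differentiate, gives Eq.~\eqref{eq:appx_guidance_value_function}. This gradient depends only on $(x_t,t)$ and is therefore Markovian. It matches $\nabla_{x_t}V$, which is the quantity the exact guidance in Eq.~\eqref{eq:exact_guidance} is driven by once the log-sum-exp is replaced by the expected return, as argued at the start of this appendix.

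\textbf{Main obstacle.} I expect step (iii) to be the hardest. Convergence of the projected iteration fails in general once $\mathcal{F}$ is not closed under $\mathcal{T}^{v'}$ or the data are off-policy, since $\Pi\mathcal{T}$ need not be a contraction. This is precisely the divergence phenomenon flagged for the deadly triad. I would therefore state the convergence claim conditionally, under realizability or an on-policy sampling distribution. As a fallback, I would give an error-propagation bound of the form
\[
\|\hat V_K - V\|\le\sum_{n}\varepsilon_n,
\]
where $\varepsilon_n$ is the per-iteration projection error. This bound holds because the finite horizon makes each error propagate at most once per slice. A further caveat is that closeness of values does not imply closeness of gradients, so the guidance statement additionally requires a Sobolev-type control on $\mathcal{F}$.
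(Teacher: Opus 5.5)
Your skeleton is the paper's: Bellman consistency for the current guided dynamics, the value as the fixed point of $\mathcal{T}^{v'}$, the least-squares problem as a projected empirical backup, and the guidance defined as $\nabla_x\hat{V}$. Your steps (ii) and (iii) are sharper than the paper's proof, which only asserts that repeating the regression (or its semi-gradient SGD form) ``yields a Bellman-consistent value approximation.''

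\textbf{The end of step (i) is false.} Your claim in step (iii) that $\nabla_x\hat{V}$ ``matches'' the quantity driving Eq.~\eqref{eq:exact_guidance} depends on it.
\begin{itemize}
\item With deterministic guided dynamics, the fixed point is $U(x,t)=r\big(\Psi^{v'}_{1\leftarrow t}(x)\big)$. This is the reward at the endpoint of the \emph{guided} ODE started from $x$, so there is nothing left to average.
\item The expected-return value $\mathbb{E}_{z\sim\pi(z\mid x_t)}[r(x_1)]$ instead averages over the \emph{base coupling posterior}. For independent or mini-batch couplings this posterior is not a point mass. The map $x_1=\Psi_1(z)$ used in Section~\ref{sec:systematic_error} only makes $R$ a deterministic function of $z$; it does not collapse $\pi(z\mid x_t)$.
\item The two values differ even with $g\equiv 0$. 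Take a symmetric two-mode target, $r(x)=x$, and a point slightly right of the middle at $t=1/2$. The ODE carries it to the right mode, while its posterior still puts comparable mass on both modes.
\end{itemize}
The paper sidesteps this deliberately. Just before its proof, it states that it does not rely on $g^\star$ to quantify optimality. It justifies $\nabla_x\hat{V}$ only as a tractable Markovian surrogate, namely the steepest-ascent direction of the reward-to-go under the current dynamics. Drop the identification and state the surrogate claim in that weaker form; nothing else in your argument uses it.

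\textbf{Completeness versus realizability.} The condition $\mathcal{T}^{v'}\mathcal{F}\subseteq\mathcal{F}$ is Bellman \emph{completeness}, not realizability. The label matters because Proposition~\ref{prop:fqe_divergence} says FVE can diverge even when $V\in\mathcal{F}$. Your backward induction genuinely needs completeness. It also needs on-policy data consisting of full trajectories, so that every next state $x'$ at which $\hat{V}_k$ is queried lies in the support of $\mathcal{D}$ or is terminal.
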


Notice that the definition $g(x_t, t) \triangleq \nabla_x \hat{V}(x_t, t)$ has also appeared in prior work \citep{liu2025value}, where it is motivated from an optimal control perspective in the context of controlled generation via differentiating through the ODE sampling process. Recall that our goal here is to estimate the Bellman backup operator $\mathcal{T}^{v'}$ (i.e., the generative dynamic from an intermediate state $x_t$ to the terminal state $x_1$). So we do not rely on $g^{\star}$ to quantify the optimality of the guidance term; instead, we use it purely as a tractable surrogate for the dependence of guidance on future states. Below, we provide a simple proof to justify this construction. 

\begin{proof}
We view the generative dynamics as a Markov process whose policy is given by the guided velocity
\(
v'(x,t) = v^{\mathrm{base}}(x,t) + g(x,t).
\)
Under this policy, we define a value (reward-to-go) function that summarizes the expected future reward induced by the guided dynamics. Specifically, the value function is required to satisfy Bellman consistency
\begin{equation}
V(x,t)
=
\mathbb{E}\!\left[
r + \gamma\, V(x', t')
\;\middle|\; x
\right],
\qquad
x' \sim \mathcal{T}^{v'}(\cdot \mid x),
\label{eq:bellman_consistency}
\end{equation}
where $\mathcal{T}^{v'}$ denotes the transition operator induced by the guided velocity field (stepping from $t$ to $t'$) and $\gamma \in (0,1]$ is a discount factor.  
In the generative setting considered here, the reward is sparse: $r = 0$ for all $t \in [0,1)$, and reward is accrued only at the terminal state $x_1$.

The value function $V^{v'}$ is thus characterized as a fixed point of the Bellman operator $\mathcal{T}^{v'}$, i.e., $V^{v'} = \mathcal{T}^{v'} V^{v'}$. One possible approach is to compute the Bellman backup operator $\mathcal{T}^{v'}$ by exhaustive bootstrapping. However, in high-dimensional state spaces, it is infeasible to enumerate or traverse all states. We therefore approximate the Bellman operator by data and function approximation, implemented via Fitted Value Evaluation (FVE).

Let $\mathcal{F}$ denote a function class used to approximate the value function, and let
\(
\mathcal{D} = \{(x^{(i)}, t^{(i)}, r^{(i)}, x'^{(i)}, t'^{(i)})\}_{i=1}^N
\)
be a dataset of one-step transitions collected from rollouts under the guided dynamics. We define the empirical Bellman backup
\begin{equation}
\widehat{\mathcal{T}}^{v'} V(x,t)
\;\triangleq\;
r + \gamma V(x', t'),
\qquad (x, t, r, x', t')\sim\mathcal{D}.
\end{equation}
Using this empirical operator, we perform fitted value evaluation (FVE) by iteratively projecting the Bellman backup onto $\mathcal{F}$:
\begin{equation}
V_{k+1}
=
\arg\min_{V \in \mathcal{F}}
\;
\mathbb{E}_{\mathcal{D}}
\Big[
\big(
r + \gamma V_k(x', t') - V(x, t)
\big)^2
\Big].
\label{eq:fqe_projection}
\end{equation}
This procedure yields an empirical approximation $V$ that is Bellman-consistent in expectation with respect to the guided dynamics.

When $\mathcal{F}$ is large (or infinite) and $V$ is parameterized as $V_\theta \in \mathcal{F}$, Equation~\eqref{eq:fqe_projection} is typically solved by stochastic optimization. In particular, treating the bootstrap target $y = r + \gamma V_{\theta_k}(x',t')$ as fixed, we minimize the squared regression error via a semi-gradient update:
\begin{equation}
\theta
\leftarrow
\theta
-
\alpha\,\nabla_\theta
\Big(
V_\theta(x,t)
-
\big[r + \gamma V_{\theta_k}(x',t')\big]
\Big)^2,
\label{eq:fqe_sgd}
\end{equation}
where $\alpha>0$ is the learning rate. Repeating Equation~\eqref{eq:fqe_projection} (or its stochastic variant Equation~\eqref{eq:fqe_sgd}) yields a Bellman-consistent value approximation for the guided dynamics.

The Bellman-consistent value function $V(x,t)$ summarizes the expected terminal reward attainable from the current state under the guided dynamics. 
Indeed, $\nabla_x V(x,t)$ points in the direction of steepest increase of the expected future reward, and therefore represents the locally optimal infinitesimal adjustment to the dynamics.
This observation provides a principled bridge between value estimation and guidance construction: rather than explicitly conditioning on future endpoints $x_1$, guidance can be implemented as a local ascent direction induced by the value function gradient.

Once a Bellman-consistent value function $\hat{V}$ is obtained, we define the guidance vector field as
\begin{equation}
g(x,t) \;\triangleq\; \nabla_x \hat{V}(x, t),
\end{equation}
which induces a local ascent direction in state space that maximally increases the expected terminal reward.
This construction yields a Markovian and tractable surrogate for the otherwise future-dependent guidance implied by exact importance weighting.
\end{proof}

This transformation effectively converts the intractable integral in Term~(C) (localized approximation error) of Theorem~\ref{thm:systematic_error_main} into a differentiable, Markovian vector field. However, Fitted Value Evaluation (FVE) can diverge even when theoretical conditions are met.

\begin{proposition}[Divergence of Fitted Value Evaluation]
\label{prop:fqe_divergence}
Fitted value evaluation (FVE) can diverge even when all of the following conditions hold:
\begin{enumerate}
\item The dataset is infinite, i.e., $|\mathcal{D}| = \infty$;
    \item The Bellman residual minimization is solved exactly at each iteration;
    \item The function class $\mathcal{F}$ is simple enough to be estimated, e.g., a one-dimensional linear function class $f_\theta(x) = \theta^\top \phi(x)$;
    \item The realizability assumption holds, i.e., the true value function satisfies $V \in \mathcal{F}$.
\end{enumerate}
\end{proposition}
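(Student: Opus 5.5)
The statement is an existence claim, so I will prove it by exhibiting an explicit counterexample. The natural candidate is the classical two-state construction of Tsitsiklis and Van Roy (the ``$w \to 2w$'' example), recast in the terminal-reward flow setting of Proposition~\ref{prop:fitted_value_guidance_appx}.

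\textbf{Construction.} Take two states $x_a, x_b$ with deterministic transitions $x_a \to x_b$ and $x_b \to x_b$. Take zero reward everywhere, so $r=0$, which matches the sparse-reward regime. The true value is $V \equiv 0$. Use the one-dimensional linear class $f_\theta(x) = \theta\,\phi(x)$ with $\phi(x_a)=1$ and $\phi(x_b)=2$. Then $V \in \mathcal{F}$ at $\theta=0$, so realizability (condition 4) and condition 3 both hold.

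\textbf{Data.} Take an off-policy data distribution that puts all its mass on the transition $(x_a, r{=}0, x_b)$. It is sampled infinitely often, so $|\mathcal{D}|=\infty$ and the empirical expectation equals the population one (condition 1).

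\textbf{Exact least-squares step.} With this data, the problem in Eq.~\eqref{eq:fqe_projection} becomes
\begin{equation*}
\theta_{k+1} = \arg\min_{\theta}\big(\gamma\,2\theta_k - \theta\cdot 1\big)^2 .
\end{equation*}
Its exact solution is $\theta_{k+1} = 2\gamma\theta_k$, which satisfies condition 2. The iteration therefore gives $\theta_k = (2\gamma)^k \theta_0$. This diverges for every $\gamma \in (1/2, 1]$ and every $\theta_0 \neq 0$, including $\gamma=1$, the setting used in the paper.

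\textbf{Richer distributions (optional).} I may also allow data weights $\mu_a$ on $x_a\to x_b$ and $\mu_b$ on $x_b \to x_b$. The exact solution is then
\begin{equation*}
\theta_{k+1} = \gamma\,\theta_k\,\frac{2\mu_a + 4\mu_b}{\mu_a + 4\mu_b}.
\end{equation*}
Divergence holds whenever this multiplier exceeds $1$, for example when $\mu_b$ is small relative to $\mu_a$. This makes the role of off-policy sampling explicit, since the on-policy stationary distribution concentrates on $x_b$ and gives a contraction.

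\textbf{Embedding in the flow setting.} To match the time-indexed flow setting, I let $x_a$ sit at time $t$ and $x_b$ at $t'$, with a self-loop time discretisation. Equivalently, I can argue that the proposition only requires the abstract FVE recursion, so the discrete chain is enough.

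\textbf{Main obstacle.} The delicate point is the boundary condition $\hat V_k(x,1)\equiv r(x)$, because a terminal state would pin the value and block the self-loop blow-up. I will handle this by placing both states at non-terminal times, so the bootstrap target at $x_b$ still uses $\hat V_k$. Alternatively, I can note that with $\gamma=1$ and a long horizon the intermediate $x_b$ layers reproduce the recursion before the terminal time is reached.
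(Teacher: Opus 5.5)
Your counterexample is correct. With all data on $x_a\to x_b$, the exact population regression gives $\theta_{k+1}=2\gamma\theta_k$. This diverges for $\gamma\in(1/2,1]$ and $\theta_0\neq 0$, while infinite data, exact minimisation, a one-dimensional linear class and realizability ($V\equiv 0=f_0$) all hold. Your weighted multiplier $\gamma(2\mu_a+4\mu_b)/(\mu_a+4\mu_b)$ is also right.

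The paper gives no proof of this proposition; it only attributes the phenomenon to the deadly triad (function approximation, off-policy data, bootstrapping). Your route is the classical explicit witness, the Tsitsiklis--Van Roy / Sutton--Barto ``$w\to 2w$'' example. Over the paper's appeal to folklore, it makes each leg of the triad visible: $\mathcal{F}$ is realizable but not Bellman-complete ($\mathcal{T}f_\theta$ equals $2\gamma\theta$ at both states), the data put no mass on $x_b$, and the target bootstraps from $\theta_k$.

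Your ``main obstacle'' is easier than you think. Since only $x_a\to x_b$ is sampled, the self-loop is irrelevant. In the time-indexed flow setting you can take $(x_a,t)\to(x_b,t')$ with $t<t'<1$, and the terminal boundary condition never enters. The ``long horizon'' alternative is unnecessary, because divergence is in the iteration index $k$, not in the horizon.

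One aside is slightly off. At the paper's $\gamma=1$, on-policy weighting $\mu_a=0$ gives multiplier exactly $1$, which is non-expansive rather than contracting. In a finite-horizon embedding with equal per-layer weights and a terminal successor of $x_b$, one gets $\theta_{k+1}=\tfrac{2\gamma}{5}\theta_k$, which converges. This confirms that off-policy weighting is the essential ingredient of your counterexample, consistent with the paper's deadly-triad explanation.
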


This phenomenon is commonly referred to as the \emph{deadly triad} in empirical deep reinforcement learning, which arises from the interaction of function approximation, off-policy data, and bootstrapping.  In the flow matching setting, however, the dynamics are deterministic and rewards are sparse (evaluated at terminal state $x_1$). We exploit this property to propose Terminal Value Regression, a method that directly fits the terminal reward. By removing the need for bootstrapping, this approach effectively breaks the deadly triad and ensures stable convergence.

\begin{proposition}[Terminal Value Regression]
\label{prop:terminal_reward_guidance_appx}
Let $\mathcal{F}$ denote a function class used to approximate the value function. 
We collect a dataset of terminal rollouts $\mathcal{D} = \{(x_t, t, x_1)\}$, where $x_1$ is the terminal state reached from $x_t$ by integrating the current guided dynamics. The value function is estimated by minimizing the following regression objective:
\begin{equation}
\hat{V}
=
\arg\min_{V \in \mathcal{F}}
\;
\mathbb{E}_{(x_t, t, x_1) \sim \mathcal{D}}
\Big[
\big(
r(x_1) - V(x_t, t)
\big)^2
\Big].
\label{eq:terminal_value_regression_appx}
\end{equation}
Unlike the bootstrapped target in Equation~\eqref{eq:appx_least_square_appx}, the terminal reward $r(x_1)$ serves as a stable, unbiased regression target, which is enabled by the deterministic nature of the flow.
\end{proposition}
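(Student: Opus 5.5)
The plan is to read Proposition~\ref{prop:terminal_reward_guidance_appx} as a statement about least-squares regression onto a conditional expectation. The key is that the flow ODE $\dot x = v'_t(x)$ is deterministic. First, I would use this to write $x_1 = \Psi^{v'}_{t\to 1}(x_t)$ for the guided flow map from time $t$ to $1$; this map exists and is unique under the Lipschitz assumption on $v'_t$ implicit throughout the paper. Second, I would observe that the Bellman consistency condition~\eqref{eq:bellman_consistency}, with $r=0$ on $[0,1)$, $\gamma=1$ and the terminal boundary condition $V(x,1)=r(x)$, telescopes along the trajectory. The fixed point is then $V^{v'}(x_t,t) = r(\Psi^{v'}_{t\to1}(x_t))$, and more generally $\mathbb{E}[r(x_1)\mid x_t,t]$ if $x_t$ is drawn stochastically and the data collection is randomized. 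Hence $r(x_1)$ is an unbiased sample of the target $V^{v'}(x_t,t)$: its conditional mean given $(x_t,t)$ equals the true value. With deterministic rollouts it has zero conditional variance.

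Next, I would apply the standard decomposition of squared loss. For any $V\in\mathcal F$,
\begin{equation*}
\mathbb{E}_{\mathcal D}\big[(r(x_1)-V(x_t,t))^2\big] = \mathbb{E}_{\mathcal D}\big[(V^{v'}(x_t,t)-V(x_t,t))^2\big] + \mathbb{E}_{\mathcal D}\big[\mathrm{Var}(r(x_1)\mid x_t,t)\big].
\end{equation*}
The cross term vanishes by the tower property. The second term does not depend on $V$, so the minimizer of~\eqref{eq:terminal_value_regression_appx} is the $L^2(\mathcal D)$ projection of $V^{v'}$ onto $\mathcal F$. Under realizability, $\hat V = V^{v'}$ exactly, $\mathcal D$-a.e. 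This shows the target is unbiased. It also gives stability: the objective is a single fixed convex functional in $V$, and it is strictly convex when $\mathcal F$ is linear. There is therefore no iteration $\hat V_k\mapsto\hat V_{k+1}$ and no operator composition whose contraction can fail.

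To contrast with Proposition~\ref{prop:fqe_divergence}, I would note that divergence of FVE comes from iterating the map $V\mapsto \Pi_{\mathcal F}\mathcal T^{v'}V$. The projection is taken in the norm of the data distribution, and $\mathcal T^{v'}$ is not a contraction in that norm, which is the off-policy mismatch. Terminal Value Regression replaces this iteration with a single projection $\Pi_{\mathcal F}V^{v'}$, because $\mathcal T^{v'}$ has already been unrolled exactly by integrating the ODE. This removes the bootstrapping leg of the triad.

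The main obstacle is making ``unbiased'' precise when the dataset is collected under an earlier guidance iterate while $V$ is meant to evaluate the current dynamics. I would handle this by restricting the claim to on-policy rollouts, where each $x_1$ is generated by the same $v'$ whose value is being estimated. If numerical integration is used, I would add a remark that the bias is bounded by the ODE discretization error, scaled by the Lipschitz constant of $r$.
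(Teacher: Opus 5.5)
Your proposal is correct and follows the paper's reasoning. The paper justifies the proposition only briefly: determinism of the guided ODE makes $r(x_1)$ an unbiased Monte Carlo target for the reward-to-go, and dropping bootstrapping removes the source of instability behind Proposition~\ref{prop:fqe_divergence}. Your Bellman telescoping to $V^{v'}(x_t,t)=r(\Psi^{v'}_{t\to 1}(x_t))$, your identification of the minimizer as the $L^2(\mathcal{D})$ best approximation of $V^{v'}$ in $\mathcal{F}$, and your on-policy caveat make precise what the paper leaves implicit.
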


Unlike fitted value evaluation, Equation~\eqref{eq:terminal_value_regression_appx} does not rely on bootstrapping and therefore avoids the instability associated with the deadly triad. Since the flow matching dynamics are deterministic, the terminal reward $r(x_1)$ serves as an unbiased Monte Carlo target for value estimation, yielding a stable procedure tailored to flow matching models.

%% file: 10_appendix_exp.tex
\clearpage
\section{Experimental details}
\label{app:exp}

\subsection{Parameterization: value function vs. vector field} \label{appx:curl_free}

While directly parameterizing the vector field is common in diffusion models~\citep{song2021train},  unconstrained neural vector fields are not guaranteed to be conservative (i.e., curl-free)~\citep{balcerak2025energy}. 
Therefore, in Equation \eqref{eq:weighted_guidance_matching_loss}, we explicitly parameterize the scalar value function $V(x_t,t)$ and derive the guidance via automatic differentiation $\nabla_{x_t} V_\psi(x_t)$, ensuring that the learned guidance corresponds to the gradient of a valid scalar reward landscape. 
Crucially, as shown in Figure~\ref{fig:app_value_gradient} (c-e), simply parameterizing $\nabla V(x_t,t)$ fails to rectify the off-manifold drift.

\begin{figure}[h]
    \centering
    \includegraphics[width=\linewidth]{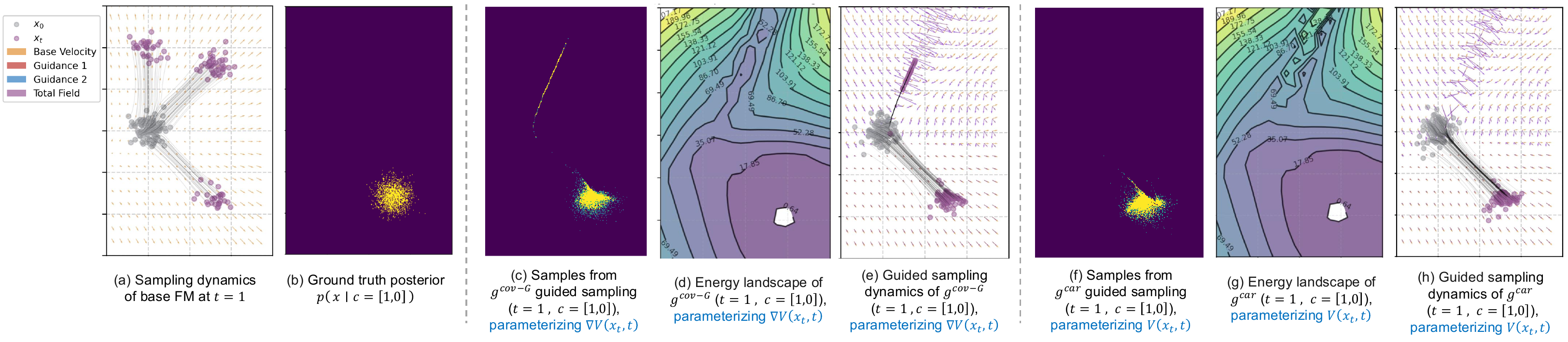}
    \caption{
    \textbf{Comparative empirical results on parameterization strategies.}
    We compare two architectures for learning the residual guidance:
    (Left: c--e) Directly parameterizing the unconstrained vector field (denoted as $\nabla V$). As shown in (d), this lack of structural constraint leads to a non-conservative field with a distorted, incoherent energy landscape, causing the ``energy trap'' and off-manifold drift in (e).
    (Right: f--h) Explicitly parameterizing the scalar value function $V$. By taking the gradient of a learned scalar $V$, we enforce the field to be curl-free by construction. This results in the smooth, globally consistent energy landscape in (g), effectively rectifying the drift as shown in (h).
    }
    \label{fig:app_value_gradient}
\end{figure}

Finally, to stabilize optimization when backpropagating through the parameterized value function $V(x_t,t)$, we apply gradient clipping to the derived gradients $\nabla V_\psi(x_t,t)$. This prevents exploding gradients, particularly in regions where the learned energy surface becomes steep or singular.

\subsection{Ablation: hard gate $\mathbb{I}_t$ and conflict threshold $\tau$}
\label{appx:ablation_conflict_threshold}

The $\tau$ controls the hard gate $\mathbb{I}_t$ in the training loss:
$$\mathcal{L}(\psi) = \mathbb{E}_{(x_t, t, x_1) \sim \mathcal{D}} \left[ \mathbb{I}_t \cdot \bigl( r(x_1) - V_\psi(x_t, t) \bigr)^2 \right]$$
and serves two purposes: (1) reducing unnecessary computation by skipping low-conflict regions, and (2) preserving $g^{\text{approx}}$ in those regions, where the approximate guidance is already accurate and adding a learned correction would introduce spurious perturbations. If $\tau$ is too small, neither purpose is met, as the gate activates almost everywhere. Conversely, too large a $\tau$ skips too many training steps, leaving $g_\psi$ under-trained.

We suggest that the threshold $\tau$ can be tuned according to the specific domain, and we empirically find that $\tau=0.2$ is a robust sweet spot for most of our evaluated tasks (Maze2D, CelebA-HQ image editing, and ManiSkill2). For the synthetic benchmark, we use $\tau=0.5$, which works better under its different conflict distribution. Therefore, we report experimental results using $\tau=0.2$ for real-world domains and $\tau=0.5$ for the synthetic benchmark. We show the ablation results of the conflict threshold $\tau$ in Figure~\ref{fig:ablation_tau}. A threshold that is too small (e.g., $\tau=0.0$) introduces spurious guidance in non-conflict regions (where approximation guidance $g^\text{approx}$ is already good enough), degrading performance (e.g., in the synthetic experiment, CS drops to $68.4\%$ vs.\ ${\sim}94\%$ for $\tau \in [0.2, 0.5]$). Conversely, excessively high thresholds (e.g., $\tau=0.8$) skip too many updates, leaving $g_\psi$ under-trained.

\begin{figure}[h]
    \centering
    \includegraphics[width=\linewidth]{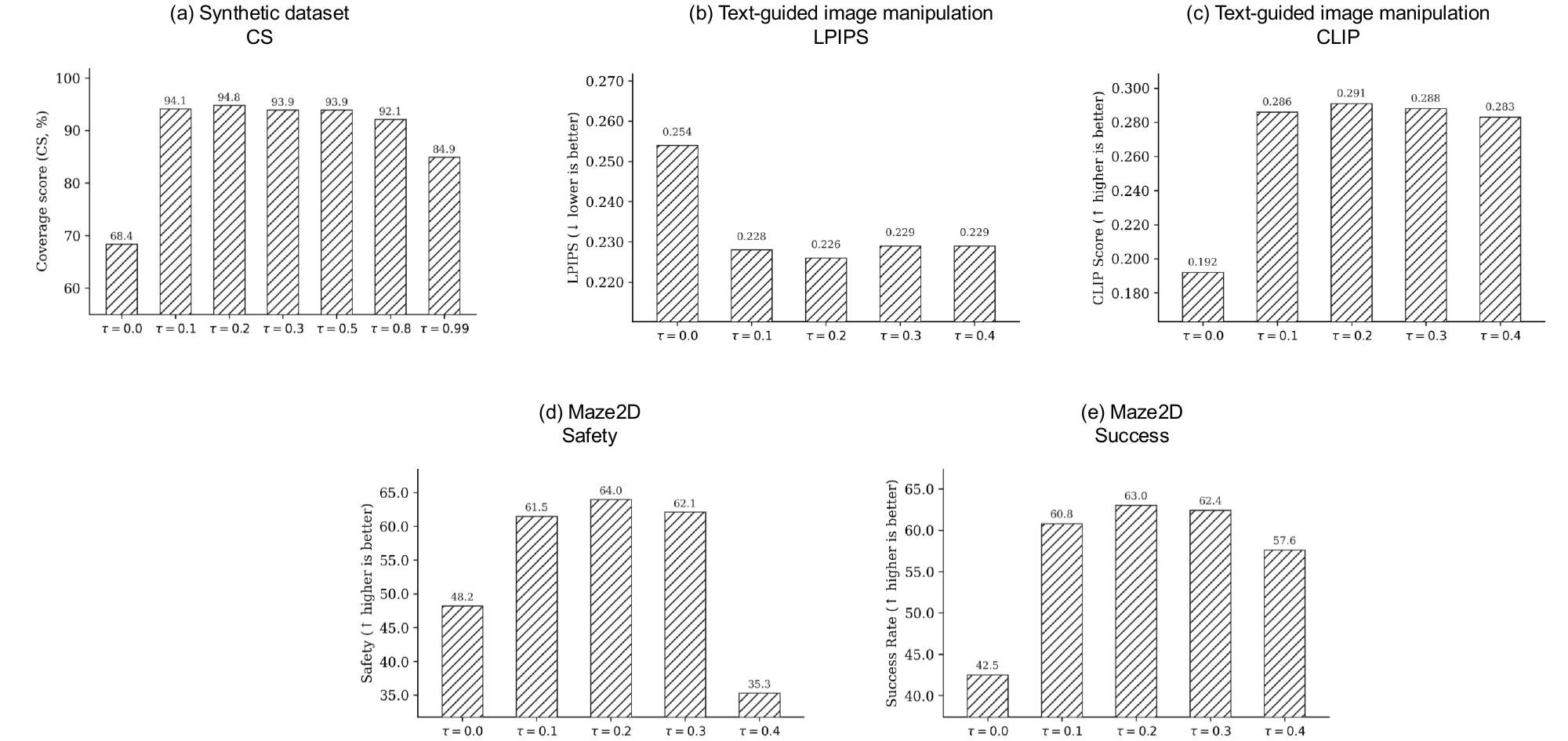}
    \caption{Ablation results on the conflict threshold $\tau$.}
    \label{fig:ablation_tau}
\end{figure}

\subsection{Ablation: learned correction $g_\psi$ and conflict-aware weight $w_t$}
\label{appx:ablation_conflict_weight}

Our method $g^{\text{car}}$ integrates a learned correction $g_\psi(x_t,t)$
and a conflict-aware weight $w_t$ into the guided velocity field:
\begin{align}
    v'_t(x_t,t) &\;=\; v^\text{base}_t(x_t,t) \;+\; g^{\text{car}}(x_t, t), \nonumber\\
    g^{\text{car}}(x_t, t) &\;=\; (1 - w_t)\, g^{\text{approx}} \;+\; w_t\, g_\psi(x_t,t). \nonumber
\end{align}
To understand the contribution of each component, we ablate $g_\psi$ and $w_t$ independently. Table~\ref{tab:ablation_design} summarizes the three ablation studies, and Figure~\ref{fig:app_component_ablation} reports the quantitative results on the synthetic benchmark.

Figure~\ref{fig:app_component_ablation} reports results across three domains.
Adding $g_\psi$ without the gate to $g^\text{cov-G}$ yields modest gains
(synthetic CS $+0.5$pp, CelebA-HQ LPIPS $-0.014$, Maze2D success $+5$),
confirming that the learned correction provides a useful residual signal to maximize rewards.
Constraining the correction $g_\psi$ to conflict regions via adding $w_t$ gives much larger improvements (synthetic CS $+9.8$pp, PC $+3.5$pp; CelebA-HQ LPIPS $-0.021$,
CLIP $+0.011$; Maze2D safety $+15$, success $+9$), demonstrating that
the conflict-aware weight is the more critical component.
Training loss curves in Figure~\ref{fig:app_component_ablation}(c,f,i)
confirm stable convergence across all settings.

\begin{table}[h]
\centering
\small
\caption{
Ablation study design for learned correction $g_\psi$ and conflict-aware weight $w_t$.
All configurations share the same pretrained base velocity field $v^{\text{base}}$ and same approximation guidance $g^\text{approx}$ (i.e., $g^\text{cov-G}$).
}
\label{tab:ablation_design}
\setlength{\tabcolsep}{4pt}
\renewcommand{\arraystretch}{1.15}
\begin{tabular}{clcccl}
\toprule
\# & Guidance & $g^{\text{approx}}$ & $g_\psi$ & $w_t$ gate & Question to answer \\
\midrule
(1) & $g^{\text{cov-G}}$
    & \cmark & \xmark & \xmark
    & Baseline \\
(2) & $g^{\text{approx}} + g_\psi$
    & \cmark & \cmark & \xmark
    & How much does the learned correction $g_\psi$ contribute? \\
(3) & $g^{\text{car}}$ (ours)
    & \cmark & \cmark & \cmark
    & How much does the conflict-aware weight $w_t$ contribute? \\
\bottomrule
\end{tabular}
\end{table}

\begin{figure}[h]
    \centering
    \includegraphics[width=0.9\linewidth]{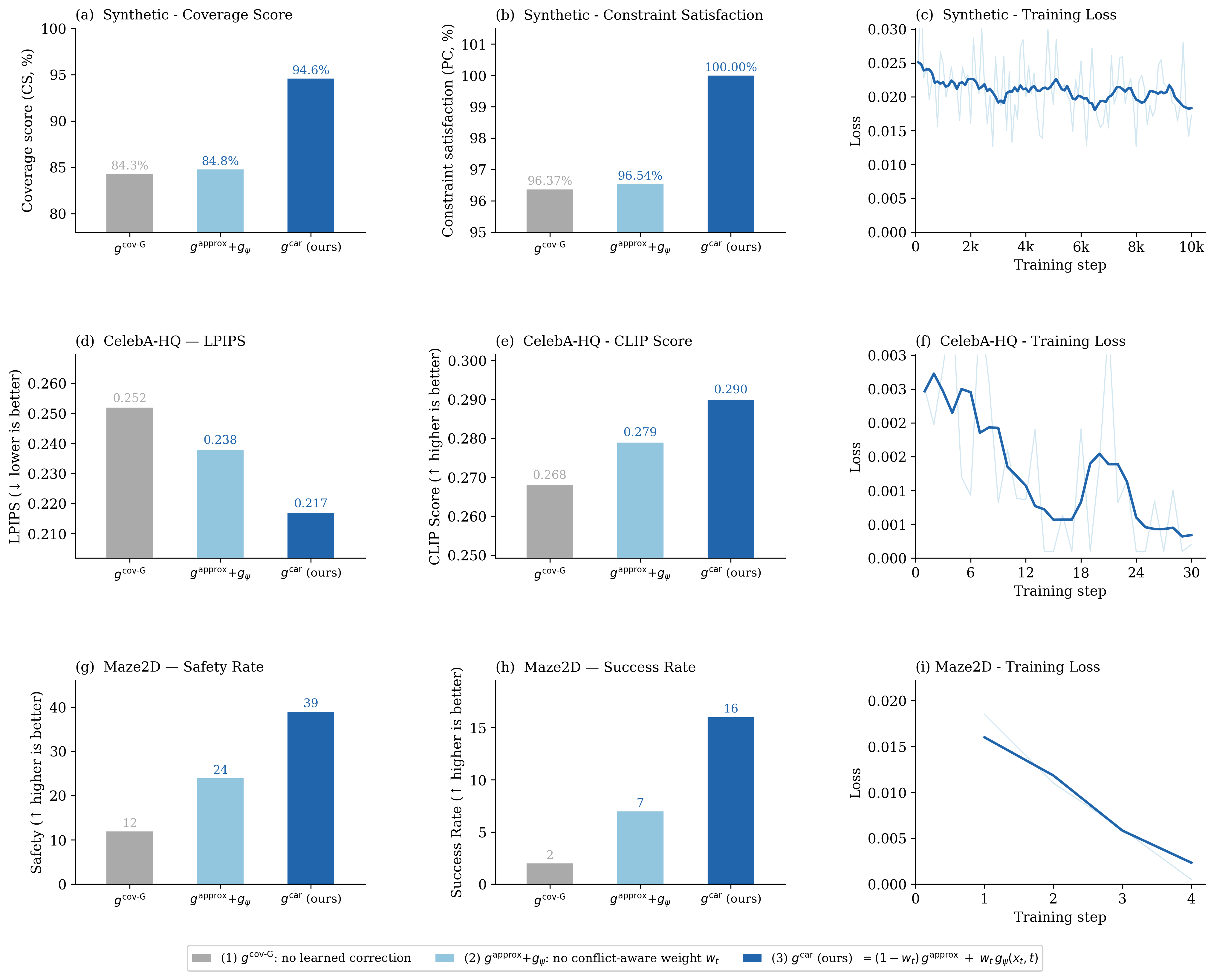}
    \caption{Component ablation on the synthetic benchmark. (a) Mode Coverage (CS) and (b) Prior Preservation (PC). The baseline $g^{\text{cov-G}}$ suffers from severe gradient conflicts. Applying the learned correction without the conflict gate ($g^{\text{approx}} + g_\psi$) improves PC but hurts CS due to spurious updates in low-conflict regions. Our full method $g^{\text{car}}$ leverages the gate $w_t$ to restrict corrections strictly to high-conflict states, achieving optimal performance in both metrics.}
    \label{fig:app_component_ablation}
\end{figure}

\clearpage
\subsection{Synthetic dataset} \label{app:exp_syn}

We consider a 2-dimensional Mixture of Gaussians toy example (see Figure~\ref{fig:app_synthetic}), where the ground-truth density $p_t$ is known analytically, allowing for precise quantitative evaluation.
The source distribution $\pi_0$ is a standard Gaussian $\pi_0(x) = \mathcal{N}(x \mid \mu_0, \Sigma_0)$, where $\mu_0 = [0.0, 0.0]$ and $\Sigma_0 = I$.
The target distribution $\pi_1$ is a Mixture of Gaussians consisting of $K=3$ modes, i.e., $\pi_1(x) = \frac{1}{3} \sum_{k=1}^{3} \mathcal{N}(x \mid \mu_k, \Sigma_1)$, where each component shares the covariance $\Sigma_1 = I$.
We use a fixed configuration with centers located at $\mu_1 = [8.0, 8.0]$, $\mu_2 = [8.0, -8.0]$, and $\mu_3 = [0.0, 10.0]$, corresponding to the base posterior visualized in Figure~\ref{fig:app_synthetic} (b). 

\begin{figure}[h]
    \centering
    \includegraphics[width=\linewidth]{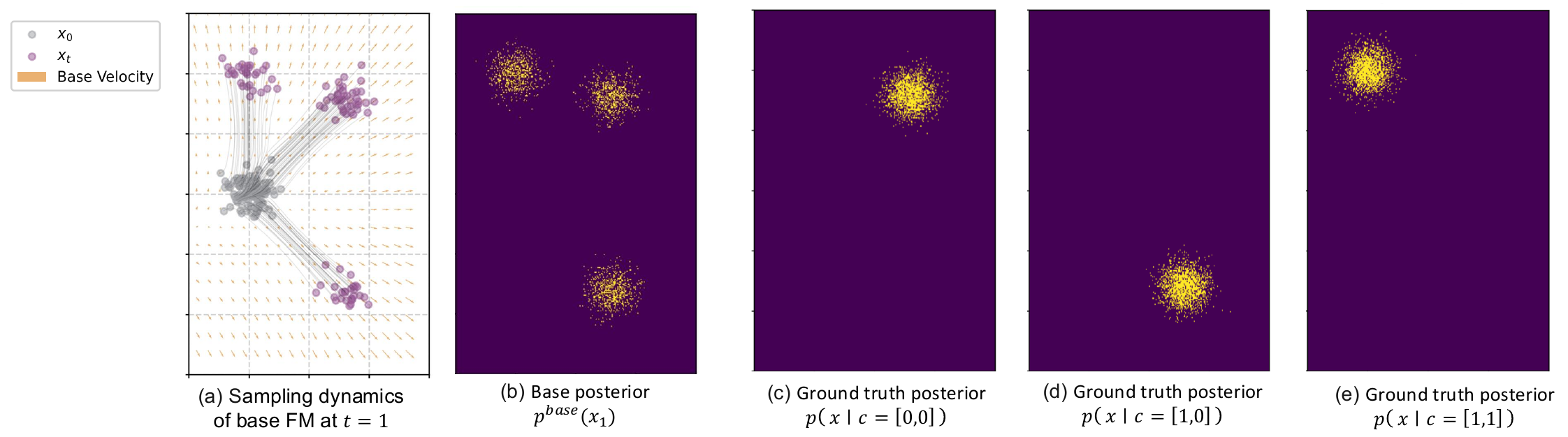}
    \caption{
    \textbf{Visualization of synthetic experiments.} 
    (a) The sampling dynamics of the base Rectified Flow model at $t=1$.
    (b) The base posterior distribution $p^{\text{base}}(x_1)$ consisting of three Gaussian modes.
    (c)--(e) Ground-truth posteriors under different classifier constraints ($c=[0,0]$, $[1,0]$, and $[1,1]$), estimated via rejection sampling with 10k samples.
    }
    \label{fig:app_synthetic}
\end{figure}

\subsubsection{Inference-time Constraints}
\label{app:inference_constraints}

To evaluate the system under conflicting guidance, we employ two pre-trained binary classifiers, $\mathcal{C}_1$ and $\mathcal{C}_2$, which act as independent reward signals. Each classifier assigns a label $y \in \{0, 1\}$ to the generated samples.
The classifier labels for the three target modes are designed to create varying degrees of gradient alignment. We visualize the ground-truth posteriors under these different compositional rewards in Figure~\ref{fig:app_synthetic} (c--e), generated via rejection sampling.
Specifically, the constraint $c=[1,0]$ (shown in Figure~\ref{fig:app_synthetic} (d)) represents a scenario with significant gradient conflict (or misalignment), serving as a primary stress test for off-manifold drift.

\subsubsection{Evaluation metrics}

\textbf{(1) Posterior Coverage (PC) ($\uparrow$).} Fraction of generated samples residing within the $2\sigma$ boundary of ground-truth target mixture components, measured by anisotropic Mahalanobis distance:
\begin{equation}
    \text{PC} = \frac{1}{N} \sum_{i=1}^N \mathbb{I}\left[ \min_{k \in \mathcal{K}_{\text{target}}} d(x_i, \mu_k) \le 2 \right],
\end{equation}
where $\mathcal{K}_{\text{target}}$ is the set of cluster indices satisfying the target labels. Unlike soft classifier probabilities (CS), PC is a strict geometric oracle: a sample is valid only if it physically resides within the correct high-density mode. \textbf{A drop in PC indicates off-manifold drift or biased sampling.}

\textbf{(2) Constraint satisfaction (CS) ($\uparrow$).}
The average probability assigned to the target label $y$ by the guidance classifiers.
\begin{equation}
\text{CS} = \frac{1}{N} \sum_{i=1}^N p_{\phi}(y | x_i).
\end{equation}
High CS indicates the guidance successfully optimizes the reward, potentially including adversarial examples that satisfy the classifier but fail PC.

\textbf{(3) Inference Time ($\downarrow$).} Wall-clock time per generated sample, aggregating: (i) trajectory generation (if applicable), (ii) learnable guidance training, and (iii) forward ODE solving. Note that GM collects data offline (highly parallelized), whereas $g^{\text{car}}$ relies on much slower online rollouts.

\textbf{(4) Data Usage ($\downarrow$)} denotes the total number of training trajectories (from $x_0$ to $x_1$) required to learn the learnable guidance; lower is more data-efficient.

\input{tables/app_exp_syn}

\subsubsection{Experimental Results} \label{app:syn_exp_results}
We present comprehensive quantitative results in Table~\ref{tab:guidance_composition_comparison},
evaluated with $10\text{k}$ generated samples per setting.
The evaluation covers all valid constraint configurations:
(i)~Single Guidance: $[0,\varnothing], [1,\varnothing], [\varnothing,0]$, and $[\varnothing,1]$;
(ii)~Composed Guidance: $[0,0]$, $[1,0]$, and $[1,1]$.
Note that no data samples satisfy $[0,1]$.

\paragraph{$g^{\text{car}}$ resolves off-manifold drift efficiently.}
Under single guidance, all methods achieve competitive Posterior Coverage (PC, ${>}85\%$) and Constraint Satisfaction (CS, ${>}99\%$). However, in the compositional reward setting—especially the $[1,0]$ gradient conflict scenario in \cref{tab:guidance_composition_comparison}—significant performance gaps emerge. With $\epsilon{=}0.05$, $g^{\text{car}}$ achieves $93.80\% \pm 0.05$ PC and a perfect $100.00\% \pm 0.00$ CS on the $[1,0]$, outperforming GLASS-FKS ($K{=}16$) by $3.0$ PC points while operating at a $70{\times}$ lower inference cost ($4.20 \pm 0.05$ vs.\ $\approx 296$ ms/sample). Employing a tighter threshold ($\epsilon{=}0.00$) further maximizes composed guidance fidelity (averaging $95.10\% \pm 0.03$ PC) at the expense of maximum training data usage, whereas $\epsilon{=}0.10$ provides highly competitive composed performance ($88.80\% \pm 0.04$ PC average) with near-zero transition data requirements.

Key observations are:
\begin{itemize}
    \item $g^{\text{cov-G}}$ collapses under gradient conflict. $g^{\text{cov-G}}$ degrades sharply to $71.70\% \pm 0.04$ PC on $[1,0]$.
    \item GLASS-FKS (sample-based) avoids off-manifold drift but is computationally costly and is highly sensitive to the particle count $K$. As shown in Table \ref{tab:guidance_composition_comparison}, GLASS-FKS maintains consistent Constraint Satisfaction (CS) scores across all compositional scenarios (i.e., $[0,0]$, $[1,0]$, and $[1,1]$), and does not have a severe performance drop under the conflicting $[1,0]$ setting. However, when the number of particles is restricted (e.g., $K{=}4$), the variance increases.
    \item Guidance Matching suffers from confounding errors inherent in learning a guidance network from scratch, yielding a lower average PC of 84.33\%. Moreover, GM requires over $10^7$ training samples per compositional reward (approx. $20\times$ more than $g^\text{car}$).
    \item PCGrad didn't manage to correct off-manifold drift.
    \item Our $g^{\text{car}}$ efficiently corrects off-manifold drift while remaining compute-light.
\end{itemize}

\paragraph{Data efficiency via early stopping.}
To evaluate the impact of the conflict-aware module on \textbf{data efficiency},
we introduce an early-stopping mechanism parameterized by $\epsilon$.
Training is halted when the proportion of generated samples with conflict score
exceeding $\tau$ drops below $\epsilon$, i.e., $P(\text{score} > \tau) < \epsilon$.
This criterion indicates that $x_1$ has sufficiently resolved gradient conflicts.
Table~\ref{tab:guidance_composition_comparison} reports $g^{\text{car}}$ under
$\epsilon \in \{0.10, 0.05, 0.00\}$; the training dynamics are visualized in
Figure~\ref{fig:active_ratio_curves}.
As shown, the conflict score decreases stably across all settings,
demonstrating that $g^{\text{car}}$ reliably learns to minimize gradient conflicts
and rectify off-manifold drift over time.

\begin{figure}[h]
    \centering
    \includegraphics[width=\linewidth]{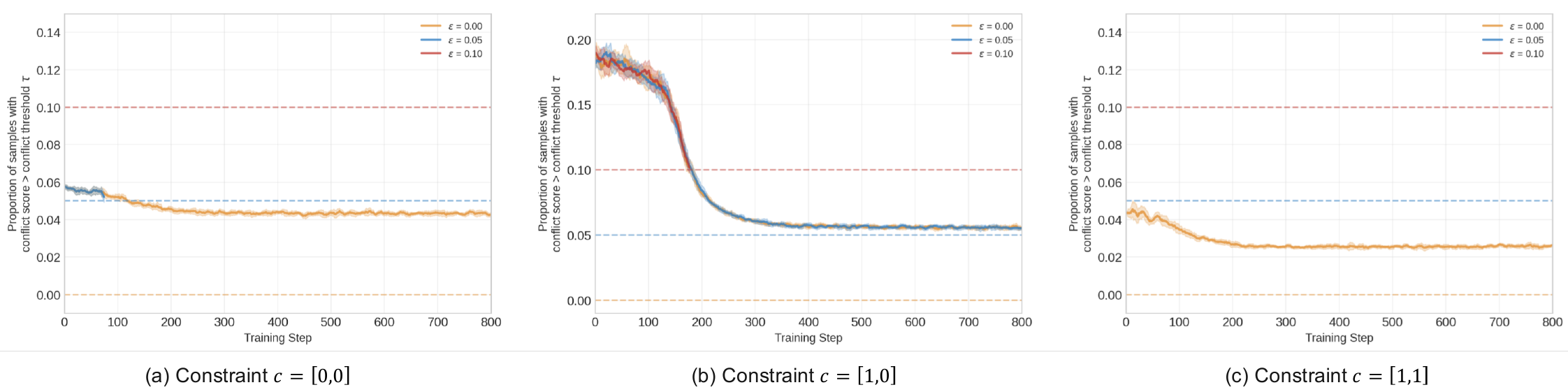}
    \caption{
    \textbf{Convergence of conflict scores of $g^{\text{car}}$.}
    The figure tracks the fraction of online samples with a conflict score larger than the early-stopping threshold $\epsilon$. This metric serves as an indicator for training stability.
    Results are shown for targets (a) $c=[0,0]$, (b) $c=[1,0]$, and (c) $c=[1,1]$ across three early-stopping threshold ($\epsilon = 0.00, 0.05$ and $0.10$), with a conflict threshold $\tau=0.50$.
    The downward trend indicates that the conflict scores of online samples progressively decrease, demonstrating that $g^{\text{car}}$ effectively learns to minimize gradient conflicts and rectify off-manifold drift.
    Shaded areas represent the standard error across 5 random seeds.
    }
    \label{fig:active_ratio_curves}
\end{figure}

\clearpage
\subsection{Generative decision-making as planners} \label{app:exp_planner}

\subsubsection{Inference-time constraints}

\paragraph{Static obstacle rewards.}
We formulate static obstacle avoidance as a differentiable, energy-based reward function $r_{\text{static}}(\mathbf{x})$, following \citep{luo2024potential}, which provides a smooth, bounded penalty landscape, thereby stabilizing the gradient-based guidance $\nabla_\mathbf{x} r_{\text{static}}(\mathbf{x})$ at inference time.

Formally, the obstacles are defined as a set of $K$ centers $\{\mathbf{c}_k\}_{k=1}^K$. The compositional static obstacle reward at state $\mathbf{x}$ is defined as:
\begin{equation}
    r_{\text{static}}(\mathbf{x}) = - \sum_{k=1}^{K} \exp\left(-\frac{\|\mathbf{x} - \mathbf{c}_k\|^2}{\sigma^2}\right)
    \label{eq:app_static_reward}
\end{equation}
where $\sigma$ determines the spatial decay rate of the repulsive potential, i.e., the influence diminishes as the distance from the center increases. We set $\sigma = 2.0$, with the remaining settings unchanged.

\paragraph{Static goal rewards.}
We consider instruction-following scenarios (e.g., ``fetch an apple''), where the objective is to reach specific spatial locations. We formulate the guidance for reaching these goals using the same differentiable, energy-based formulation as Equation~\eqref{eq:app_static_reward}.

Formally, we define the goals as a set of $K$ centers $\{\mathbf{g}_k\}_{k=1}^K$. The compositional static goal reward at state $\mathbf{x}$ is defined as:
\begin{equation}
    r_{\text{goal}}(\mathbf{x}) = \sum_{k=1}^{K} \exp\left(-\frac{\|\mathbf{x} - \mathbf{g}_k\|^2}{\sigma^2}\right)
    \label{eq:app_static_goal}
\end{equation}
where $\sigma$ is the spatial decay rate.

\paragraph{Dynamic obstacle rewards.}
We consider the dynamic agent avoidance task, where obstacles follow randomly generated linear trajectories.

Formally, we define a set of $K$ dynamic obstacles. The generated robot trajectory is  $\boldsymbol{\tau} = \{\mathbf{x}_1, \dots, \mathbf{x}_H\}$ over a planning horizon $H$.
Each obstacle $k$ moves over a horizon of $N$ steps ($N \le H$; i.e., its position $\mathbf{c}_k(t)$ updates for the first $N$ steps) and remains stationary thereafter (i.e., $\mathbf{c}_k(t) = \mathbf{c}_k(N)$ for $t > N$).
The compositional reward for the entire trajectory $\boldsymbol{\tau}$ is defined as:
\begin{equation}
    r_{\text{dynamic}}(\boldsymbol{\tau}) = - \sum_{t=1}^{H} \sum_{k=1}^{K} \exp\left(-\frac{\|\mathbf{x}_t - \mathbf{c}_k(t)\|^2}{\sigma^2}\right)
    \label{eq:app_dynamic_reward}
\end{equation}
where $\mathbf{x}_t$ denotes the agent state at time step $t$, and $\sigma$ is the spatial decay rate.
We set the trajectory horizon $H=48$ and the obstacle trajectory horizon $N=3$.

\paragraph{Trajectory smoothness rewards.}
We set the trajectory smoothness cost~\citep{urain2023se3}.
Formally, given a trajectory $\boldsymbol{\tau} = \{\mathbf{x}_0, \dots, \mathbf{x}_T\}$, the smoothness reward is defined as:
\begin{equation}
    r_{\text{smooth}}(\boldsymbol{\tau}) = - \sum_{t=0}^{T-1} \|\mathbf{x}_{t+1} - \mathbf{x}_t\|^2
    \label{eq:app_smoothness}
\end{equation}
as the minimization of the relative distance between the neighbour points in the trajectory. This reward can be thought as a spring making all the point in the trajectory be attracted between each other. 

\subsubsection{Hyperparameter} \label{app:exp_maze2d_hyperparameter}

We provide the detailed hyperparameters used for the Maze2D experiments in Table \ref{tab:maze_config}.

\begin{table}[t]
\centering
\footnotesize
\caption{Hyperparameters for $g^\text{car}$ used in Maze2D.}
\label{tab:maze_config}
\setlength{\tabcolsep}{10pt} %
\renewcommand{\arraystretch}{1.2}
\begin{tabular}{lc}
\toprule
Hyperparameter & Value \\
\midrule
Euler discretization steps (training) & $N = 100$ \\
Euler discretization steps (inference) & $N = 100$ \\
Guidance training Steps & $M = 4$ \\
Trajectory horizon $H$ & $48$ \\
ODE inference steps & $100$ \\
Guidance scale & $1.0$ \\
\bottomrule
\end{tabular}
\end{table}

\subsubsection{Experimental results}

We report all experimental results in Table \ref{tab:maze2d_results}, the key observations are:
\begin{enumerate}
    \item Inference-time guidance applied to pre-trained generative policy models is prone to off-manifold drift, leading to poor prior preservation (e.g., failing to reach the end point) and constraint violations (e.g., colliding with obstacles or maze walls), as shown for $g^\text{cov-G}$ in Figure~\ref{fig:exp_app_maze2d}.

    \item PCGrad cannot recover from off-manifold drift.
    
    \item GLASS-FKS performs well on robot planning tasks.
    
    \item $g^{\text{car}}$ consistently corrects off-manifold drift across all settings, improving success rate and reducing constraint violations. 

    \item MPPI is a strong planning baseline that refines generated paths from the base CFM model to satisfy runtime constraints. Sometimes, it still suffers from prior preservation issues under compositional constraints. When $g^{\text{car}}$ is applied on top of MPPI, MPPI + $g^{\text{car}}$ achieves the best overall performance, correcting off-manifold drift while satisfying constraints.
\end{enumerate}

We further evaluate robustness to clutter by varying the number of static obstacles from 2 to 6 (Figure~\ref{fig:exp_ablation_planner}). $g^{\text{cov-G}}$ degrades sharply as the environment becomes more cluttered, with its success rate collapsing to $12\%$ at 6 obstacles, whereas $g^{\text{car}}$ maintains $34\%$ in the same setting.

\begin{figure}[h]
    \centering
    \includegraphics[width=0.8\linewidth]{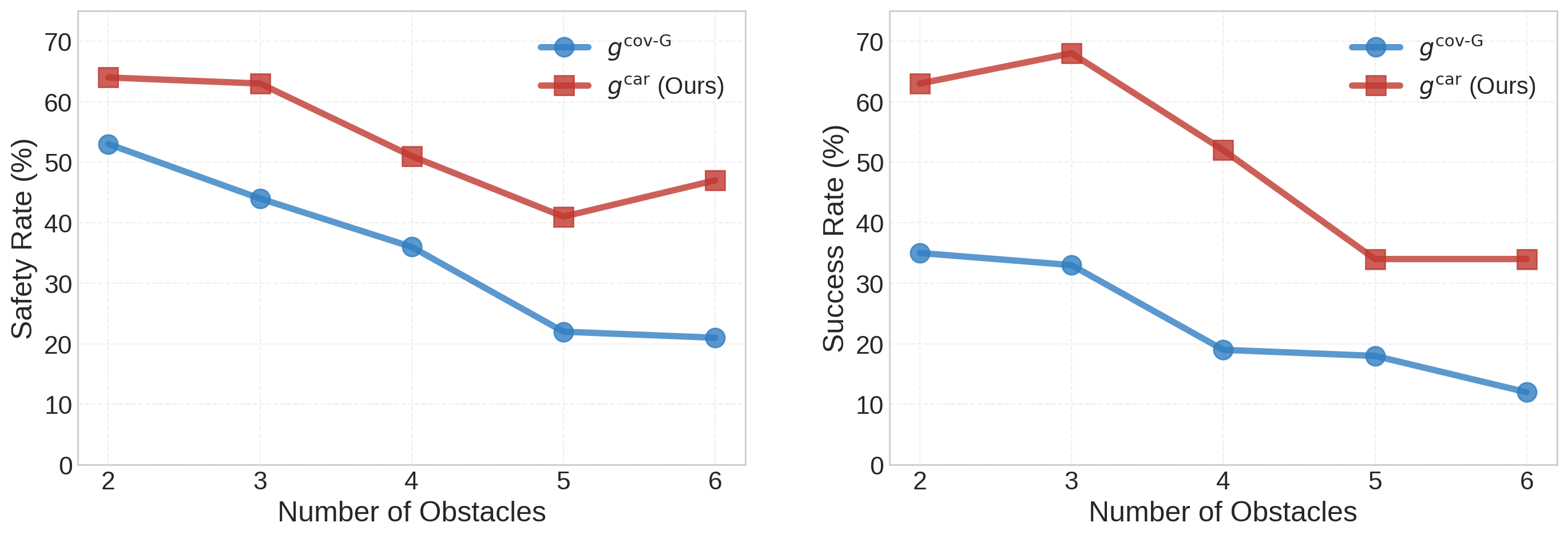}
    \caption{Robustness to clustered environments on Maze2D.
    We evaluate the safety and success rates by varying the number of static obstacles from 2 to 6.
    Our $g^{\text{car}}$ (red) exhibits robustness even with 6 obstacles, whereas $g^{\text{cov-G}}$ suffers degradation.}
    \label{fig:exp_ablation_planner}
\end{figure}

\begin{figure}[h]
    \centering
    \includegraphics[width=\linewidth]{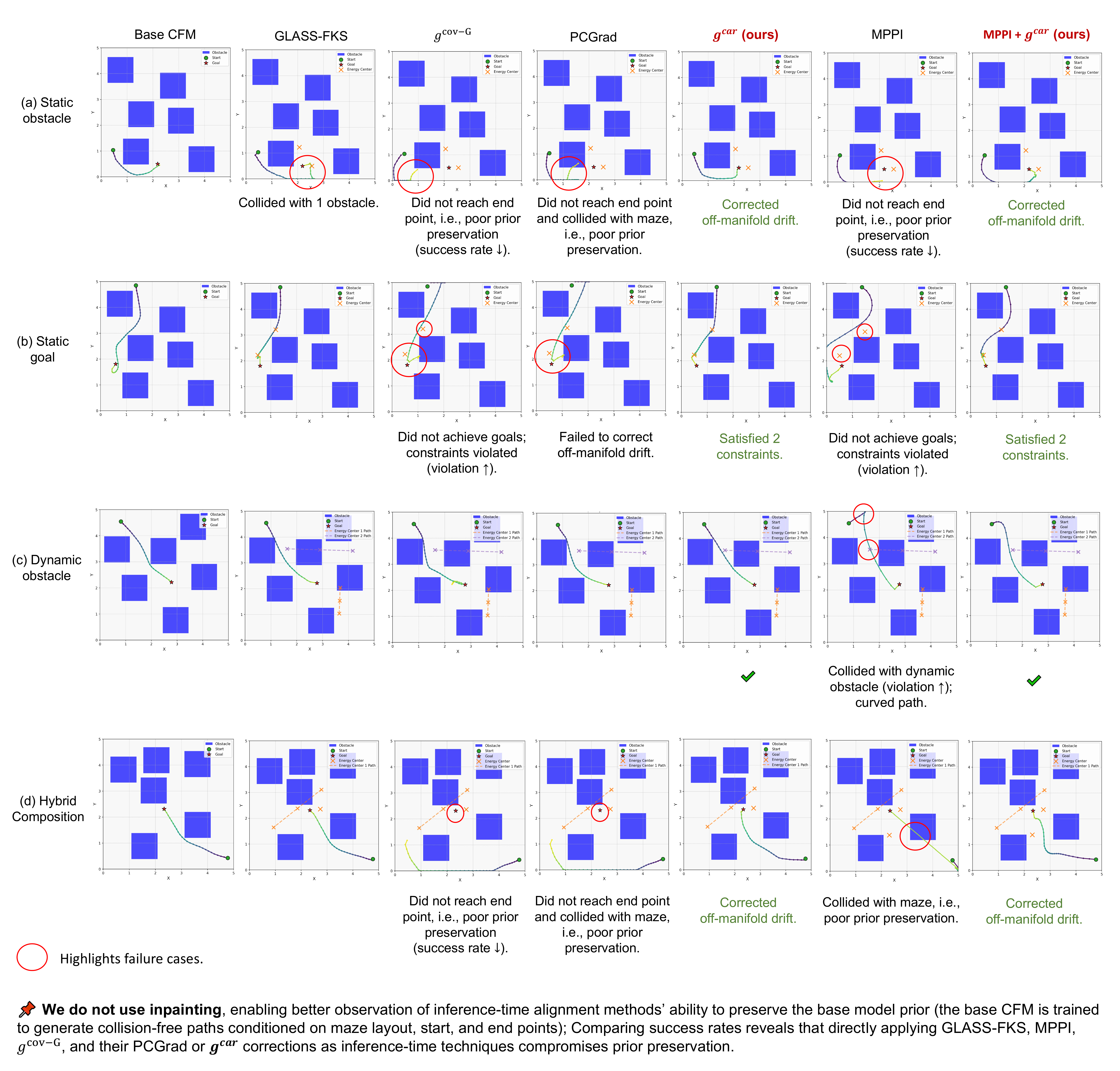}
    \caption{
    \textbf{Visualisation of guided trajectory generation under compositional constraints in Maze2D.}
    (1) static obstacles, (2) goal reachability, (3) dynamic obstacles, and (4) hybrid composition.
    Observe that $g^\text{cov-G}$ produces erratic, off-manifold trajectories,
    while \textbf{$g^\text{car}$} yields smooth, feasible trajectories.
    }
    \label{fig:exp_app_maze2d}
\end{figure}

\begin{sidewaysfigure}
    \centering
    \includegraphics[width=\linewidth]{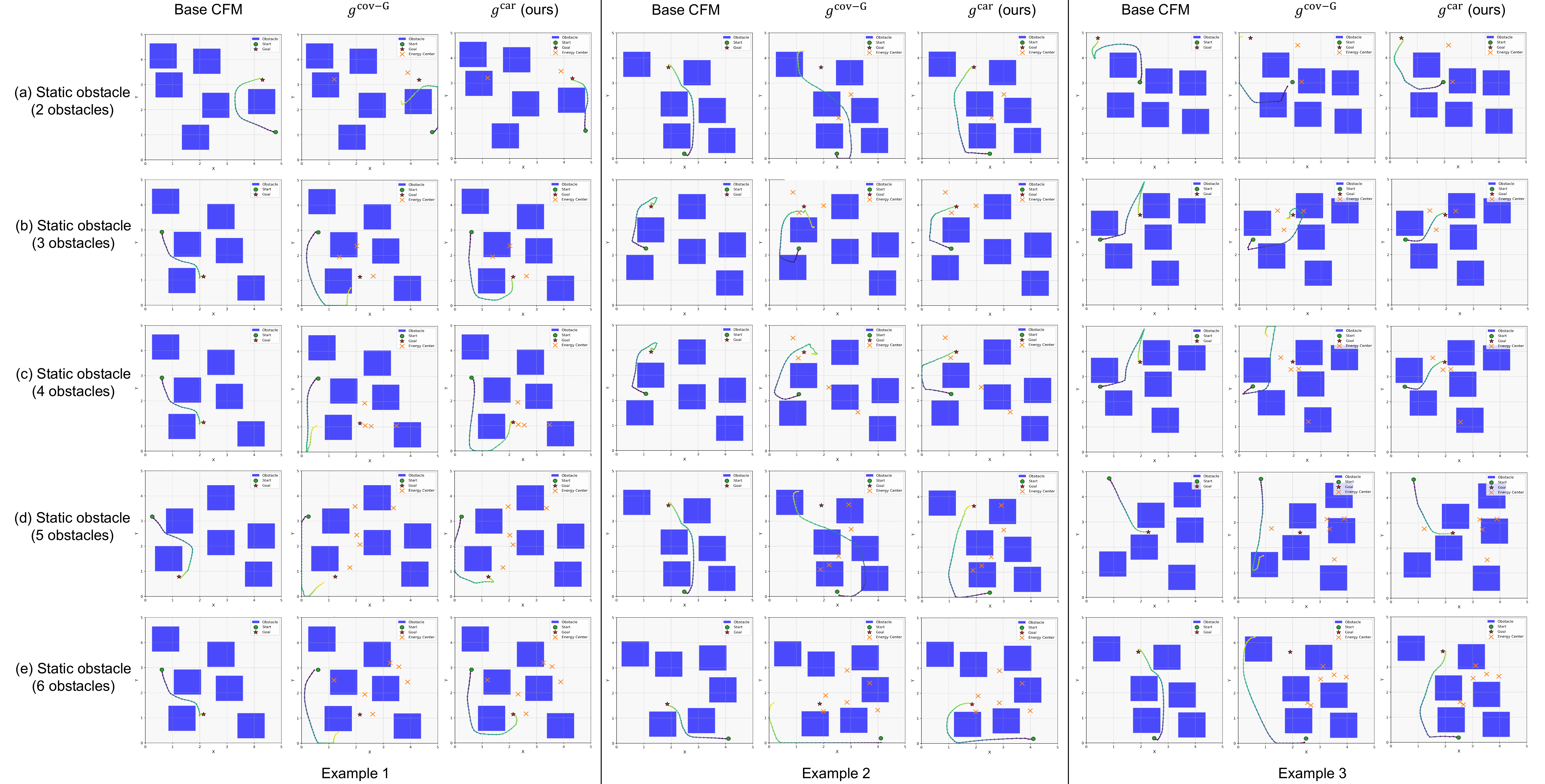}
    \caption{\textbf{Visual comparison of guided generation under increasing environmental complexity.}
    We scale the number of static obstacles to evaluate the solver's ability to handle dense constraints.
    As shown, traditional planning baselines like MPPI struggle with high-dimensional constraint landscapes, often failing to find feasible paths.
    In contrast, $g^\text{car}$ effectively navigates through dense clutter, generating smooth, collision-free trajectories that match the quality of those in simpler environments, highlighting its superior constraint-satisfaction capabilities.
    }
    \label{fig:exp_app_maze2d_scale_obstacles}
\end{sidewaysfigure}

\clearpage
\subsection{Generative decision-making as policies} \label{app:exp_policy}

\begin{figure}[h]
    \centering
    \includegraphics[width=0.8\linewidth]{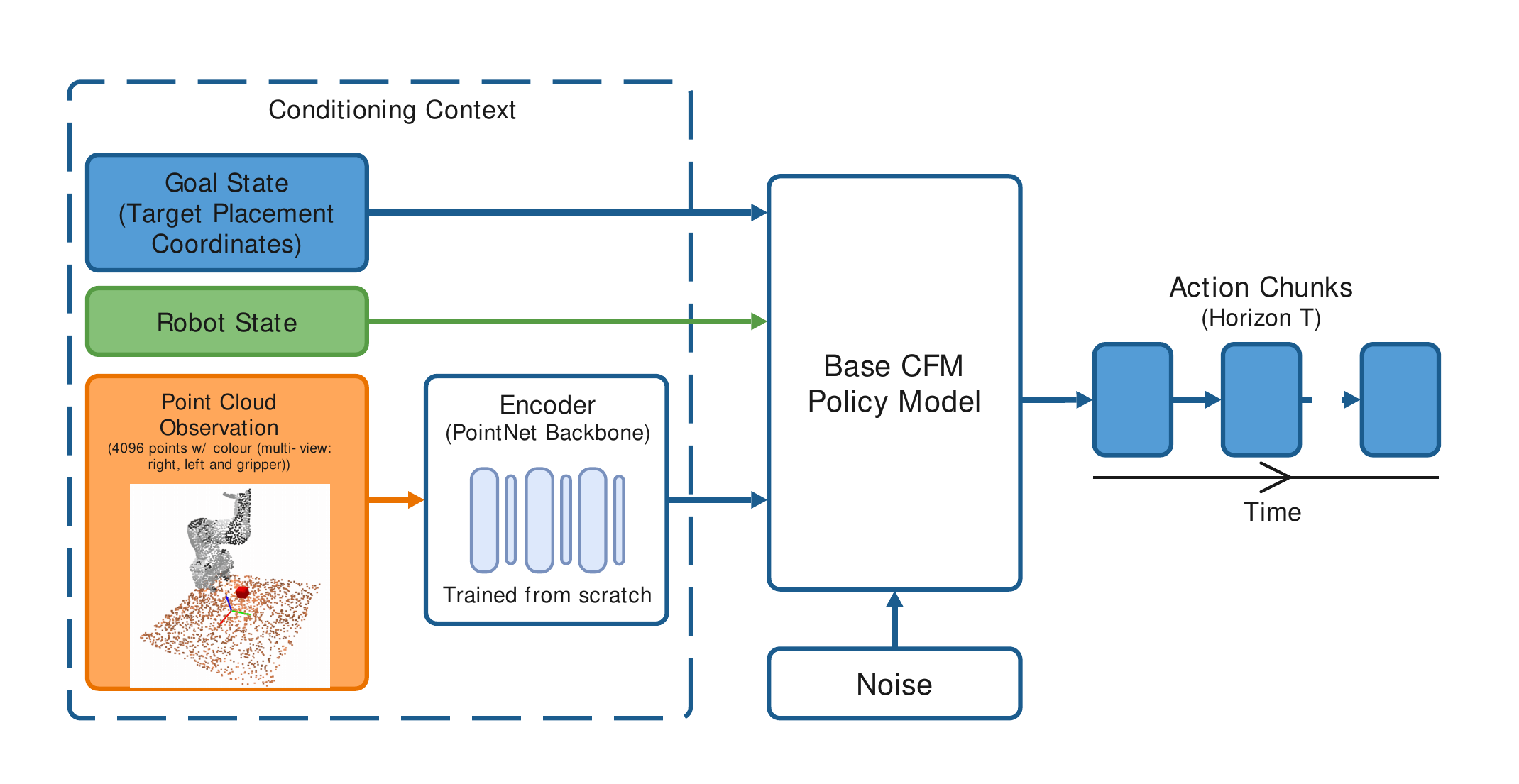} 
    \caption{\textbf{Architecture of the Base CFM Policy.} The conditioning context includes the goal state (e.g., target placement coordinates), the point cloud observation, and the robot state. The observation (4096 colored points) is compressed via an encoder using a PointNet backbone trained from scratch. The model outputs action chunks of horizon $T$ generated from noise.}
    \label{fig:base_model_arch}
\end{figure}

\subsubsection{Base CFM policy}

We implement a base Conditional Flow Matching (CFM) policy by adapting the PointFlowMatch architecture~\cite{chisari2024learning} for goal-conditioned manipulation. Specifically, we incorporate explicit goal conditioning, and improve success rates. The detailed architecture is illustrated in Figure~\ref{fig:base_model_arch}.

\textbf{Conditioning.}
The policy is conditioned on a multimodal context vector $c$, constructed as follows:
\begin{enumerate}
    \item Observation: Raw 3D point clouds ($N=4096$) with RGB features are fused from multi-view cameras (left, right, and gripper). These are processed by a PointNet backbone to extract a dense feature vector.
    \item Proprio State: A vector containing the robot's joint angles and gripper status.
    \item Goal: The 3D coordinates representing the target placement location (e.g., the stacking position).
\end{enumerate}
These components are concatenated to form the conditioning $c$.

\textbf{Output.}
The model predicts action chunks of horizon $T$. The generative component is a Conditional 1D U-Net that predicts the time-dependent velocity field $v_\theta(x_t, t \mid c)$.
Here, the flow state $x_t \in \mathbb{R}^{T \times 7}$ represents the flattened action chunk sequence (translation, rotation, and gripper action).
Trajectories are generated by integrating the learned ODE from a standard Gaussian distribution at $t=0$ to the target action distribution at $t=1$.

\textbf{Training Objective.}
Given an expert action chunk $\mathbf{A}_{\text{gt}} \in \mathbb{R}^{T \times 7}$ (denoted as $x_1$) and a random initial sample $x_0 \sim \mathcal{N}(0, I)$, we sample $t \sim \mathcal{U}(0,1)$ and interpolate $x_t = (1-t)x_0 + t x_1$.
The model is trained to regress the target velocity $v^{\text{gt}} = x_1 - x_0$ via mean-squared error.

\textbf{Dataset and evaluation.}
For each task (PickCube and StackCube), we collect 100 expert demonstrations to train the base CFM model. The trained policy achieves 100\% success rate on both training and test sets (100 episodes with unseen random seeds), confirming strong generalization. The base CFM policy is lightweight yet sufficient to complete the manipulation tasks without constraints. Our focus is on whether inference-time guidance can satisfy runtime constraints while preserving the base flow prior and staying on the data manifold.

\input{tables/manipulation_app}

\subsubsection{Experimental results}

Table~\ref{tab:app_maniskill_pick_stack_metrics} presents the comparative results under constrained settings.
In the challenging StackCube task, $g^\text{car}$ reduces the violation rate from $1.2$ to $0.1$ (static obstacles) and from $1.8$ to $0.4$ (hybrid composition), while boosting the success rate from $12\%$ to $72\%$ and from $9\%$ to $61\%$ respectively.
In the PickCube task, $g^\text{car}$ achieves perfect safety ($0.0$ violations) in static environments and boosts the success rate from $46\%$ to $94\%$ in the static goal setting.
Notably, PCGrad degrades performance relative to $g^\text{cov-G}$ across both tasks (e.g., StackCube success drops to $0\%$), confirming that gradient surgery cannot handle high-precision manipulation tasks under compositional constraints.
$g^\text{car}$ achieves these gains efficiently, consistently converging in just 8 steps.

Key observations are:
\begin{enumerate}
    \item Adding inference-time guidance to pre-trained generative policy models is prone to OOD, and often fails to finish tasks, e.g., the failure shown in Figure \ref{fig:exp_maniskill_all} of $g^\text{cov-G}$.
    \item PCGrad cannot recover from off-manifold drift.
    \item GLASS-FKS generally performs well, but struggles in high-precision tasks such as StackCube (i.e., stably and precisely placing one cube onto another), due to its high transition variance. On tasks such as conditional generation (e.g., decision-making tasks), as long as the condition often appears in the dataset, GLASS-FKS performs well because it is easier to obtain an accurate estimation of $g_t$.
    \item $g^{\text{car}}$ corrects off-manifold drift (success rate $\uparrow$) and shows decreased violation rate.
\end{enumerate}

\begin{figure}[t]
    \centering
    \includegraphics[width=0.5\linewidth]{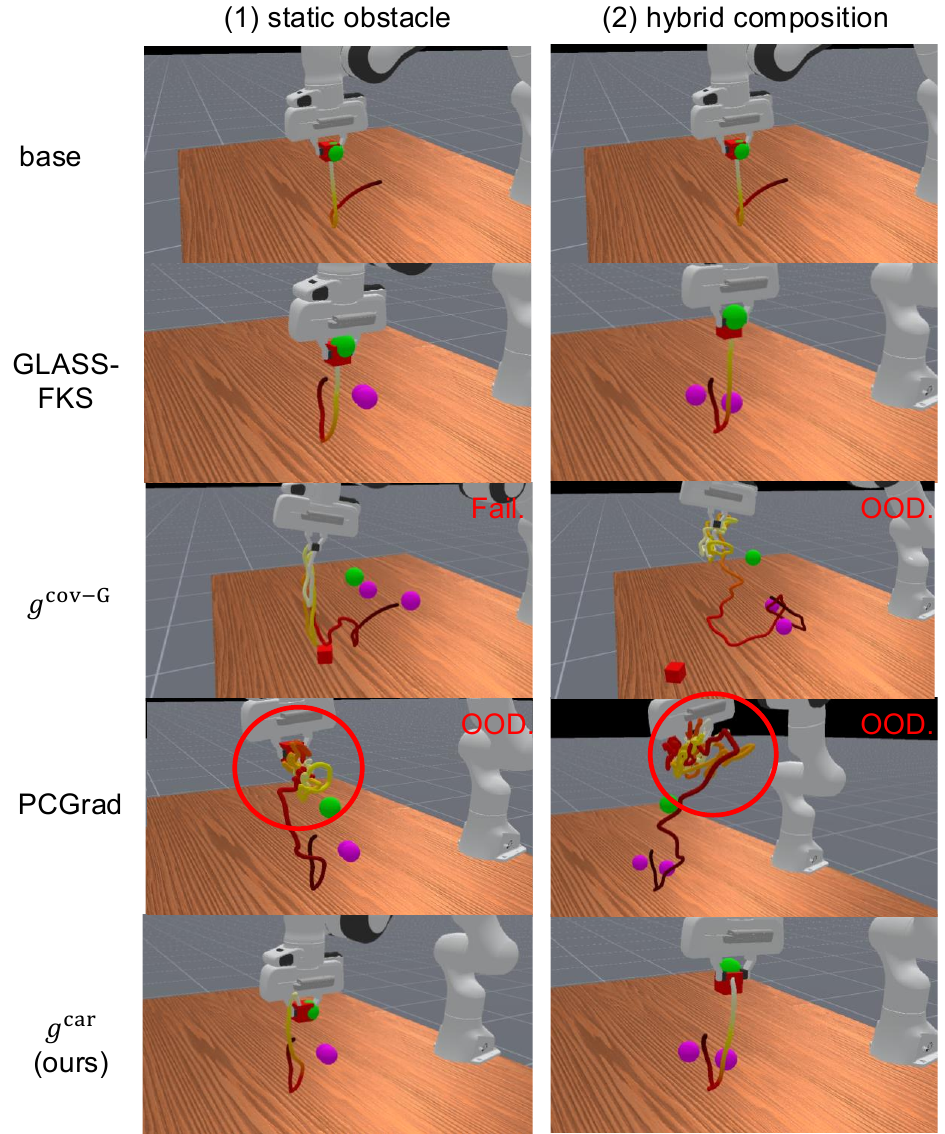}
    \caption{Visualization on ManiSkill2 PickCube task with conflict threshold $\tau=0.20$. OOD: the trajectory leaves the data manifold, producing physically incoherent motions (e.g., erratic spinning or tangled paths); Fail:the trajectory stays on the manifold but fails the task (e.g., does not reach the goal).}
    \label{fig:exp_maniskill_all}
\end{figure}

\clearpage
\subsection{Text-guided image manipulation} \label{app:exp_image}

\subsubsection{Experimental details}
\label{sec:implementation_details}

To ensure fair comparisons, all text-guided image manipulation experiments, including the training of the online guidance network and the inference latency measurements, were conducted on a dedicated local workstation. The hardware specifications include an AMD EPYC 7543 Processor and a single NVIDIA RTX A5000 GPU (24GB VRAM). All algorithms and neural network architectures were implemented using the PyTorch framework with CUDA acceleration. 

\begin{table}[h]
\centering
\footnotesize
\caption{Hyperparameter of $g^\text{car}$ in image editing.}
\label{tab:image_editing_config}
\setlength{\tabcolsep}{8pt}
\renewcommand{\arraystretch}{1.15}
\begin{tabular}{lc}
\toprule
\textbf{Hyperparameter} & \textbf{Value} \\
\midrule
Euler discretization steps (training) & $N = 10$ \\
Euler discretization steps (inference) & $N = 100$ \\
Guidance training Steps & $M = 30$ \\
Cutouts & $150$ \\
Guidance scale & $30$ \\
Batch size & 30 \\
\bottomrule
\end{tabular}
\end{table}

\subsubsection{Inference-time constraints}

In our text-to-image generation experiment, we adopted the pipeline presented in \citet{liu2023flowgrad}, utilizing the generative prior from \citet{liu2023flow}. The terminal reward function is:
\begin{equation}
r(x_1) = \mathrm{CLIP}(x_1, T),
\label{eq:terminal_reward_constraint}
\end{equation}

Baseline configurations were aligned with those reported in \citet{liu2023flowgrad}, and the complete results presented in Table \ref{tab:composed_prompt_metrics} reflect the same experimental conditions. For quantitative comparison, we used the CelebA-HQ dataset, randomly sampling 1,000 images, which were manipulated based on standard single text guidance (i.e., \textit{sad}, \textit{angry}, \textit{happy}, \textit{smiling}, \textit{curly hair}) and composed text guidance (i.e., \textit{sad + angry}, \textit{sad + happy}, \textit{sad + curly hair}).

\subsubsection{Evaluation metric}

We evaluate our method using six quantitative metrics across two categories:

\textbf{Text-Image Alignment:} We first use (1) \textbf{CLIP} (Higher is better), which measures basic text-image alignment by calculating image and text embeddings separately and measuring the distance between them. Because fine-grained misalignments are often left undetected by standard multi-modal models like CLIP \citep{singh2023divide}, we also report (2) \textbf{BLIP-ITM} \citep{li2022blip} (Higher is better). BLIP-ITM utilizes cross-attention between a ViT image encoder and a BERT-base text processor to act as a strict binary classifier, predicting whether an image and prompt are an exact match. Finally, we use (3) \textbf{VQAScore} \citep{vqascore} (Higher is better) to evaluate complex compositional reasoning by reframing image evaluation as a visual question answering task using LLaVA-1.5.

\textbf{Image Quality and Preservation:} To evaluate visual fidelity, we use (1) \textbf{CLIP-IQA} \citep{wang2022exploring} (Higher is better) to assess intrinsic visual quality and penalize blurry or artifact-heavy generations. To evaluate how well the original inputs are maintained, we report (2) \textbf{LPIPS} (Lower is better) for the preservation of overall image content, and (3) \textbf{ID} (Higher is better) for the preservation of subject identity.

\subsubsection{Experimental results}

\input{tables/image_app}

Table~\ref{tab:composed_prompt_metrics} presents a detailed quantitative comparison. We observe that compositional constraints significantly increase the difficulty of maintaining manifold adherence across all baselines, as conflicting objectives lead to higher LPIPS and lower ID scores. FlowGrad achieves LPIPS of $0.203$ and ID of $0.677$ under composed prompts, and struggles to balance multiple objectives simultaneously (i.e., the uneven text-image alignment between $P_1$ and $P_2$). $g^{\text{car}}$ outperforms $g^\text{cov-G}$ by a large margin in identity preservation ($0.681$ vs. $0.543$), proving its ability to rectify off-manifold drift where approximate guidance fails. Furthermore, the large gap between $P_1$ and $P_2$ scores for PCGrad (BLIP-ITM: $0.650$ vs. $0.337$; VQAScore: $0.785$ vs. $0.371$) means that gradient surgery fails to balance multiple constraints, whereas $g^{\text{car}}$ achieves consistent alignment across both prompts. Visualization results are provided in Figure~\ref{fig:app_image_vis_2}.

Overall, the key observations are:
\begin{itemize}
    \item $\mathbf{g^{cov-G}}$ is prone to off-manifold drift and has hallucinated generation; also it is too sensitive to the guidance scale.
    \item \textbf{FlowGrad} fails to balance multiple constraints.
    \item \textbf{PCGrad} attempts to resolve conflicts via gradient surgery but fails to balance multiple constraints, leaving some targets unfulfilled (e.g., failing to generate an ``angry'' expression). Furthermore, it cannot recover from off-manifold drift.
    \item \textbf{GLASS-FKS} fails to preserve the reference image. This is largely due to the high variance of sampling (i.e., ODE-based transition sampling) given a limited number of particles. Specifically, estimating $g_t$ requires samples from regions where $e^{r}$ is significantly higher than average, i.e., images already closely resembling the reference, which is unlikely to be achieved with a limited particle budget. 
    \item \textbf{Our} $\mathbf{g^\text{car}}$ achieves superior compositional reward alignment across multiple prompts, corrects off-manifold drift, and eliminates the hallucinated visual artifacts observed in $g^{\text{cov-G}}$.
\end{itemize}

\paragraph{More about GLASS-FKS}
In text-guided image manipulation, GLASS-FKS fails to preserve the reference image. This is largely due to the high variance of sampling (i.e., ODE-based transition sampling) given a limited number of particles. Specifically, estimating $g_t$ requires samples from regions where $e^{r}$ is significantly higher than average, i.e., images already closely resembling the reference, which is unlikely to be achieved with a limited particle budget. This failure mode is similar to Monte Carlo guidance in \citet{feng2025guidance}, where more advanced sampling techniques help GLASS-FKS preserve more prior than Monte Carlo guidance but do not fully resolve the issue. We also note that GLASS-FKS's original evaluation uses a stronger base model (FLUX) and a richer reward composition (CLIP, Pick, HPSv2, ImageReward), whereas our setting uses a Rectified Flow trained on CelebA-HQ with CLIP score as the sole reward. \textbf{Richer reward composition likely provides more informative evaluation for particle steering}, which helps explain the strong performance reported in the original paper.

\begin{sidewaysfigure}
    \centering
    \includegraphics[width=0.95\linewidth]{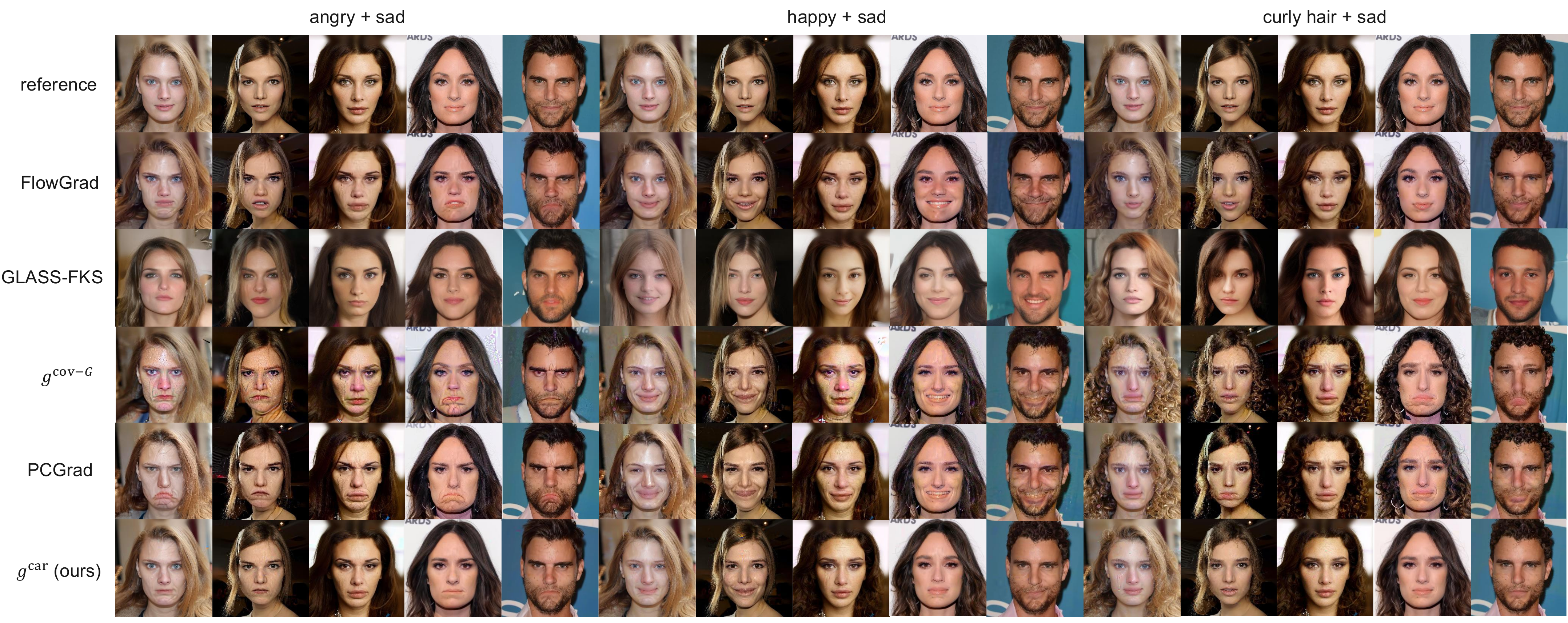}
    \caption{\textbf{Additional visualization of text-guided image manipulation.} This figure complements Figure \ref{fig:exp_image_vis} by showing further results of $g^{\text{car}}$ on the CelebA-HQ dataset under various composed text prompts.}
    \label{fig:app_image_vis_2}
\end{sidewaysfigure}

%% file: tables/app_exp_syn.tex
\begin{table*}[t]
\centering
\scriptsize
\caption{ Quantitative comparison on the synthetic benchmark.
(1) Posterior Coverage (PC): lower values indicate off-manifold drift or biased sampling.
(2) Constraint Satisfaction (CS): average classifier probability for the target labels.
(3) Time ($\downarrow$): wall-clock time per sample (ms).
(4) Data Usage ($\times 10^3$, $\downarrow$): total training samples consumed, reported in units of $10^3$.
All results use conflict threshold $\tau{=}0.50$; $\epsilon$ denotes the early-stopping threshold.
Each number is evaluated with $10\text{k}$ generated samples.
Data Usage of $g^\text{car}$ reports the total number of training samples consumed before the fraction of conflict samples drops below $\epsilon$ (see early-stopping criterion in Appendix~\ref{app:syn_exp_results}).
}
\label{tab:guidance_composition_comparison}
\setlength{\tabcolsep}{1.6pt}
\renewcommand{\arraystretch}{1.12}
\definecolor{lightgraycol}{gray}{0.85}
\definecolor{teal}{rgb}{0.0, 0.5, 0.5}

\begin{tabular}{l l ccccc ccccc}
\toprule
& & \multicolumn{5}{c}{Single Guidance} & \multicolumn{4}{c}{Composed Guidance} \\
\cmidrule(lr){3-7}\cmidrule(lr){8-11}
Metric & Method
& $[0,\varnothing]$ & $[1,\varnothing]$ & $[\varnothing,0]$ & $[\varnothing,1]$ & Avg
& $[0,0]$ & \cellcolor{lightgraycol}$[1,0]$ & $[1,1]$ & Avg \\
\midrule

\multirow{8}{*}{\rotatebox{90}{\textbf{PC (\%) $\uparrow$}}}
& $g^{\text{cov-G}}$
& \textbf{95.10}{\tiny $\pm$0.02} & \textbf{91.50}{\tiny $\pm$0.03} & 90.80{\tiny $\pm$0.02} & \textbf{89.90}{\tiny $\pm$0.04} & \textbf{91.83}{\tiny $\pm$0.03}
& 94.00{\tiny $\pm$0.03} & \cellcolor{lightgraycol}71.70{\tiny $\pm$0.04} & 89.00{\tiny $\pm$0.03} & 84.90{\tiny $\pm$0.03} \\
& PCGrad
& \textbf{95.10}{\tiny $\pm$0.02} & \textbf{91.50}{\tiny $\pm$0.03} & 90.80{\tiny $\pm$0.02} & \textbf{89.90}{\tiny $\pm$0.04} & \textbf{91.83}{\tiny $\pm$0.03}
& 94.05{\tiny $\pm$0.04} & \cellcolor{lightgraycol}75.20{\tiny $\pm$0.08} & 89.15{\tiny $\pm$0.05} & 86.13{\tiny $\pm$0.06} \\
& GM
& 87.40{\tiny $\pm$0.12} & 82.00{\tiny $\pm$0.15} & 85.30{\tiny $\pm$0.11} & 88.70{\tiny $\pm$0.14} & 85.85{\tiny $\pm$0.13}
& 82.30{\tiny $\pm$0.22} & \cellcolor{lightgraycol}84.50{\tiny $\pm$0.38} & 86.20{\tiny $\pm$0.18} & 84.33{\tiny $\pm$0.26} \\
& GLASS-FKS ($K{=}4$)
& 85.20{\tiny $\pm$0.80} & 86.20{\tiny $\pm$3.60} & 86.20{\tiny $\pm$2.40} & 80.00{\tiny $\pm$7.10} & 84.40{\tiny $\pm$3.40}
& 84.20{\tiny $\pm$2.30} & \cellcolor{lightgraycol}78.60{\tiny $\pm$3.00} & 79.80{\tiny $\pm$1.60} & 80.90{\tiny $\pm$2.30} \\
& GLASS-FKS ($K{=}16$)
& 87.80{\tiny $\pm$4.40} & 87.80{\tiny $\pm$3.30} & 88.80{\tiny $\pm$4.60} & 87.20{\tiny $\pm$5.20} & 87.90{\tiny $\pm$4.30}
& 88.80{\tiny $\pm$3.30} & \cellcolor{lightgraycol}90.80{\tiny $\pm$1.60} & 88.60{\tiny $\pm$3.60} & 89.40{\tiny $\pm$2.80} \\
\cmidrule(lr){2-11}
& $g^{\text{car}}$ ($\epsilon{=}0.10$)
& \textbf{95.10}{\tiny $\pm$0.02} & \textbf{91.50}{\tiny $\pm$0.04} & \textbf{91.00}{\tiny $\pm$0.03} & 89.70{\tiny $\pm$0.05} & \textbf{91.83}{\tiny $\pm$0.03}
& 93.90{\tiny $\pm$0.04} & \cellcolor{lightgraycol}83.70{\tiny $\pm$0.06} & 88.80{\tiny $\pm$0.03} & 88.80{\tiny $\pm$0.04} \\
& $g^{\text{car}}$ ($\epsilon{=}0.05$)
& 95.00{\tiny $\pm$0.03} & \textbf{91.50}{\tiny $\pm$0.02} & \textbf{91.00}{\tiny $\pm$0.02} & 89.60{\tiny $\pm$0.04} & 91.78{\tiny $\pm$0.03}
& 94.60{\tiny $\pm$0.03} & \cellcolor{lightgraycol}93.80{\tiny $\pm$0.05} & 88.70{\tiny $\pm$0.04} & 92.37{\tiny $\pm$0.04} \\
& $g^{\text{car}}$ ($\epsilon{=}0.00$)
& \textbf{95.10}{\tiny $\pm$0.01} & \textbf{91.50}{\tiny $\pm$0.01} & 90.90{\tiny $\pm$0.02} & 89.70{\tiny $\pm$0.03} & 91.80{\tiny $\pm$0.02}
& \textbf{96.80}{\tiny $\pm$0.02} & \cellcolor{lightgraycol}\textbf{94.40}{\tiny $\pm$0.04} & \textbf{94.10}{\tiny $\pm$0.03} & \textbf{95.10}{\tiny $\pm$0.03} \\

\midrule
\multirow{8}{*}{\rotatebox{90}{\textbf{CS (\%) $\uparrow$}}}
& $g^{\text{cov-G}}$
& \textbf{100.00}{\tiny $\pm$0.00} & \textbf{100.00}{\tiny $\pm$0.00} & \textbf{100.00}{\tiny $\pm$0.00} & 99.87{\tiny $\pm$0.01} & \textbf{99.97}{\tiny $\pm$0.00}
& \textbf{100.00}{\tiny $\pm$0.00} & \cellcolor{lightgraycol}89.56{\tiny $\pm$0.03} & 99.87{\tiny $\pm$0.01} & 96.48{\tiny $\pm$0.01} \\
& PCGrad
& \textbf{100.00}{\tiny $\pm$0.00} & \textbf{100.00}{\tiny $\pm$0.00} & \textbf{100.00}{\tiny $\pm$0.00} & 99.87{\tiny $\pm$0.01} & \textbf{99.97}{\tiny $\pm$0.00}
& 100.00{\tiny $\pm$0.00} & \cellcolor{lightgraycol}92.45{\tiny $\pm$0.12} & 99.88{\tiny $\pm$0.02} & 97.44{\tiny $\pm$0.05} \\
& GM
& 99.91{\tiny $\pm$0.08} & 99.91{\tiny $\pm$0.09} & 99.95{\tiny $\pm$0.05} & \textbf{100.00}{\tiny $\pm$0.01} & 99.94{\tiny $\pm$0.05}
& 99.93{\tiny $\pm$0.07} & \cellcolor{lightgraycol}99.99{\tiny $\pm$0.01} & 99.92{\tiny $\pm$0.08} & 99.95{\tiny $\pm$0.05} \\
& GLASS-FKS ($K{=}4$)
& 92.64{\tiny $\pm$2.44} & 99.80{\tiny $\pm$0.45} & 99.80{\tiny $\pm$0.45} & 90.00{\tiny $\pm$2.55} & 95.56{\tiny $\pm$1.47}
& 93.70{\tiny $\pm$1.48} & \cellcolor{lightgraycol}90.31{\tiny $\pm$1.76} & 90.90{\tiny $\pm$2.74} & 91.64{\tiny $\pm$1.99} \\
& GLASS-FKS ($K{=}16$)
& \textbf{100.00}{\tiny $\pm$0.00} & \textbf{100.00}{\tiny $\pm$0.00} & \textbf{100.00}{\tiny $\pm$0.00} & \textbf{100.00}{\tiny $\pm$0.00} & \textbf{100.00}{\tiny $\pm$0.00}
& \textbf{100.00}{\tiny $\pm$0.00} & \cellcolor{lightgraycol}99.71{\tiny $\pm$0.27} & \textbf{100.00}{\tiny $\pm$0.00} & 99.90{\tiny $\pm$0.09} \\
\cmidrule(lr){2-11}
& $g^{\text{car}}$ ($\epsilon{=}0.10$)
& \textbf{100.00}{\tiny $\pm$0.00} & \textbf{100.00}{\tiny $\pm$0.00} & \textbf{100.00}{\tiny $\pm$0.00} & 99.86{\tiny $\pm$0.02} & \textbf{99.97}{\tiny $\pm$0.00}
& \textbf{100.00}{\tiny $\pm$0.00} & \cellcolor{lightgraycol}99.34{\tiny $\pm$0.04} & 99.87{\tiny $\pm$0.01} & 99.74{\tiny $\pm$0.01} \\
& $g^{\text{car}}$ ($\epsilon{=}0.05$)
& \textbf{100.00}{\tiny $\pm$0.00} & \textbf{100.00}{\tiny $\pm$0.00} & \textbf{100.00}{\tiny $\pm$0.00} & 99.86{\tiny $\pm$0.01} & \textbf{99.97}{\tiny $\pm$0.00}
& \textbf{100.00}{\tiny $\pm$0.00} & \cellcolor{lightgraycol}\textbf{100.00}{\tiny $\pm$0.00} & 99.87{\tiny $\pm$0.02} & 99.96{\tiny $\pm$0.00} \\
& $g^{\text{car}}$ ($\epsilon{=}0.00$)
& \textbf{100.00}{\tiny $\pm$0.00} & \textbf{100.00}{\tiny $\pm$0.00} & \textbf{100.00}{\tiny $\pm$0.00} & 99.87{\tiny $\pm$0.01} & \textbf{99.97}{\tiny $\pm$0.00}
& \textbf{100.00}{\tiny $\pm$0.00} & \cellcolor{lightgraycol}\textbf{100.00}{\tiny $\pm$0.00} & \textbf{100.00}{\tiny $\pm$0.00} & \textbf{100.00}{\tiny $\pm$0.00} \\

\midrule
\multirow{8}{*}{\rotatebox{90}{\textbf{Time$\downarrow$}}}
& $g^{\text{cov-G}}$
& 0.35{\tiny $\pm$0.01} & 0.37{\tiny $\pm$0.01} & 0.35{\tiny $\pm$0.01} & 0.35{\tiny $\pm$0.01} & 0.36{\tiny $\pm$0.01}
& 0.36{\tiny $\pm$0.01} & \cellcolor{lightgraycol}\textbf{0.38}{\tiny $\pm$0.01} & 0.36{\tiny $\pm$0.01} & \textbf{0.37}{\tiny $\pm$0.01} \\
& PCGrad
& 0.35{\tiny $\pm$0.01} & 0.37{\tiny $\pm$0.01} & 0.35{\tiny $\pm$0.01} & 0.35{\tiny $\pm$0.01} & 0.36{\tiny $\pm$0.01}
& 0.37{\tiny $\pm$0.01} & \cellcolor{lightgraycol}0.42{\tiny $\pm$0.02} & 0.38{\tiny $\pm$0.01} & 0.39{\tiny $\pm$0.01} \\
& GM
& 3.09{\tiny $\pm$0.12} & 3.02{\tiny $\pm$0.14} & 2.96{\tiny $\pm$0.11} & 2.96{\tiny $\pm$0.13} & 3.01{\tiny $\pm$0.12}
& 2.81{\tiny $\pm$0.15} & \cellcolor{lightgraycol}2.81{\tiny $\pm$0.16} & 3.11{\tiny $\pm$0.12} & 2.91{\tiny $\pm$0.14} \\
& GLASS-FKS ($K{=}4$)
& 293.00{\tiny $\pm$37.00} & 293.00{\tiny $\pm$36.00} & 295.00{\tiny $\pm$36.00} & 294.00{\tiny $\pm$34.00} & 294.00{\tiny $\pm$35.00}
& 295.00{\tiny $\pm$36.00} & \cellcolor{lightgraycol}297.00{\tiny $\pm$33.00} & 292.00{\tiny $\pm$36.00} & 295.00{\tiny $\pm$35.00} \\
& GLASS-FKS ($K{=}16$)
& 299.00{\tiny $\pm$40.00} & 301.00{\tiny $\pm$34.00} & 295.00{\tiny $\pm$35.00} & 294.00{\tiny $\pm$35.00} & 297.00{\tiny $\pm$36.00}
& 298.00{\tiny $\pm$34.00} & \cellcolor{lightgraycol}296.00{\tiny $\pm$36.00} & 302.00{\tiny $\pm$36.00} & 299.00{\tiny $\pm$35.00} \\
\cmidrule(lr){2-11}
& $g^{\text{car}}$ ($\epsilon{=}0.10$)
& \textbf{0.25}{\tiny $\pm$0.01} & 0.25{\tiny $\pm$0.01} & 0.25{\tiny $\pm$0.01} & 0.25{\tiny $\pm$0.01} & \textbf{0.25}{\tiny $\pm$0.01}
& \textbf{0.27}{\tiny $\pm$0.02} & \cellcolor{lightgraycol}0.72{\tiny $\pm$0.03} & \textbf{0.27}{\tiny $\pm$0.02} & 0.42{\tiny $\pm$0.02} \\
& $g^{\text{car}}$ ($\epsilon{=}0.05$)
& \textbf{0.25}{\tiny $\pm$0.01} & 0.25{\tiny $\pm$0.01} & 0.25{\tiny $\pm$0.01} & 0.25{\tiny $\pm$0.01} & \textbf{0.25}{\tiny $\pm$0.01}
& 0.46{\tiny $\pm$0.02} & \cellcolor{lightgraycol}4.20{\tiny $\pm$0.05} & \textbf{0.27}{\tiny $\pm$0.01} & 1.65{\tiny $\pm$0.03} \\
& $g^{\text{car}}$ ($\epsilon{=}0.00$)
& 0.30{\tiny $\pm$0.02} & \textbf{0.18}{\tiny $\pm$0.01} & \textbf{0.22}{\tiny $\pm$0.01} & \textbf{0.21}{\tiny $\pm$0.01} & \textbf{0.23}{\tiny $\pm$0.01}
& 25.81{\tiny $\pm$0.12} & \cellcolor{lightgraycol}25.81{\tiny $\pm$0.15} & 26.01{\tiny $\pm$0.11} & 25.88{\tiny $\pm$0.12} \\

\midrule
\multirow{6}{*}{\rotatebox{90}{\textbf{Data$\downarrow$}}}
& $g^{\text{cov-G}}$
& -- & -- & -- & -- & --
& -- & \cellcolor{lightgraycol}-- & -- & -- \\
& PCGrad
& -- & -- & -- & -- & --
& -- & \cellcolor{lightgraycol}-- & -- & -- \\
& GM
& 10240.00 & 10240.00 & 10240.00 & 10240.00 & 10240.00
& 10240.00 & \cellcolor{lightgraycol}10240.00 & 10240.00 & 10240.00 \\
& GLASS-FKS
& -- & -- & -- & -- & --
& -- & \cellcolor{lightgraycol}-- & -- & -- \\
\cmidrule(lr){2-11}
& $g^{\text{car}}$ ($\epsilon{=}0.10$)
& 0 & 0 & 0 & 0 & 0
& \textbf{0} & \cellcolor{lightgraycol}\textbf{180.22}{\tiny $\pm$15.45} & \textbf{0} & \textbf{60.07}{\tiny $\pm$5.15} \\
& $g^{\text{car}}$ ($\epsilon{=}0.05$)
& 0 & 0 & 0 & 0 & 0
& 74.75{\tiny $\pm$6.33} & \cellcolor{lightgraycol}1573.89{\tiny $\pm$85.30} & \textbf{0} & 549.55{\tiny $\pm$30.54} \\
& $g^{\text{car}}$ ($\epsilon{=}0.00$)
& 0 & 0 & 0 & 0 & 0
& 10240.00 & \cellcolor{lightgraycol}10240.00 & 10240.00 & 10240.00 \\
\bottomrule
\end{tabular}

\vspace{1.5mm}
{\footnotesize
\raggedright
\textit{Note:} \textbf{Bold} indicates best performance. The \colorbox{lightgraycol}{$[1,0]$} column highlights the gradient conflict scenario. Mean{\tiny $\pm$std} are reported over 5 random seeds.
}
\end{table*}

%% file: tables/manipulation_app.tex
\begin{table}[t]
\centering
\footnotesize
\caption{Comparison on ManiSkill2 StackCube and PickCub tasks.
Compositional reward settings:
(1) static obstacle: two random static obstacles;
(2) hybrid composition: two random static obstacles and trajectory smoothness.
Metrics include Inference Time (ms/sample), Violation (mean constraint violations \#), Success (success rate \%), and Steps (\#). Results are averaged over 100 samples with conflict threshold $\tau=0.20$. Note that we do not use inpainting, which allows us to better observe the capability of inference-time alignment methods in preserving the base model prior. 
For GLASS-FKS, we use $K=8$ particles with a convergence coefficient $\rho=0.95$, involving 24 internal steps per inference. The $g^{\text{car}}$ method requires an online training period of $20.4 \pm 0.4$ min for 8 training steps prior to inference.}
\label{tab:app_maniskill_pick_stack_metrics}
\setlength{\tabcolsep}{2.8pt}
\renewcommand{\arraystretch}{1.1}
\definecolor{lightgray}{gray}{0.9}
\definecolor{teal}{rgb}{0.0, 0.5, 0.5}

\begin{tabular}{l c ccc ccc  c ccc ccc}
\toprule
& & \multicolumn{6}{c}{StackCube}
& & \multicolumn{6}{c}{PickCube} \\
\cmidrule(lr){2-8} \cmidrule(lr){9-15}
& & \multicolumn{3}{c}{(1) static obstacles}
& \multicolumn{3}{c}{(2) hybrid composition}
& & \multicolumn{3}{c}{(1) static obstacles}
& \multicolumn{3}{c}{(2) static goal} \\
\cmidrule(lr){3-5}\cmidrule(lr){6-8}\cmidrule(lr){10-12}\cmidrule(lr){13-15}
Method & Time~$\downarrow$
& Viol.~$\downarrow$ & Succ.~$\uparrow$ & Steps~$\downarrow$
& Viol.~$\downarrow$ & Succ.~$\uparrow$ & Steps~$\downarrow$
& Time~$\downarrow$
& Viol.~$\downarrow$ & Succ.~$\uparrow$ & Steps~$\downarrow$
& Viol.~$\downarrow$ & Succ.~$\uparrow$ & Steps~$\downarrow$ \\
\midrule
GLASS-FKS
& 493.7
& 0.6 & 32 & 24
& 1.1 & 24 & 24
& 485.2
& 0.2 & 90 & 24
& 0.5 & 82 & 24 \\
& {\scriptsize $\pm$29.2}
& {\scriptsize $\pm$0.4} & {\scriptsize $\pm$9.8} &
& {\scriptsize $\pm$0.7} & {\scriptsize $\pm$12.5} &
& {\scriptsize $\pm$45.2}
& {\scriptsize $\pm$0.2} & {\scriptsize $\pm$5.2} &
& {\scriptsize $\pm$0.3} & {\scriptsize $\pm$6.4} & \\
\addlinespace
$g^{\text{cov-G}}$
& 185.0
& 1.2 & 12 & --
& 1.8 & 9 & --
& 180.0
& 0.4 & 58 & --
& 0.8 & 46 & -- \\
& {\scriptsize $\pm$9.3}
& {\scriptsize $\pm$0.5} & {\scriptsize $\pm$4.1} &
& {\scriptsize $\pm$0.8} & {\scriptsize $\pm$3.2} &
& {\scriptsize $\pm$6.4}
& {\scriptsize $\pm$0.1} & {\scriptsize $\pm$4.2} &
& {\scriptsize $\pm$0.2} & {\scriptsize $\pm$3.5} & \\
\addlinespace
PCGrad
& 201.4
& 1.5\textcolor{teal}{$^{\downarrow 0.3}$} & 0\textcolor{teal}{$^{\downarrow 12}$} & --
& 2.2\textcolor{teal}{$^{\downarrow 0.4}$} & 0\textcolor{teal}{$^{\downarrow 9}$} & --
& 194.8
& 0.8\textcolor{teal}{$^{\downarrow 0.4}$} & 10\textcolor{teal}{$^{\downarrow 48}$} & --
& 1.3\textcolor{teal}{$^{\downarrow 0.5}$} & 7\textcolor{teal}{$^{\downarrow 39}$} & -- \\
& {\scriptsize $\pm$12.5}
& {\scriptsize $\pm$0.6} & {\scriptsize $\pm$0.0} &
& {\scriptsize $\pm$0.9} & {\scriptsize $\pm$0.0} &
& {\scriptsize $\pm$11.2}
& {\scriptsize $\pm$0.3} & {\scriptsize $\pm$2.4} &
& {\scriptsize $\pm$0.4} & {\scriptsize $\pm$1.8} & \\
\addlinespace
\rowcolor{lightgray} $g^{\text{car}}$ (ours)
& 203.2
& \textbf{0.1}\textcolor{purple_arrow}{$^{\uparrow 1.1}$} & \textbf{72}\textcolor{purple_arrow}{$^{\uparrow 60}$} & \textbf{8}
& \textbf{0.4}\textcolor{purple_arrow}{$^{\uparrow 1.4}$} & \textbf{61}\textcolor{purple_arrow}{$^{\uparrow 52}$} & \textbf{8}
& 205.6
& \textbf{0.0}\textcolor{purple_arrow}{$^{\uparrow 0.4}$} & \textbf{100}\textcolor{purple_arrow}{$^{\uparrow 42}$} & \textbf{8}
& \textbf{0.1}\textcolor{purple_arrow}{$^{\uparrow 0.7}$} & \textbf{94}\textcolor{purple_arrow}{$^{\uparrow 48}$} & \textbf{8} \\
\rowcolor{lightgray} & {\scriptsize $\pm$8.2}
& {\scriptsize $\pm$0.1} & {\scriptsize $\pm$8.2} &
& {\scriptsize $\pm$0.2} & {\scriptsize $\pm$9.6} &
& {\scriptsize $\pm$10.8}
& {\scriptsize $\pm$0.0} & {\scriptsize $\pm$0.0} &
& {\scriptsize $\pm$0.1} & {\scriptsize $\pm$2.4} & \\
\bottomrule
\end{tabular}

\vspace{1.5mm}
{\footnotesize
\raggedright
\textit{Note:} \textbf{Bold text} indicates the best performance. Rows with \colorbox{lightgray}{gray backgrounds} indicate methods that utilize our $g^{\text{car}}$ for conflict correction. \textcolor{purple_arrow}{Purple superscripts} show the performance change of $g^{\text{car}}$ over $g^\text{cov-G}$, and \textcolor{teal}{teal superscripts} show the change of PCGrad over $g^\text{cov-G}$, where $\uparrow$ denotes improvement and $\downarrow$ denotes degradation. For all metrics except Steps, we report the mean (top row) and standard deviation across 5 random seeds (bottom row).
}
\end{table}

%% file: tables/image_app.tex
\begin{table}[t]
\centering
\footnotesize
\caption{Comparison of methods on image quality metrics (LPIPS, CLIP-IQA, and ID), text-image alignment metrics (CLIP, BLIP-ITM, and VQAScore), and computational efficiency for text-guided face manipulation on CelebA-HQ. To demonstrate the imbalance issue in multi-objective optimization (i.e., optimizing for two prompts simultaneously), we report the text-image alignment metrics separately for the first prompt ($P_1$), the second prompt ($P_2$), and their average (Avg). A significant discrepancy between $P_1$ and $P_2$ indicates a severe optimization imbalance. We report results separately for \emph{composed text guidance} (i.e., \textit{sad + angry}, \textit{sad + happy}, \textit{sad + curly hair}).}
\label{tab:composed_prompt_metrics}
\definecolor{lightgray}{gray}{0.9}
\definecolor{teal}{rgb}{0.0, 0.5, 0.5}
\resizebox{\textwidth}{!}{
\begin{tabular}{l ccc ccc ccc ccc cc}
\toprule
& \multicolumn{3}{c}{Image Quality} & \multicolumn{9}{c}{Text-Image Alignment} & \multicolumn{2}{c}{Efficiency} \\
\cmidrule(lr){2-4} \cmidrule(lr){5-13} \cmidrule(lr){14-15}
& & & & \multicolumn{3}{c}{CLIP $\uparrow$} & \multicolumn{3}{c}{BLIP-ITM $\uparrow$} & \multicolumn{3}{c}{VQAScore $\uparrow$} & & \\
\cmidrule(lr){5-7} \cmidrule(lr){8-10} \cmidrule(lr){11-13}
Method & LPIPS $\downarrow$ & ID $\uparrow$ & CLIP-IQA $\uparrow$ & $P_1$ & $P_2$ & Avg & $P_1$ & $P_2$ & Avg & $P_1$ & $P_2$ & Avg & Train & Infer. \\
\midrule
FlowGrad & \textbf{0.203} & 0.677 & 0.495 & 0.277 & 0.272 & 0.274 & 0.405 & 0.246 & 0.326 & 0.768 & 0.424 & 0.596 & -- & 124.5 s \\
 & {\scriptsize $\pm$0.004} & {\scriptsize $\pm$0.008} & {\scriptsize $\pm$0.005} & {\scriptsize $\pm$0.002} & {\scriptsize $\pm$0.003} & {\scriptsize $\pm$0.002} & {\scriptsize $\pm$0.012} & {\scriptsize $\pm$0.009} & {\scriptsize $\pm$0.010} & {\scriptsize $\pm$0.015} & {\scriptsize $\pm$0.011} & {\scriptsize $\pm$0.012} & & {\scriptsize $\pm$5.8} \\
\addlinespace
GLASS-FKS & 0.313 & 0.329 & 0.586 & 0.255 & 0.252 & 0.253 & 0.127 & 0.037 & 0.082 & 0.393 & 0.229 & 0.311 & -- & 635.2 s \\
 & {\scriptsize $\pm$0.018} & {\scriptsize $\pm$0.025} & {\scriptsize $\pm$0.019} & {\scriptsize $\pm$0.012} & {\scriptsize $\pm$0.014} & {\scriptsize $\pm$0.013} & {\scriptsize $\pm$0.022} & {\scriptsize $\pm$0.018} & {\scriptsize $\pm$0.020} & {\scriptsize $\pm$0.031} & {\scriptsize $\pm$0.028} & {\scriptsize $\pm$0.029} & & {\scriptsize $\pm$58.4} \\
\midrule
$g^\text{cov-G}$ & 0.217 & 0.543 & 0.535 & 0.288 & 0.293 & 0.290 & 0.521 & 0.528 & 0.525 & 0.748 & 0.753 & 0.751 & -- & 11.08 s \\
 & {\scriptsize $\pm$0.005} & {\scriptsize $\pm$0.010} & {\scriptsize $\pm$0.006} & {\scriptsize $\pm$0.002} & {\scriptsize $\pm$0.002} & {\scriptsize $\pm$0.002} & {\scriptsize $\pm$0.014} & {\scriptsize $\pm$0.013} & {\scriptsize $\pm$0.014} & {\scriptsize $\pm$0.016} & {\scriptsize $\pm$0.015} & {\scriptsize $\pm$0.014} & & {\scriptsize $\pm$0.14} \\
\addlinespace
PCGrad
& 0.219\textcolor{teal}{$^{\downarrow 0.002}$}
& 0.602\textcolor{teal}{$^{\uparrow 0.059}$}
& 0.535\textcolor{teal}{$^{\uparrow 0.000}$}
& 0.285\textcolor{teal}{$^{\downarrow 0.003}$}
& 0.266\textcolor{teal}{$^{\downarrow 0.027}$}
& 0.276\textcolor{teal}{$^{\downarrow 0.014}$}
& \textbf{0.650}\textcolor{teal}{$^{\uparrow 0.129}$}
& 0.337\textcolor{teal}{$^{\downarrow 0.191}$}
& 0.444\textcolor{teal}{$^{\downarrow 0.081}$}
& \textbf{0.785}\textcolor{teal}{$^{\uparrow 0.037}$}
& 0.371\textcolor{teal}{$^{\downarrow 0.382}$}
& 0.578\textcolor{teal}{$^{\downarrow 0.173}$}
& -- & 13.45 s \\
 & {\scriptsize $\pm$0.004} & {\scriptsize $\pm$0.011} & {\scriptsize $\pm$0.005} & {\scriptsize $\pm$0.002} & {\scriptsize $\pm$0.003} & {\scriptsize $\pm$0.002} & {\scriptsize $\pm$0.018} & {\scriptsize $\pm$0.010} & {\scriptsize $\pm$0.013} & {\scriptsize $\pm$0.018} & {\scriptsize $\pm$0.012} & {\scriptsize $\pm$0.015} & & {\scriptsize $\pm$0.38} \\
\addlinespace
\rowcolor{lightgray} $g^{\text{car}}$ (ours)
& 0.226\textcolor{purple_arrow}{$^{\downarrow 0.009}$}
& \textbf{0.681}\textcolor{purple_arrow}{$^{\uparrow 0.138}$}
& \textbf{0.543}\textcolor{purple_arrow}{$^{\uparrow 0.008}$}
& \textbf{0.290}\textcolor{purple_arrow}{$^{\uparrow 0.002}$}
& \textbf{0.293}\textcolor{purple_arrow}{$^{\uparrow 0.000}$}
& \textbf{0.291}\textcolor{purple_arrow}{$^{\uparrow 0.001}$}
& 0.592\textcolor{purple_arrow}{$^{\uparrow 0.071}$}
& \textbf{0.602}\textcolor{purple_arrow}{$^{\uparrow 0.074}$}
& \textbf{0.597}\textcolor{purple_arrow}{$^{\uparrow 0.072}$}
& 0.751\textcolor{purple_arrow}{$^{\uparrow 0.003}$}
& \textbf{0.758}\textcolor{purple_arrow}{$^{\uparrow 0.005}$}
& \textbf{0.755}\textcolor{purple_arrow}{$^{\uparrow 0.004}$}
& 20.4 m & 11.26 s \\
\rowcolor{lightgray} & {\scriptsize $\pm$0.004} & {\scriptsize $\pm$0.010} & {\scriptsize $\pm$0.006} & {\scriptsize $\pm$0.002} & {\scriptsize $\pm$0.002} & {\scriptsize $\pm$0.002} & {\scriptsize $\pm$0.013} & {\scriptsize $\pm$0.014} & {\scriptsize $\pm$0.012} & {\scriptsize $\pm$0.015} & {\scriptsize $\pm$0.014} & {\scriptsize $\pm$0.013} & {\scriptsize $\pm$0.6} & {\scriptsize $\pm$0.12} \\
\bottomrule
\end{tabular}
}

\vspace{1.5mm}
{\scriptsize
\raggedright
\textit{Note:} \textbf{Bold text} indicates the best performance. Rows with \colorbox{lightgray}{gray backgrounds} indicate methods that use our $g^{\text{car}}$ for conflict correction. \textcolor{purple_arrow}{Purple superscripts} show the performance change of $g^{\text{car}}$ over $g^\text{cov-G}$, and \textcolor{teal}{teal superscripts} show the change of PCGrad over $g^\text{cov-G}$, where $\uparrow$ denotes improvement and $\downarrow$ denotes degradation. For all metrics, we report the mean (top row) and standard deviation (bottom row) across 5 random seeds.
}

\end{table}

%% file: reference.bib
@inproceedings{liu2023flowgrad,
  title = {{FlowGrad}: Controlling the Output of Generative ODEs with Gradients},
  author = {Xingchao Liu and
                Lemeng Wu and
                Shujian Zhang and
                Chengyue Gong and
                Wei Ping and
                Qiang Liu},
  booktitle = {IEEE/CVF Conference on Computer Vision and Pattern Recognition},
  year = {2023},
  url = {https://doi.org/10.1109/CVPR52729.2023.02331},
}

@inproceedings{radford2021clip,
  title = {Learning Transferable Visual Models From Natural Language Supervision},
  author = {Alec Radford and
                  Jong Wook Kim and
                  Chris Hallacy and
                  Aditya Ramesh and
                  Gabriel Goh and
                  Sandhini Agarwal and
                  Girish Sastry and
                  Amanda Askell and
                  Pamela Mishkin and
                  Jack Clark and
                  Gretchen Krueger and
                  Ilya Sutskever},
  booktitle = {International Conference on Machine Learning},
  year = {2021},
}

@inproceedings{lipman2023flow,
  title = {Flow Matching for Generative Modeling},
  author = {Yaron Lipman and
                  Ricky T. Q. Chen and
                  Heli Ben{-}Hamu and
                  Maximilian Nickel and
                  Matthew Le},
  booktitle = {International Conference on Learning Representations},
  year = {2023},
}

@inproceedings{feng2025guidance,
  title = {On the Guidance of Flow Matching},
  author = {Ruiqi Feng and
                  Chenglei Yu and
                  Wenhao Deng and
                  Peiyan Hu and
                  Tailin Wu},
  booktitle = {International Conference on Machine Learning},
  year = {2025},
}

@article{pokle2024training,
  title = {Training-free linear image inverses via flows},
  author = {Ashwini Pokle and
                  Matthew J. Muckley and
                  Ricky T. Q. Chen and
                  Brian Karrer},
  journal = {Transactions on Machine Learning Research},
  year = {2024},
  url = {https://openreview.net/forum?id=PLIt3a4yTm},
}

@article{song2021train,
  title = {How to Train Your Energy-Based Models},
  author = {Yang Song and
                  Diederik P. Kingma},
  journal = {arXiv preprint arXiv:2101.03288},
  year = {2021},
  url = {https://arxiv.org/abs/2101.03288},
}

@article{balcerak2025energy,
  title = {Energy Matching: Unifying Flow Matching and Energy-Based Models for
                  Generative Modeling},
  author = {Michal Balcerak and
                  Tamaz Amiranashvili and
                  Suprosanna Shit and
                  Antonio Terpin and
                  Sebastian Kaltenbach and
                  Petros Koumoutsakos and
                  Bjoern H. Menze},
  journal = {arXiv preprint arXiv:2504.10612},
  year = {2025},
  url = {https://doi.org/10.48550/arXiv.2504.10612},
}

@inproceedings{tong2024simulation,
  title = {Simulation-Free {Schr{\"{o}}dinger} Bridges via Score and Flow
                  Matching},
  author = {Alexander Tong and
                  Nikolay Malkin and
                  Kilian Fatras and
                  Lazar Atanackovic and
                  Yanlei Zhang and
                  Guillaume Huguet and
                  Guy Wolf and
                  Yoshua Bengio},
  booktitle = {International Conference on Artificial Intelligence and Statistics},
  year = {2024},
  url = {https://proceedings.mlr.press/v238/y-tong24a.html},
}

@article{tong2023improving,
  title = {Improving and generalizing flow-based generative models with minibatch
                  optimal transport},
  author = {Alexander Tong and
                  Kilian Fatras and
                  Nikolay Malkin and
                  Guillaume Huguet and
                  Yanlei Zhang and
                  Jarrid Rector{-}Brooks and
                  Guy Wolf and
                  Yoshua Bengio},
  journal = {Transactions on Machine Learning Research},
  year = {2024},
  url = {https://openreview.net/forum?id=CD9Snc73AW},
}

@inproceedings{onken2021ot,
  title = {OT-Flow: Fast and Accurate Continuous Normalizing Flows via Optimal
                  Transport},
  author = {Derek Onken and
                  Samy Wu Fung and
                  Xingjian Li and
                  Lars Ruthotto},
  booktitle = {AAAI Conference on Artificial Intelligence},
  year = {2021},
  url = {https://doi.org/10.1609/aaai.v35i10.17113},
}

@inproceedings{benhamu2024dflow,
  title = {D-Flow: Differentiating through Flows for Controlled Generation},
  author = {Heli Ben{-}Hamu and
                  Omri Puny and
                  Itai Gat and
                  Brian Karrer and
                  Uriel Singer and
                  Yaron Lipman},
  booktitle = {International Conference on Machine Learning},
  year = {2024},
}

@inproceedings{liu2023flow,
  title = {Flow Straight and Fast: Learning to Generate and Transfer Data with
                  Rectified Flow},
  author = {Xingchao Liu and
                  Chengyue Gong and
                  Qiang Liu},
  booktitle = {International Conference on Learning Representations},
  year = {2023},
}

@inproceedings{luo2024potential,
  title = {Potential Based Diffusion Motion Planning},
  author = {Yunhao Luo and
                  Chen Sun and
                  Joshua B. Tenenbaum and
                  Yilun Du},
  booktitle = {International Conference on Machine Learning},
  year = {2024},
}

@inproceedings{chung2023diffusion,
  title = {Diffusion Posterior Sampling for General Noisy Inverse Problems},
  author = {Hyungjin Chung and
                  Jeongsol Kim and
                  Michael Thompson McCann and
                  Marc Louis Klasky and
                  Jong Chul Ye},
  booktitle = {International Conference on Learning Representations},
  year = {2023},
}

@inproceedings{song2023loss,
  title = {Loss-Guided Diffusion Models for Plug-and-Play Controllable Generation},
  author = {Jiaming Song and
                  Qinsheng Zhang and
                  Hongxu Yin and
                  Morteza Mardani and
                  Ming{-}Yu Liu and
                  Jan Kautz and
                  Yongxin Chen and
                  Arash Vahdat},
  booktitle = {International Conference on Machine Learning},
  year = {2023},
}

@article{du2025dynaguide,
  title = {DynaGuide: Steering Diffusion Polices with Active Dynamic Guidance},
  author = {Maximilian Du and
                  Shuran Song},
  journal = {arXiv preprint arXiv:2506.13922},
  year = {2025},
  url = {https://doi.org/10.48550/arXiv.2506.13922},
}

@article{yu2023freedom,
title={{FreeDoM}: Training-Free Energy-Guided Conditional Diffusion Model},
author={Yu, Jiwen and Wang, Yinhuai and Zhao, Chen and Ghanem, Bernard and Zhang, Jian},
journal={Proceedings of the IEEE/CVF International Conference on Computer Vision (ICCV)},
year={2023}
}

@inproceedings{yu2020pcgrad,
  title = {Gradient Surgery for Multi-Task Learning},
  author = {Tianhe Yu and
                  Saurabh Kumar and
                  Abhishek Gupta and
                  Sergey Levine and
                  Karol Hausman and
                  Chelsea Finn},
  booktitle = {Advances in Neural Information Processing Systems},
  year = {2020},
}

@InProceedings{Patel_2025_ICCV,
    author    = {Patel, Maitreya and Wen, Song and Metaxas, Dimitris N. and Yang, Yezhou},
    title     = {{FlowChef}: Steering of Rectified Flow Models for Controlled Generations},
    booktitle = {Proceedings of the IEEE/CVF International Conference on Computer Vision (ICCV)},
    month     = {October},
    year      = {2025},
    pages     = {15308-15318}
}

@inproceedings{domingo2024adjoint,
  title = {Adjoint Matching: Fine-tuning Flow and Diffusion Generative Models
                  with Memoryless Stochastic Optimal Control},
  author = {Carles Domingo{-}Enrich and
                  Michal Drozdzal and
                  Brian Karrer and
                  Ricky T. Q. Chen},
  booktitle = {International Conference on Learning Representations},
  year = {2025},
}

@article{romer2025diffusion,
  title = {Diffusion Predictive Control with Constraints},
  author = {Ralf R{\"{o}}mer and
                  Alexander von Rohr and
                  Angela P. Schoellig},
  journal = {arXiv preprint arXiv:2412.09342},
  year = {2024},
  url = {https://doi.org/10.48550/arXiv.2412.09342},
}

@article{bouvier2025ddat,
  title = {{DDAT:} Diffusion Policies Enforcing Dynamically Admissible Robot
                  Trajectories},
  author = {Jean{-}Baptiste Bouvier and
                  Kanghyun Ryu and
                  Kartik Nagpal and
                  Qiayuan Liao and
                  Koushil Sreenath and
                  Negar Mehr},
  journal = {arXiv preprint arXiv:2502.15043},
  year = {2025},
  url = {https://doi.org/10.48550/arXiv.2502.15043},
}

@inproceedings{urain2023se3,
  title = {SE(3)-DiffusionFields: Learning smooth cost functions for joint grasp
                  and motion optimization through diffusion},
  author = {Julen Urain and
                  Niklas Funk and
                  Jan Peters and
                  Georgia Chalvatzaki},
  booktitle = {IEEE International Conference on Robotics and Automation},
  year = {2023},
  url = {https://doi.org/10.1109/ICRA48891.2023.10161569},
}

@inproceedings{williams2017mppi,
  title = {Information theoretic {MPC} for model-based reinforcement learning},
  author = {Grady Williams and
                  Nolan Wagener and
                  Brian Goldfain and
                  Paul Drews and
                  James M. Rehg and
                  Byron Boots and
                  Evangelos A. Theodorou},
  booktitle = {IEEE International Conference on Robotics and Automation},
  year = {2017},
  url = {https://doi.org/10.1109/ICRA.2017.7989202},
}

@article{du2024compositional,
  title = {Compositional Generative Modeling: {A} Single Model is Not All You
                  Need},
  author = {Yilun Du and
                  Leslie Pack Kaelbling},
  journal = {arXiv preprint arXiv:2402.01103},
  year = {2024},
  url = {https://doi.org/10.48550/arXiv.2402.01103},
}

@article{albergo2023stochastic,
  title = {Stochastic Interpolants: {A} Unifying Framework for Flows and Diffusions},
  author = {Michael S. Albergo and
                  Nicholas M. Boffi and
                  Eric Vanden{-}Eijnden},
  journal = {arXiv preprint arXiv:2303.08797},
  year = {2023},
  url = {https://doi.org/10.48550/arXiv.2303.08797},
}

@article{lipman2024flowmatchingguidecode,
  title = {Flow Matching Guide and Code},
  author = {Yaron Lipman and
                  Marton Havasi and
                  Peter Holderrieth and
                  Neta Shaul and
                  Matt Le and
                  Brian Karrer and
                  Ricky T. Q. Chen and
                  David Lopez{-}Paz and
                  Heli Ben{-}Hamu and
                  Itai Gat},
  journal = {arXiv preprint arXiv:2412.06264},
  year = {2024},
  url = {https://doi.org/10.48550/arXiv.2412.06264},
}

@inproceedings{chisari2024learning,
  title = {Learning Robotic Manipulation Policies from Point Clouds with Conditional
                  Flow Matching},
  author = {Eugenio Chisari and
                  Nick Heppert and
                  Max Argus and
                  Tim Welschehold and
                  Thomas Brox and
                  Abhinav Valada},
  booktitle = {Conference on Robot Learning},
  year = {2024},
  url = {https://proceedings.mlr.press/v270/chisari25a.html},
}

@inproceedings{gu2023maniskill2,
  title = {{ManiSkill2}: {A} Unified Benchmark for Generalizable Manipulation Skills},
  author = {Jiayuan Gu and
                  Fanbo Xiang and
                  Xuanlin Li and
                  Zhan Ling and
                  Xiqiang Liu and
                  Tongzhou Mu and
                  Yihe Tang and
                  Stone Tao and
                  Xinyue Wei and
                  Yunchao Yao and
                  Xiaodi Yuan and
                  Pengwei Xie and
                  Zhiao Huang and
                  Rui Chen and
                  Hao Su},
  booktitle = {International Conference on Learning Representations},
  year = {2023},
}

@book{villani2009optimal,
  title={Optimal transport: old and new},
  author={Villani, C{\'e}dric and others},
  volume={338},
  year={2009},
  publisher={Springer}
}

@inproceedings{singh2023divide,
      title={Divide, Evaluate, and Refine: Evaluating and Improving Text-to-Image Alignment with Iterative VQA Feedback},
      author={Singh, Jaskirat and Zheng, Liang},
      booktitle={Thirty-seventh Conference on Neural Information Processing Systems},
      year={2023}
    }

@inproceedings{li2022blip,
      title={{BLIP}: Bootstrapping Language-Image Pre-training for Unified Vision-Language Understanding and Generation},
      author = {Li, Junnan and Li, Dongxu and Xiong, Caiming and Hoi, Steven},
      year={2022},
      booktitle={Proceedings of the 39th International Conference on Machine Learning},
}

@inproceedings{vqascore,
  author    = {Lin, Zhiqiu and Pathak, Deepak and Li, Baiqi and Li, Jiayao and Xia, Xide and Neubig, Graham and Zhang, Pengchuan and Ramanan, Deva},
  title     = {Evaluating Text-to-Visual Generation with Image-to-Text Generation},
  booktitle = {European Conference on Computer Vision (ECCV)},
  year      = {2024},
  pages     = {366--384},
}

@inproceedings{wang2022exploring,
    author = {Wang, Jianyi and Chan, Kelvin CK and Loy, Chen Change},
    title = {Exploring {CLIP} for Assessing the Look and Feel of Images},
    booktitle = {AAAI},
    year = {2023}
}

@inproceedings{holderrieth2026glass,
  title     = {{GLASS} Flows: Transition Sampling for Alignment
               of Flow and Diffusion Models},
  author    = {Holderrieth, Peter and Singer, Uriel and Jaakkola,
               Tommi and Chen, Ricky T. Q. and Lipman, Yaron
               and Karrer, Brian},
  booktitle = {The Fourteenth International Conference on
               Learning Representations},
  year      = {2026}
}

@article{wang2025source,
  title   = {Source-Guided Flow Matching},
  author  = {Wang, Zifan and Harting, Alice and Barreau, Matthieu and
             Zavlanos, Michael M. and Johansson, Karl H.},
  journal = {arXiv preprint arXiv:2508.14807},
  year    = {2025}
}

@inproceedings{yu2024skillaware,
  title={Skill-aware Mutual Information Optimisation for Zero-shot Generalisation in Reinforcement Learning},
  author={Xuehui Yu and Mhairi Dunion and Xin Li and Stefano V Albrecht},
  booktitle={The Thirty-eighth Annual Conference on Neural Information Processing Systems},
  year={2024},
  url={https://openreview.net/forum?id=GtbwJ6mruI}
}

@inproceedings{dunion2023cmid,
   title={Conditional Mutual Information for Disentangled Representations in Reinforcement Learning},
   author={Mhairi Dunion and Trevor McInroe and Kevin Sebastian Luck and Josiah Hanna and Stefano V. Albrecht},
   booktitle={Conference on Neural Information Processing Systems},
   year={2023}
}

@inproceedings{song2023pseudoinverse,
  title     = {Pseudoinverse-Guided Diffusion Models for Inverse Problems},
  author    = {Song, Jiaming and Vahdat, Arash and Mardani, Morteza and
               Kautz, Jan},
  booktitle = {The Eleventh International Conference on Learning
               Representations},
  year      = {2023},
  url       = {https://openreview.net/forum?id=9_gsMA8MRKQ}
}

@inproceedings{wallace2024diffusion,
  title     = {Diffusion Model Alignment Using Direct Preference
               Optimization},
  author    = {Wallace, Bram and Dang, Meihua and Rafailov, Rafael and
               Zhou, Linqi and Lou, Aaron and Purushwalkam, Senthil and
               Ermon, Stefano and Xiong, Caiming and Joty, Shafiq and
               Naik, Nikhil},
  booktitle = {Proceedings of the IEEE/CVF Conference on Computer
               Vision and Pattern Recognition},
  pages     = {8228--8238},
  year      = {2024}
}

@article{eiras2021twostage,
   title = {A Two-Stage Optimization-based Motion Planner for Safe Urban Driving},
   author = {Francisco Eiras and Majd Hawasly and Stefano V. Albrecht and Subramanian Ramamoorthy},
   journal = {IEEE Transactions on Robotics},
   volume = {38},
   number = {2},
   pages = {822--834},
   year = {2022},
   doi = {10.1109/TRO.2021.3088009}
}

@article{liu2025flowgrpo,
  title   = {{Flow-GRPO}: Training Flow Matching Models via Online {RL}},
  author  = {Liu, Jie and Liu, Gongye and Liang, Jiajun and Li, Yangguang
             and Liu, Jiaheng and Wang, Xintao and Wan, Pengfei and
             Zhang, Di and Ouyang, Wanli},
  journal = {arXiv preprint arXiv:2505.05470},
  year    = {2025}
}

@inproceedings{liu2025value,
title={Value Gradient Guidance for Flow Matching Alignment},
author={Zhen Liu and Tim Z. Xiao and Carles Domingo-Enrich and Weiyang Liu and Dinghuai Zhang},
booktitle={The Thirty-ninth Annual Conference on Neural Information Processing Systems},
year={2025},
url={https://openreview.net/forum?id=6MmOy2Ji8V}
}

@inproceedings{lu2023contrastive,
  title = {Contrastive Energy Prediction for Exact Energy-Guided Diffusion Sampling in Offline Reinforcement Learning},
  author={Lu, Cheng and Chen, Huayu and Chen, Jianfei and Su, Hang and Li, Chongxuan and Zhu, Jun},
  booktitle = {International Conference on Machine Learning},
  year = {2023},
}

@inproceedings{song2021scorebased,
  title={Score-Based Generative Modeling through Stochastic Differential Equations},
  author={Yang Song and Jascha Sohl-Dickstein and Diederik P Kingma and Abhishek Kumar and Stefano Ermon and Ben Poole},
  booktitle={International Conference on Learning Representations},
  year={2021}
}
